\documentclass[12pt]{article}
\pdfoutput = 1


\usepackage{natbib}
\usepackage[utf8]{inputenc} 
\usepackage[T1]{fontenc}    
\usepackage[unicode=true,psdextra]{hyperref}
\usepackage{url}            
\usepackage{array, booktabs, multirow}       
\usepackage{amsfonts}       
\usepackage{nicefrac}       
\usepackage{microtype}      
\usepackage{xcolor}         

\usepackage{amsmath,bm, amsthm, amssymb}
\usepackage{graphicx,subfig}
\usepackage{algorithm}
\usepackage{algorithmic}
\graphicspath{{figures/}}
\usepackage{subcaption}
\usepackage{arydshln}

\allowdisplaybreaks

\usepackage[margin=2.5cm, bottom=2.5cm]{geometry}

\usepackage{tikz}
\usetikzlibrary{bayesnet}
\usetikzlibrary{arrows}
\usepackage{color}
\usepackage{caption}
\usetikzlibrary{backgrounds}
\usetikzlibrary{shapes.geometric}
\tikzstyle{arrow} = [thick,->,>=stealth]
\usetikzlibrary{shapes.geometric, arrows}

\DeclareCaptionStyle{italic}[justification=centering]{labelfont={bf},textfont={it},labelsep=colon}
\captionsetup[figure]{style=italic,format=hang,singlelinecheck=true}



\usepackage{mathtools}
\usepackage{todonotes}
\usepackage{array}
\usepackage{thmtools,thm-restate}
\usepackage{enumitem}
\usepackage{tikz-qtree}

\newcolumntype{R}[1]{>{\raggedleft\let\newline\\\arraybackslash\hspace{0pt}}m{#1}}
\newcolumntype{L}[1]{>{\raggedright\let\newline\\\arraybackslash\hspace{0pt}}m{#1}}
\newcolumntype{C}[1]{>{\centering\let\newline\\\arraybackslash\hspace{0pt}}m{#1}}


\renewcommand{\deg}{\text{deg }}

\newcommand{\CP}{\text{CP}}
\newcommand{\Addv}{\text{Add}_v}
\newcommand{\Adde}{\text{Add}_e}
\newcommand{\Delv}{\text{Del}_v}
\newcommand{\Dele}{\text{Del}_e}
\newcommand{\relocatee}{\text{Relocate}_e}
\newcommand{\mergev}{\text{Merge}_v}
\newcommand{\splitv}{\text{Split}_v}


\DeclareMathOperator*{\argmin}{arg\,min}

\newtheorem{theorem}{Theorem}
\newtheorem{lemma}{Lemma}

\newtheorem{proposition}{Proposition}
\newtheorem{definition}{Definition}

\newcommand{\cheng}[1]{}
\newcommand{\SK}[1]{}

\tikzstyle{boxprocess} = [rectangle, rounded corners, text width=4cm,  minimum height=1cm,text centered, draw=black]

\newif\ifshortversion
\shortversiontrue  

\tikzstyle{adjmat} = [rectangle, minimum width=3cm, text centered, draw=black, fill=green!30]
\tikzstyle{adjmat2} = [rectangle, minimum width=2cm, text centered, draw=black, fill=green!30]
\tikzstyle{adjmatsmall} = [rectangle, minimum width=2.5cm, text centered, draw=black, fill=green!30]
\tikzstyle{arrow} = [thick,->,>=stealth]
\tikzstyle{doublearrow} = [thick,<->,>=stealth]
\tikzstyle{thmbox} = [rectangle, minimum width=3cm, text centered, draw=black]

\title{A pseudo-inverse of a line graph}
\author{Sevvandi Kandanaarachchi, Philip Kilby, Cheng Soon Ong }
\date{\today}

\begin{document}

\maketitle

\begin{abstract}
    Line graphs are an alternative representation of graphs where each vertex of the original (root) graph becomes an edge.
However not all graphs have a corresponding root graph, hence the transformation  from graphs to line graphs is not invertible.
We investigate the case when there is a small perturbation in the space of line graphs,
and try to recover the corresponding root graph, essentially defining the inverse of the line graph operation.
We propose a linear integer program that edits the smallest number of edges in the line graph,
that allow a root graph to be found.
We use the spectral norm to theoretically prove that such a pseudo-inverse operation is well behaved.
Illustrative empirical experiments on Erdős-Rényi graphs show that our theoretical results work in practice.
\end{abstract}

\section{Introduction}\label{sec:intro}

Graph perturbations are used to test robustness of algorithms. The expectation is that for small graph perturbations algorithm output should not  change drastically. While graph perturbations are extensively studied in many contexts, they are underexplored for line graphs, where a line graph is an alternative representation of a graph obtained by mapping edges to vertices.  But line graphs are increasingly used in many graph learning tasks including link prediction \citep{cai2021line}, expressive GNNs \citep{yang2024theoretical} and community detection \citep{chen2019supervised}, and in other scientific disciplines \citep{ruff2024connectivity, min2023partition, halldorsson2013estimating}. 
The reason that line graph perturbations are not commonly used is because the perturbed graph may not be a line graph. We introduce a pseudo-inverse of a line graph, which generalises the notion of the inverse line graph extending it to non-line graphs. The proposed pseudo-inverse is computed by minimally modifying the perturbed line graph so that it results in a line graph.

Given a graph $G$, its line graph $L(G)$ is obtained by mapping edges of $G$ to vertices of $L(G)$ and connecting vertices in $L(G)$ if the corresponding edges share a vertex (see Figure \ref{fig:linegraph1}). Suppose we perturb the line graph by adding an edge to it. The key point is that the resulting graph may not be a line graph. This is because there are nine line-forbidden graphs \citep{beineke1970characterizations}, which, if present in the perturbed graph will break the line graph. In this sense, line graphs are very fragile. This makes finding valid line graph perturbations a difficult task. Our contribution of a pseudo-inverse is a step forward in this direction, because given a perturbed graph, by finding a pseudo-inverse line graph, we find a ``close'' graph $\hat{G}$ in the original graph space, which can then be used find a valid perturbed graph by computing $L(\hat{G})$. Furthermore, a pseudo-inverse $\hat{G}$ is useful in applications such as haplotype phasing to estimate the ancestor population size \citep{labbe2021finding} and has some links to the cluster deletion problem \citep{ambrosio2025exact}. 

\begin{figure}[t]
\centering

\begin{tikzpicture}[node distance=1.5cm]
\node (G) [adjmat] {$G$};
\node (H) [adjmat, right of=G, xshift=3cm] {$H = L(G)$};
\node (Htilde) [adjmat, below of=H] {$\widetilde{H} = H+\delta$};
\node (Hhat) [adjmat, below of=Htilde] {$\hat{H} = L(L^\dagger(\widetilde{H}))$};
\node (Ghat) [adjmat, left of=Hhat, xshift=-3cm] {$\hat{G} = L^\dagger(\widetilde{H})$};

\draw [arrow] (G) -- (H);
\draw [arrow] (H) -- (Htilde);
\draw [arrow] (Htilde.west) to [bend right=45] (Ghat.north);
\draw [doublearrow] (Ghat) -- (Hhat);
\end{tikzpicture}

\caption{Setting and notation. Given a graph $G$, we have the corresponding line graph $H:=L(G)$. $\widetilde{H}$ is a distorted version of $H$, which may not be a line graph. $\hat{H}$ is a closest line graph to $\widetilde{H}$, and $\hat{G}$ is a pseudo-inverse of $\widetilde{H}$.}
\label{fig:pseudoinverse1}
\end{figure}
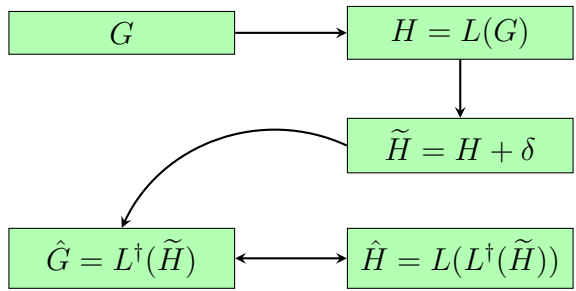

Traditionally, the inverse line graph is called the root graph. However, we use the term inverse line graph instead of root, because we refer to a pseudo-inverse often and the difference between a pseudo-inverse and the inverse is clearer than the difference between a pseudo-inverse and the root. Our contributions can be summarized as follows: 
\begin{enumerate}
    \item We propose a pseudo-inverse of a line graph generalising the inverse line graph to non-line graphs.
    \item Using the spectral radius of the graph adjacency matrix as the norm, we show that for single edge perturbations such a pseudo-inverse is well behaved and bounded.
    \item We propose a linear integer program that finds such a pseudo-inverse, by minimizing edge additions and deletions. 
    \item We illustrate properties of our pseudo-inverse using random graph models and use it as a parent population  estimator for  genotype data.
\end{enumerate}

We provide proof sketches for key theorems in the main text, while all formal proofs are presented in the appendix.


\section{Background and Preliminaries}\label{sec:notation}

Let $G = (V,E)$ denote a graph with vertices $V$ and edges $E$. If $G$ has at least one edge, then its line graph is the graph whose vertices are the edges of $G$, with two of these vertices being adjacent if the corresponding edges share a vertex in $G$ \citep{beineke2021line}. Figure \ref{fig:linegraph1} shows an example of a graph and its line graph.  The edges in the graph on the left are mapped to the vertices in the line graph (on the right) as can be seen from the edge and vertex labels.  
 
\begin{figure}[h]
    \centering
    \includegraphics[width = 0.8\linewidth, trim={0cm 2cm 0 2cm},clip]{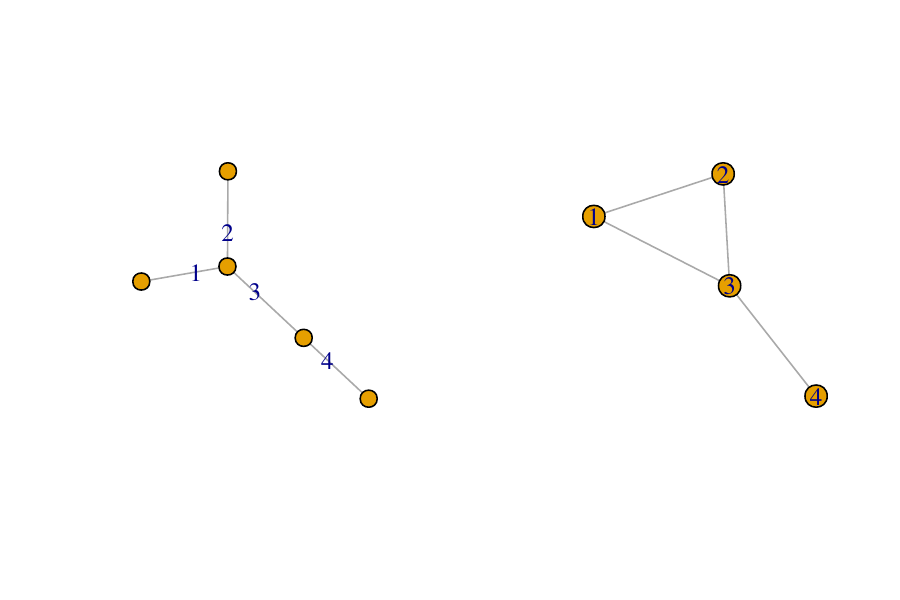}
    \caption{A graph on the left and its line graph on the right. }
    \label{fig:linegraph1}
\end{figure}

We denote the line graph operation by $L$, i.e., for a graph $G$ we denote its line graph by $H:= L(G)$. Then, the inverse line graph is called the root of $H$

\begin{definition}\label{def:linegraphroot}
    If $G$ is a graph whose line graph is $H$, that is, $L(G) = H$, then $G$ is called the \textbf{root} or the \textbf{inverse line graph} of $H$. 
\end{definition}
\cite{whitney1932congruent} showed that the structure of a graph can be recovered from its line graph with one exception: if the line graph $H$ is $K_3$, a triangle, then the root of $H$ can be either  $K_{1,3}$, a star or $K_3$ a triangle. This follows from the following theorem as stated in \cite{harary2018graph}:

\begin{theorem}[Whitney (1932), Harary (1969)]\label{thm:IsomorphicLineGraphs}
    Let $G$ and $G'$ be connected graphs with isomorphic line graphs. Then $G$ and $G'$ are isomorphic unless one is $K_3$ and the other is $K_{1,3}$.
\end{theorem}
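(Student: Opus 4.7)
The plan is to reconstruct $G$ from $L(G)$ via a canonical partition of the edges of $L(G)$ into cliques, known as a Krausz partition. Since any isomorphism $L(G) \to L(G')$ preserves intrinsic combinatorial structure, identifying such a canonical partition forces $G$ and $G'$ to be isomorphic; the exceptional case in the theorem will turn out to correspond precisely to the unique configuration in which two distinct valid partitions exist.

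First I would associate to each vertex $v \in V(G)$ the set $C_v \subseteq V(L(G))$ consisting of all edges of $G$ incident to $v$. Each $C_v$ induces a clique in $L(G)$, and because $G$ is simple every edge of $L(G)$ lies in exactly one $C_v$, namely the one corresponding to the unique shared endpoint of the two adjacent edges. Hence $\{C_v : v \in V(G)\}$ partitions $E(L(G))$ into cliques with every vertex of $L(G)$ covered exactly twice. Recovering $G$ up to isomorphism then reduces to recovering this partition intrinsically from $L(G)$, since its cliques can be taken as the vertices of $G$ and the vertices of $L(G)$ as the edges joining their two containing cliques.

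The second step is a local classification of triangles in $L(G)$. A triangle is either a ``star triangle,'' coming from three edges of $G$ meeting at a common vertex (and therefore lying entirely inside one $C_v$), or a ``triangle triangle,'' coming from three edges of $G$ forming a $K_3$ subgraph (and therefore split across three different $C_v$). The two kinds are distinguishable from $L(G)$ alone by inspecting external neighbours: around a star triangle, every outside vertex adjacent to one of the three is adjacent to either exactly one or all three; around a triangle triangle, every outside vertex adjacent to one of the three is adjacent to at most two. This dichotomy tells us which triangles must be merged into a common clique and which must be split, and hence determines the partition.

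Finally I would assemble the Krausz partition greedily from the maximal cliques identified by the triangle classification and verify that every vertex of $L(G)$ lies in exactly two of them. The main obstacle is establishing uniqueness: one must show that any two valid Krausz partitions coincide, except when $L(G) = K_3$. In that single exceptional case both the partition into three singleton edges (yielding root $K_3$) and the partition into one triangular clique (yielding root $K_{1,3}$) satisfy the covering condition, and no local triangle test can distinguish them because the ``star'' and ``triangle triangle'' classifications collapse. This gives precisely the stated exception, and uniqueness in every other case yields the isomorphism $G \cong G'$.
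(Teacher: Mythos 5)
The paper does not prove this statement; it is imported verbatim from Whitney (1932) as presented in Harary (1969), so there is no internal proof to compare against. Your Krausz-partition strategy is a standard and legitimate route to Whitney's theorem, but as written it contains two genuine gaps.

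First, the local triangle test is incorrect. For a star triangle arising from edges $va$, $vb$, $vc$ of $G$, the external vertex of $L(G)$ corresponding to an edge $ab$ of $G$ is adjacent to exactly two of the three (namely $va$ and $vb$), contradicting your claim that external neighbours of a star triangle see ``exactly one or all three.'' The usable dichotomy is the odd/even triangle criterion of van Rooij and Wilf (quoted in the paper's Theorem~\ref{thm:linegraphthm}): a triangle is odd if some vertex is adjacent to an odd number of its vertices; triangle-triangles are always even, whereas star triangles are odd only when a suitable witness vertex exists, which already signals that the test can fail on small graphs.

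Second, and more seriously, the uniqueness claim in your final step is false as stated. Besides $K_3$, the graph $L(K_4) = K_{2,2,2}$ admits two distinct Krausz partitions (the four star cliques and the four triangle cliques of $K_4$ each partition its edge set), and $L$ of the paw (a triangle with a pendant edge), which is the diamond, likewise admits two. In these cases the competing partitions happen to yield isomorphic roots, so Whitney's conclusion survives, but your argument --- ``uniqueness in every other case yields the isomorphism'' --- does not establish this. You must either prove uniqueness only outside a short list of small exceptional graphs and verify those exceptions by hand (essentially what Harary's proof does, by treating all connected graphs on at most four vertices separately), or show directly that any two Krausz partitions of a connected line graph other than $K_3$ produce isomorphic roots. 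Without one of these repairs the exceptional-case analysis is incomplete.
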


By creating edges corresponding to vertices in line graph $H$ and connecting them by merging the vertices if there is an edge between the vertices in $H$ we can obtain the the graph $G$, such that $H = L(G)$.   
Thus, if $H$ is a line graph that it is not $K_3$, then the inverse line graph $L^{-1}(H)$ exists.  

\ifshortversion
\cite{beineke1970characterizations} characterized the space of line graphs in terms of nine excluded graphs.
\begin{theorem}\citep{beineke1970characterizations}\label{thm:linegraphthm}
Let $H= L(G)$ be a line graph. Then none of the nine graphs in Figure \ref{fig:ForbiddenGraphs} is an induced subgraph of $H$. 
\end{theorem}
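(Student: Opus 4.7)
The plan is to prove the contrapositive: if $H$ contains one of the nine forbidden graphs $F_1,\ldots,F_9$ as an induced subgraph, then $H$ is not a line graph. The first step is a useful reduction. If $H=L(G)$ and $U\subseteq V(H)=E(G)$, let $G[U]$ denote the subgraph of $G$ whose edge set is $U$ (with endpoints restricted to the endpoints of $U$). Then two vertices of $U$ are adjacent in the induced subgraph $H[U]$ iff the corresponding edges share a vertex in $G$, iff they share a vertex in $G[U]$, iff they are adjacent in $L(G[U])$. Hence $H[U]=L(G[U])$, so every induced subgraph of a line graph is itself a line graph. It therefore suffices to check, for each $i$, that $F_i$ is not a line graph.

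The main tool for this check is the Krausz-style local structure of line graphs. If $H=L(G)$, then for each vertex $v\in V(G)$ of degree at least two, the edges of $G$ incident to $v$ form a clique $C_v$ in $H$. The collection $\{C_v : \deg(v)\geq 2\}$ partitions the edge set of $H$, and every vertex of $H$, which corresponds to an edge $\{u,v\}$ of $G$, lies in at most two of these cliques, namely $C_u$ and $C_v$. To show $F_i$ is not a line graph, it is therefore enough to argue that its edges admit no partition into cliques such that every vertex is in at most two parts.

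For the claw $K_{1,3}$ this is immediate: the three edges at the central vertex cannot share any clique in pairs, because the three leaves are pairwise non-adjacent, so a valid partition would require three cliques through the centre, violating the two-per-vertex bound. For each of the remaining eight forbidden graphs in Figure \ref{fig:ForbiddenGraphs}, I would carry out an analogous local argument: typically one isolates either a vertex $v$ whose incident edges are forced into more than two cliques by the non-adjacencies among its neighbours, or a small induced subconfiguration (e.g.\ an induced $C_4$ or a diamond with a pendant attached at one tip) in which every candidate clique partition produces an overlap exceeding two at some vertex.

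The main obstacle is that this is fundamentally a nine-way case analysis, and while each individual case is elementary, writing the arguments out in full is tedious and error-prone without drawings. I would organise the cases by grouping graphs that share an obstructive substructure, so that one local argument handles several at once, and in each case explicitly name the vertex or edge whose Krausz-partition constraint fails. An alternative, more uniform approach is Beineke's original proof via odd-triangle and even-triangle obstructions, but the direct Krausz check outlined above is the most transparent route given the machinery already introduced.
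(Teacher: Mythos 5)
The paper offers no proof of this statement: Theorem~\ref{thm:linegraphthm} is quoted verbatim from \cite{beineke1970characterizations} as background, so there is no in-paper argument to compare yours against. Your overall strategy is the standard one for this (the ``easy'') direction of Beineke's characterization, and the two pieces you actually carry out are correct: the observation that $H[U]=L(G[U])$, so that the class of line graphs is closed under induced subgraphs and it suffices to show each $F_i$ is not a line graph; and the Krausz-type necessary condition that the cliques $C_v$ (edges of $G$ incident to $v$) partition $E(H)$ with every vertex of $H$ in at most two parts, which immediately kills $K_{1,3}$.

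The gap is that eight of the nine cases are asserted rather than proved. This is not merely a matter of tedium: the remaining eight forbidden graphs are all claw-free (the nine form a minimal family, so none contains $K_{1,3}$ as an induced subgraph), which means the one argument you do give --- find a vertex whose incident edges are forced into three distinct cliques by pairwise non-adjacent neighbours --- cannot be reused verbatim for any of them. For each of $F_2,\dots,F_9$ you must actually enumerate the possible clique edge-partitions (or exhibit a specific vertex/edge at which every candidate partition overflows the two-clique bound), and that enumeration is precisely the content of the theorem; ``an analogous local argument'' is a placeholder, not a proof. Until those eight verifications are written out (or the claim is discharged by citation, as the paper does), the proposal establishes only that $K_{1,3}$ is not a line graph.
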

\else
\cite{beineke1970characterizations} characterized the space of line graphs in terms of nine excluded graphs. Combining results from \cite{krausz1943demonstration} and \cite{van1965interchange},  they state the following theorem in \cite{beineke1970characterizations}. 

\begin{theorem}\cite{beineke1970characterizations}\label{thm:linegraphthm}
The following statements are equivalent for a graph $G$:
\begin{enumerate}
    \item $G$ is the line graph of some graph.
    \item The edges of $G$ can be partitioned in to complete subgraphs in such a away that no vertex belongs to more than two of the subgraphs \citep{krausz1943demonstration}.
    \item The graph $K_{1,3}$ is not an induced subgraph of $G$, and if $uvw$ and $vwx$ are both odd triangles, then $u$ and $x$ are adjacent vertices \citep{van1965interchange}.
    \item None of the nine graphs in Figure \ref{fig:ForbiddenGraphs} is an induced subgraph of $G$. 
\end{enumerate}
\end{theorem}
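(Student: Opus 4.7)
The plan is to establish the four equivalences via the cyclic chain $(1)\Rightarrow(2)\Rightarrow(3)\Rightarrow(4)\Rightarrow(1)$. The first three implications are relatively short structural arguments, while the last implication contains the genuine combinatorial content of Beineke's theorem and will be the main obstacle.

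For $(1)\Rightarrow(2)$, if $G=L(H)$ then for every vertex $h\in V(H)$ the set of $H$-edges incident to $h$ forms a clique in $L(H)$. These cliques cover every edge of $L(H)$ (two $H$-edges are adjacent in $L(H)$ iff they share some vertex $h$, which supplies the clique), and every vertex of $L(H)$ corresponds to an $H$-edge with exactly two endpoints, so it lies in at most two of the cliques. For the converse $(2)\Rightarrow(1)$, given a Krausz partition $\{Q_1,\dots,Q_m\}$ of $E(G)$, build $H$ whose vertices are the $Q_i$ together with one extra pendant vertex for each $G$-vertex lying in only one $Q_i$; declare the edges of $H$ to be the vertices of $G$, with $v\in V(G)$ joining the (at most two) cliques containing it. One checks directly that $L(H)\cong G$.

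For $(2)\Rightarrow(3)$, the absence of induced $K_{1,3}$ is immediate: the center of such a claw would have to lie in three distinct cliques of the partition. The odd-triangle condition requires a little bookkeeping about whether the triangles $uvw$ and $vwx$ lie inside a single partition clique or span two of them; in either case the parity hypothesis forces $u$ and $x$ into a common clique, hence adjacent. For $(3)\Rightarrow(4)$, one checks by hand that each of the nine Beineke graphs in Figure~\ref{fig:ForbiddenGraphs} either has an induced $K_{1,3}$ at some vertex of degree $\geq 3$ whose neighborhood is independent, or contains two odd triangles sharing an edge whose remaining vertices are non-adjacent; this is a finite inspection.

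The hard direction is $(4)\Rightarrow(1)$, which I would argue by contrapositive: assuming $G$ is not a line graph, exhibit one of the nine forbidden induced subgraphs. The standard route is to consider a minimal counterexample (or more cleanly, a vertex $v$ whose neighborhood $N(v)$ cannot be partitioned into at most two cliques compatible with a Krausz decomposition). Since $K_{1,3}\not\subseteq G$ induced, $N(v)$ is the union of at most two cliques; the failure of an extension to a global clique partition produces a small obstruction on $\{v\}\cup N(v)$ plus one or two second-neighborhood vertices, and one shows by a case analysis on the adjacency pattern between these vertices that the resulting induced subgraph must be one of the nine forbidden graphs. This enumeration is the combinatorial heart of Beineke's argument and is where almost all the work lies; in a self-contained write-up I would either reproduce this case analysis in a long appendix or defer to \cite{beineke1970characterizations} and \cite{harary2018graph}.
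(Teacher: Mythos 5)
This is a classical characterization theorem that the paper itself does not prove: it is imported verbatim from \cite{beineke1970characterizations}, which in turn assembles the equivalences from \cite{krausz1943demonstration} and \cite{van1965interchange}. So there is no in-paper argument to compare against; the only question is whether your proposal stands on its own as a proof, and it does not quite. Your cyclic chain $(1)\Rightarrow(2)\Rightarrow(3)\Rightarrow(4)\Rightarrow(1)$ is the standard architecture, and the first, second and fourth links are handled adequately: $(1)\Leftrightarrow(2)$ via the star-cliques of the root graph and the reverse Krausz construction is correct (modulo the small detail that an isolated vertex of $G$, lying in no clique of the partition, needs a whole $K_2$ component in $H$, not a single pendant vertex), and $(3)\Rightarrow(4)$ is a legitimate finite inspection of the nine graphs. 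But two links are asserted rather than proved. In $(2)\Rightarrow(3)$ the ``little bookkeeping'' for the odd-triangle condition is actually the key lemma of van Rooij--Wilf: one must show that an \emph{odd} triangle cannot straddle two cliques of the Krausz partition, hence lies inside a single clique; since the shared edge $vw$ belongs to exactly one clique, both odd triangles $uvw$ and $vwx$ sit in that clique and $u,x$ are adjacent. You gesture at this but do not supply the parity argument that rules out a straddling odd triangle.

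More seriously, $(4)\Rightarrow(1)$ --- which you correctly identify as the entire content of Beineke's theorem --- is only described, not carried out: ``the failure of an extension \ldots produces a small obstruction \ldots and one shows by a case analysis'' is a statement of intent, and you explicitly say you would defer the enumeration to \cite{beineke1970characterizations}. That is a reasonable editorial choice (and is exactly what the paper does by citing the result), but it means the proposal is a proof outline with the load-bearing case analysis missing, not a proof. If the goal is a self-contained argument, the case analysis on $\{v\}\cup N(v)$ plus second-neighbourhood vertices must actually be written out; otherwise the honest framing is that you are reducing the theorem to the three easy implications plus a citation for the fourth.
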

\fi


\begin{figure}[t]
    \centering
    \includegraphics[width=0.8\linewidth]{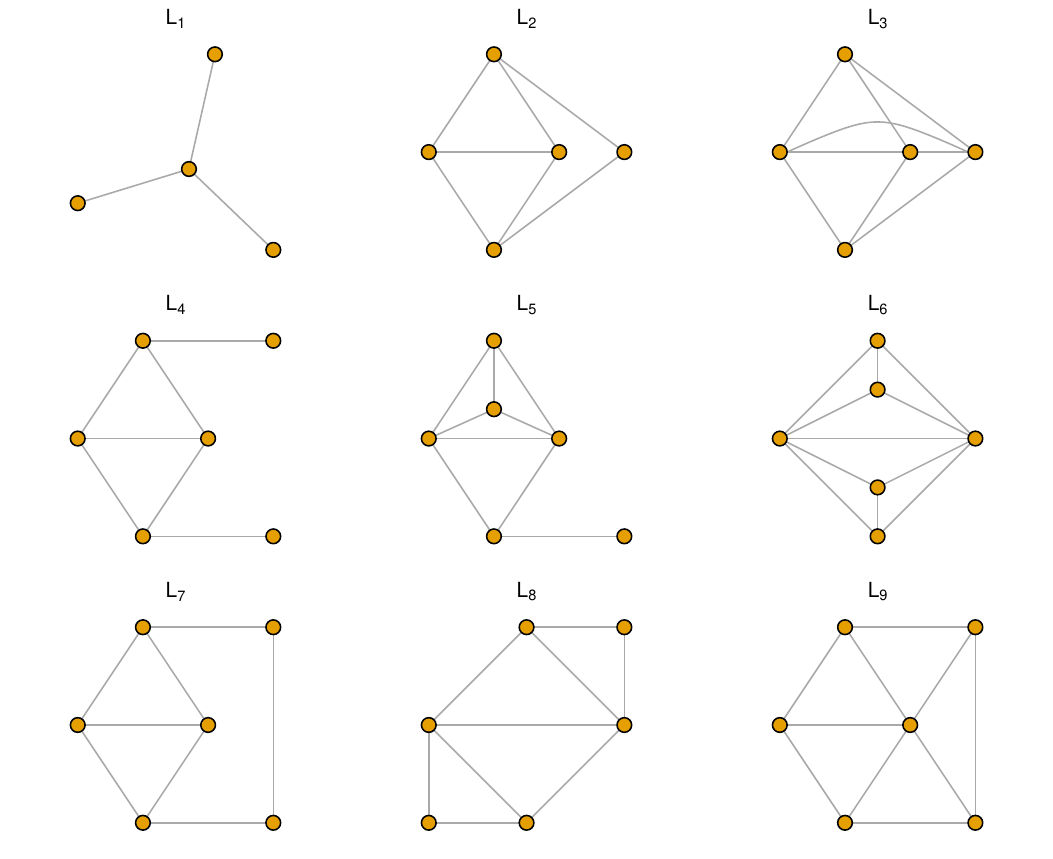}
    \caption{The nine line-forbidden  graphs as illustrated in \cite{beineke2021line}}
    \label{fig:ForbiddenGraphs}
\end{figure}

\ifshortversion
There are several algorithms to find the root of a line graph \citep{ROUSSOPOULOS1973108,Lehot74,Degiorgi95,simic1990algorithm,NAOR1990132,liu2015iligra}. There is also work done on the roots of generalised line graphs \citep{simic1990algorithm}. 

\else
\subsection{Existing methods for finding the root of a line graph}
\cheng{Should we move the review of computational methods to the section containing the integer program?}
There are several algorithms to find the root of a line graph \citep{ROUSSOPOULOS1973108,Lehot74,Degiorgi95,simic1990algorithm,liu2015iligra,NAOR1990132}. As it is computationally inefficient to check if a given graph is a line graph and then find its root, most algorithms start by finding the inverse according to certain rules, and then if there is a block that stops them from proceeding, they conclude the given graph is not a line graph. They show that for line graphs these blocks do not occur. These algorithms can be used to check if a given graph is a line graph as well as find the root.

The research on inverse line graph focussed on improving the algorithms to make them more efficient. \cite{ROUSSOPOULOS1973108} presented an algorithm that finds the root of a line graph in linear time $O(\max\{m,n\})$ where $m$ and $n$ denote the number of edges and vertices of the line graph $H$. They use Krausz's result \citep{krausz1943demonstration} that if a graph $H$ is a line graph, then the edges can be partitioned into complete subgraphs in such a way that no vertex lies in more than two subgraphs. Around the same time \cite{Lehot74} presented an algorithm to detect a line graph and find its root in $m + O(n)$ time. This was further improved by \cite{NAOR1990132} to $O(\log n)$ using $O(m)$ processors in parallel. They used a divide and conquer approach and recursively split the vertices to two sets such that each set induced a connected graph. Once the graphs were small enough, they computed the roots and merged the roots. A dynamic version of inverse line graphs was explored by \cite{Degiorgi95}. They focussed on incremental recognition of line graphs and provided a check to find if an addition or deletion of a node or an edge preserved the line graph property.  \cite{liu2015iligra} provided another algorithm to find inverse line graphs called \textit{ILIGRA}, which runs in $O(m)$ time. ILIGRA can start on an arbitrary node of a given graph $H$, which is an advantage over some other methods. 

There is also work done on the roots of generalised line graphs. Generalised line graphs \citep{cvetkovic1981generalized} join line graphs in a specific way with cocktail party graphs. The cocktail party graph $\CP(n)$, is the regular graph on $2n$ vertices of degree $2n - 2$. Let $\bm{a} = (a_1, \ldots, a_n)$ denote an $n$-tuple of non-negative integers and suppose a graph $G$ has labelled vertices. Then a generalised line graph is obtained first by constructing $L(G)$ and $ \CP(a_1), \ldots, \CP(a_n)$. The joining of $L(G)$ with $\CP(a_i)$ is done in a specific way.  For a vertex $v$ in $L(G)$ suppose the corresponding edges  in $G$ are $i$ and $j$. Then $v \in L(G)$ is connected to all vertices in $\CP(a_i)$ and $\CP(a_j)$. The resulting graph is a generalised line graph denoted by $L(G; \bm{a})$. \cite{simic1990algorithm} proposed an algorithm to find the root of a generalised line graph. 
\fi

Our interest is somewhat different. We are interested in line graph perturbations. We use these definitions in the following sections. 



\begin{definition}\label{def:degreenumber}
    Let $G = (V, E)$ be a graph with the vertex set $V$ and the edge set $E$. Let $|V(G)|$ and $|E(G)|$ denote the number of vertices and edges in $G$. Furthermore, let $Z_k(G)$ denote the number of vertices in $G$ with degree $k$. 
\end{definition}

Definitions \ref{def:AddingDeletingEdgesVertices}, \ref{def:RelocateEdge}, \ref{def:MergeVertex} and \ref{def:SplitVertex} define graph edit operations that describe ways a graph $G$ can be modified. We note these are not meant to uniquely identify a graph. For example, we can say a graph $G$ is modified by merging vertices (Definition  \ref{def:MergeVertex}). But the notation does not indicate which vertices merged.

\begin{definition}\label{def:AddingDeletingEdgesVertices}\textbf{(Primary Operations)}
    Let $G$ be a graph. We denote the operations of adding a vertex to $G$, by $\Addv(G)$, adding an edge to $G$ by $\Adde(G)$, deleting an edge from $G$ by $\Dele(G)$ and deleting a vertex from $G$ by $\Delv(G)$. We only use the $\Delv(G)$ operation on isolated vertices, i.e, if we want to remove a vertex with  incident edges, then we perform $\Dele(G)$ operations first before proceeding with $\Delv(G)$. 
\end{definition}
Suppose we perform the operation $\Adde(G)$ on $G$ and obtain $G'$. We denote this as $G' = \Adde(G)$ or equivalently $G = \Dele(G')$.

\begin{definition}\label{def:RelocateEdge}\textbf{(Edge Relocation)} Let $G$ be a graph. Suppose an edge relocates from vertices $a$ and $b$ to vertices $u$ and $v$. We denote this by $\relocatee(G_1)$ where $\relocatee(G_1) = \Adde(G_1) + \Dele(G_1)$ where the edge is not uniquely identified by the notation. 
\end{definition}

\begin{definition}\label{def:MergeVertex}\textbf{(Vertex Merge)} Let $G$ be a graph. Suppose two degree-1 vertices merge and become a degree-2 vertex. Suppose the two degree-1 vertices are $a$ and $b$ and $b$ is connected to $c$. Then, merging $a$ and $b$ can be seen as adding the edge $ac$, followed by deleting the edge $bc$ and finally deleting vertex $b$.  We denote vertex merging (for two degree-1 vertices) by $\mergev(G_1)$ where $\mergev(G_1) = \Adde(G_1) + \Dele(G_1) + \Delv(G_1)$, where the edges and vertices are not are not uniquely identified by the notation.
\end{definition}

\begin{definition}\label{def:SplitVertex}\textbf{(Vertex Split)} Let $G$ be a graph. Suppose a degree-2 vertex is split to create two degree-1 vertices. This is the inverse operation of $\mergev(G_1)$.  We denote vertex splitting (for a degree-2 vertex) by $\splitv(G_1)$ where $\splitv(G_1) = \Addv(G_1) + \Dele(G_1) + \Adde(G_1)$, where the edges and vertices are not are not uniquely identified.
\end{definition}

We use these definitions to discuss edge relocations, vertex merging and splitting in both the original graph space and the line graph space which we denote by $G$ space and $H$ space respectively.

\section{Introducing a pseudo-inverse of a line graph}\label{sec:pseudoinverse}


Suppose $G$ is a graph and $H = L(G)$ its line graph. Let  $\widetilde{H}$ be a perturbed version of $H$ where we only consider small perturbations. 
We want to find a ``close'' line graph $\hat{H}$ where we define the notion of closeness as adding or removing the minimum number of edges from/to $\widetilde{H}$ such that the resulting graph is a line graph. Hence, by finding a close line graph $\hat{H}$, we can find the inverse line graph $L^{-1}(\hat{H})$. We call $L^{-1}(\hat{H})$ a pseudo-inverse line graph of {\small $\widetilde{H}$}, which we denote by $L^{\dagger}(\widetilde{H})$. This is shown in Figure \ref{fig:pseudoinverse1}. 

\begin{definition}\label{def:pseudoinverse}\textbf{(A pseudo-inverse of a line graph)}
    Let $\widetilde{H}$ be a graph (which may not be a line graph).
    We define $\hat{G} := L^{\dagger}(\widetilde{H})$ as a pseudo-inverse of $\widetilde{H}$ when $\hat{G}$ has the property that $\hat{H} := L(\hat{G})$ has the minimum number of edge additions or deletions from $\widetilde{H}$, that is
    \[
   L( \hat{G}) = \argmin_{\hat{H}} \left\vert \left(E(\hat{H})\backslash E(\widetilde{H}) \right) \cup \left(E(\widetilde{H})\backslash E(\hat{H}) \right) \right\vert \, , 
    \]
    where $\cup$ defines the union of edges. 
\end{definition}
Note that $L^{\dagger}(\widetilde{H})$ is not unique. While Definition \ref{def:pseudoinverse} encompasses a broader set of perturbations to $H$, we restrict our attention to single edge additions. 

\begin{definition}\label{def:edegaugmentedH}\textbf{(Edge Augmented  $H$)}
    Let $G$ be a graph and $H = L(G)$ its line graph. Let $\widetilde{H} = H  + e$, $\hat{G} = L^{\dagger}(\widetilde{H})$ and $\hat{H} = L(\hat{G})$. We call this scenario ``edge augmented $H$'' .
\end{definition}
In the experiments we show that our method works for more edge additions as well.

\subsection{The different cases}\label{sec:FourCases}
We consider the specific case where $\widetilde{H} = H + e_1$, that is, the edge augmented $H$ scenario. The graph $\hat{H}$ is obtained from $\widetilde{H}$ by adding or removing edges. This set up gives rise to four cases as shown in Figure \ref{fig:ThreeCases} and stated in Theorem \ref{thm:DifferentCases}. 

\begin{restatable}{theorem}{thmDifferentCases}
\label{thm:DifferentCases} For edge augmented $H$ (Definition \ref{def:edegaugmentedH}) exactly one  of the following statements is true.
\begin{enumerate}[label=Case \Roman*:, leftmargin=*]
    \item $\widetilde{H} \cong \hat{H}$, $\hat{H} \ncong H$, $\hat{G} \ncong G$, $L^{\dagger} = L^{-1}$ and either $\hat{G} = \relocatee(G)$ or $\hat{G} = \mergev(G)$.
    \item  $\hat{H} = \Dele(\widetilde{H})$, $\hat{H} \cong H$ and $\hat{G} \cong G$.
    \item $\hat{H} = \Dele(\widetilde{H})$, $\widetilde{H} \ncong \hat{H}$, $\hat{H} \ncong H$, $\hat{G} \ncong G$, $\hat{H} = \relocatee(H)$ and either $\hat{G} = \relocatee(G)$ or $\hat{G} = \mergev(G) + \splitv(G)$.
    \item $\hat{H} = \Adde(\widetilde{H})$, $\widetilde{H} \ncong \hat{H}$, $\hat{H} \ncong H$ and $\hat{G} \ncong G$.
\end{enumerate}
\end{restatable}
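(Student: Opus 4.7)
The plan is to classify $\hat{H}$ by how many and which edges distinguish it from $\widetilde{H}$, and then verify the structural claims relating $\hat{G}$ to $G$ in each case. A first observation provides the skeleton: deleting the added edge $e_1$ from $\widetilde{H}$ recovers $H = L(G)$, which is a line graph by assumption. Hence the minimum edit count in Definition \ref{def:pseudoinverse} is at most one. Either the minimum is zero, which forces $\widetilde{H}$ to already be a line graph (Case I), or it is one, in which case $\hat{H}$ is obtained from $\widetilde{H}$ by a single deletion or addition (Cases II, III, IV). The four options are mutually exclusive by the type of edit performed and by the relationship between $\hat{H}$ and $H$, so it suffices to establish the substantive claims within each case.

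For Case I, one has $\hat{H} = \widetilde{H}$ and $L^{\dagger} = L^{-1}$ trivially. Since $|E(\widetilde{H})| = |E(H)| + 1$, the graphs $\hat{H}$ and $H$ cannot be isomorphic; and by Theorem \ref{thm:IsomorphicLineGraphs} the roots $\hat{G}$ and $G$ cannot be isomorphic either (the $K_3 / K_{1,3}$ exception does not apply because the two line graphs differ). For the structural claim, I would write $e_1 = \{u, v\}$ and let $e_u, e_v \in E(G)$ be the edges corresponding to the vertices $u, v$ of $H$. Since $uv \notin E(H)$, the edges $e_u$ and $e_v$ are vertex-disjoint in $G$, yet they must share a vertex in $\hat{G}$. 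I would then enumerate the minimal modifications of $G$ that introduce this single adjacency without disturbing any other adjacency in $L(G)$: moving a degree-1 endpoint of $e_u$ (or $e_v$) onto an endpoint of the other edge realises a $\relocatee$, while collapsing a degree-1 endpoint of $e_u$ and a degree-1 endpoint of $e_v$ into a common shared vertex realises a $\mergev$. A degree argument at both endpoints being moved rules out other minimal realisations.

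For Cases II, III, IV, $\widetilde{H}$ is not a line graph and exactly one edit is required. If that edit deletes $e_1$, then $\hat{H} = H$, hence $\hat{H} \cong H$ and $\hat{G} \cong G$ by Theorem \ref{thm:IsomorphicLineGraphs}, giving Case II. If the edit deletes some $e_2 \neq e_1$, then $\hat{H} = H + e_1 - e_2$, which matches $\relocatee(H)$ of Definition \ref{def:RelocateEdge}; one then traces the effect of the added $e_1$ (a relocation or merge on the $G$ side, as in Case I) simultaneously with the removed $e_2$ (dually, a relocation or split). A paired relocation and relocation, acting on the same moving endpoint, collapse to a single $\relocatee(G)$, whereas a merge paired with a split yields $\mergev(G) + \splitv(G)$, giving Case III. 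Case IV is the remaining sub-case where $\hat{H}$ is obtained by a single addition; its statement only asserts non-isomorphism of $\hat{H}, \widetilde{H}, H$ and of $\hat{G}, G$, each of which follows from the edge count and from Theorem \ref{thm:IsomorphicLineGraphs}.

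The hard part will be the Case III structural characterisation, where the coupled effects of adding $e_1$ and deleting $e_2$ in the line graph must be tracked coherently through the root graph. Degree bookkeeping at the moving endpoints, and the need to verify that the two line-graph edits really do compose into a single $\relocatee$ or a single $\mergev + \splitv$ pair in $G$ (rather than into two independent relocations), will require the bulk of the case analysis. The Whitney $K_3 / K_{1,3}$ exception must also be checked at each point where Theorem \ref{thm:IsomorphicLineGraphs} is invoked to exclude spurious isomorphisms.
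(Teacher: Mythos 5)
Your proposal is correct and follows essentially the same route as the paper: the observation that deleting $e_1$ recovers the line graph $H$ (so the minimal edit count is $0$ or $1$), the split into the four cases by edit type, Whitney's theorem for the non-isomorphism claims, and the characterisation of the $G$-side changes as $\relocatee(G)$ or $\mergev(G)$ in Case I and as $\relocatee(G)$ or $\mergev(G)+\splitv(G)$ in Case III all mirror the paper's lemmas. The parts you flag as requiring the bulk of the work (the degree bookkeeping in Case III and ruling out other minimal realisations in Case I) are exactly where the paper's supplementary lemmas spend their effort.
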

\begin{proof}[Proof Sketch:] We refer to the above scenarios as \textit{Case I: $L^\dagger = L^{-1}$}, \textit{Case II: undo}, \textit{Case III: relocate edge} and \textit{Case IV: second add}. Case I can happen when $\widetilde{H}$ is a line graph, i.e.,  $\hat{H} \cong \widetilde{H}$. 
If $\widetilde{H}$ is not a line graph either edges needs to be added or removed. First suppose edges are removed. If $\hat{H}$ is obtained by removing the same (or congruent) edge that was added to $H$, we have Case II (undo), where we end up where we started with $\hat{G} \cong G$ and $\hat{H} \cong H$. If the removed edge is different (or non-congruent) to the one that was added to $H$, then we have Case III (relocate edge) with $\hat{H} \ncong H$ and $\hat{G} \ncong G$. Finally, if $\hat{H}$ is obtained by adding an edge to $\widetilde{H}$, then $\hat{H}$ has extra 2 edges compared to $H$ making $\hat{H} \ncong H$ and $\hat{G} \ncong G$. 
Note that the pseudo-inverse operation does not add or remove more than 1 edge, because the difference between $H$ and $\widetilde{H}$ is one edge and $L^\dagger$ minimizes edge edits.  

For Cases I and III, we show that $\hat{G}$ can be obtained by doing certain modifications to $G$.  Case I has two scenarios: the special case and the general case. The special case (triangle closing) is stated in Lemma \ref{lemma:SpecialCaseWithEdgeAddition} and results in $\hat{G} = \relocatee(G)$. It is illustrated in Figure \ref{fig:SpecialCase2} . For all other scenarios in Case I the general case (Lemma \ref{lemma:GeneralCaseWithEdgeAddition}) applies, which states that $\hat{G} = \mergev(G)$. This is illustrated in Figure \ref{fig:GeneralCase2}.  For Case III (relocate edge) as stated in Lemma \ref{lemma:EdgeRelocationInHspace}, we show that either $\hat{G} = \relocatee(G)$ or $\hat{G} = \mergev(G) + \splitv(G)$.
\end{proof}

We state Lemmas \ref{lemma:SpecialCaseWithEdgeAddition}, \ref{lemma:GeneralCaseWithEdgeAddition} and \ref{lemma:EdgeRelocationInHspace} using the notation $G_1$, $G_2$ for original graphs  and $H_1$, $H_2$ for their line graphs. These lemmas illustrate relationships between graphs and their line graphs without reference to a pseudo-inverse. For this reason we do not use $\hat{G}$ and $\hat{H}$ in their notation. 

\begin{restatable}{lemma}{lemmaSpecialCaseWithEdgeAddition} \textbf{(Special case: triangle closing)}\label{lemma:SpecialCaseWithEdgeAddition}
    Suppose $G_1$ is a graph and $H_1 = L(G_1)$ is its line graph.  Suppose $H_1$ has a degree-2 vertex labelled $c$ and $a$ and $b$ are its neighbours (see Figure \ref{fig:SpecialCase2}). Let us connect $a$ and $b$ with an edge. Then the resulting graph $H_2$ is a line graph, i.e., there exists $G_2$ such that $H_2 = L(G_2)$ where  $G_2$  is obtained from $G_1$ by relocating an edge, $G_2 = \relocatee(G_1)$.
\end{restatable}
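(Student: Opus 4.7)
I would work directly on the $G_1$ side and determine the local structure of $G_1$ around the edge corresponding to vertex $c$. Let $e_a, e_b, e_c$ denote the edges of $G_1$ mapping to $a, b, c$ in $H_1 = L(G_1)$. Since $c$ has degree $2$ in $H_1$ with neighbors $a, b$, the edge $e_c$ is adjacent in $G_1$ to exactly two other edges, namely $e_a$ and $e_b$. Moreover $e_a$ and $e_b$ share no vertex in $G_1$, for otherwise $a$ and $b$ would already be adjacent in $H_1$ and the edge addition would be trivial.

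Writing $e_c = uv$, the standard line-graph degree formula gives $d_{H_1}(c) = d_{G_1}(u) + d_{G_1}(v) - 2 = 2$, so $d_{G_1}(u) + d_{G_1}(v) = 4$. If $d_{G_1}(u) = 1$ then both $e_a$ and $e_b$ would be forced to meet $e_c$ at $v$, contradicting that they share no vertex; symmetrically $d_{G_1}(v) \neq 1$. Hence $d_{G_1}(u) = d_{G_1}(v) = 2$, and after relabeling if necessary we can write $e_a = wu$ and $e_b = vx$ with $w, u, v, x$ four pairwise distinct vertices (each coincidence either creates a loop, makes two of the $e_i$ equal, or makes $a$ and $b$ adjacent in $H_1$). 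Since $v$'s only incident edges in $G_1$ are $e_b$ and $e_c$, the pair $wv$ is not already an edge of $G_1$.

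I would then take $G_2$ to be the result of the edge relocation $\relocatee(G_1)$ that deletes $e_a = wu$ and adds $e_a' = wv$; the previous paragraph guarantees this is a valid simple-graph relocation. Only the vertices $u$ and $v$ have their incident-edge sets altered: $u$ drops from $\{e_a, e_c\}$ to $\{e_c\}$, removing the $ac$-adjacency contribution, while $v$ grows from $\{e_b, e_c\}$ to $\{e_a', e_b, e_c\}$, adding the $ab$- and $ac$-adjacency contributions. The lost $ac$ at $u$ is recreated at $v$, adjacencies contributed by every other vertex of $G_2$ are identical to those contributed in $G_1$, and $e_a'$ retains the same neighborhood at $w$ that $e_a$ had. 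The net change on the line graph is therefore exactly one new edge $ab$, giving $L(G_2) = H_1 + ab = H_2$, as required.

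The main obstacle, and really the only place where care is needed, is the degree case analysis that forces $d_{G_1}(u) = d_{G_1}(v) = 2$ and hence the canonical $w$--$u$--$v$--$x$ configuration around $e_c$; once this is established, the construction of $G_2$ and the adjacency bookkeeping at $u$ and $v$ go through routinely.
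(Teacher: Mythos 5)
Your proof is correct and uses essentially the same construction as the paper: relocate one of the two edges flanking $e_c$ so that it attaches to the other endpoint of $e_c$, turning the induced path $a$--$c$--$b$ into a triangle. The only difference is that you rigorously derive the local configuration (via $d_{H_1}(c)=d_{G_1}(u)+d_{G_1}(v)-2$, forcing $d_{G_1}(u)=d_{G_1}(v)=2$ and the four distinct vertices $w,u,v,x$) and then check the adjacency bookkeeping explicitly, whereas the paper reads this structure off Figure \ref{fig:SpecialCase2}; your version is the more careful of the two.
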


\begin{figure}[!ht]
 \centering
 \includegraphics[width=0.8\linewidth]{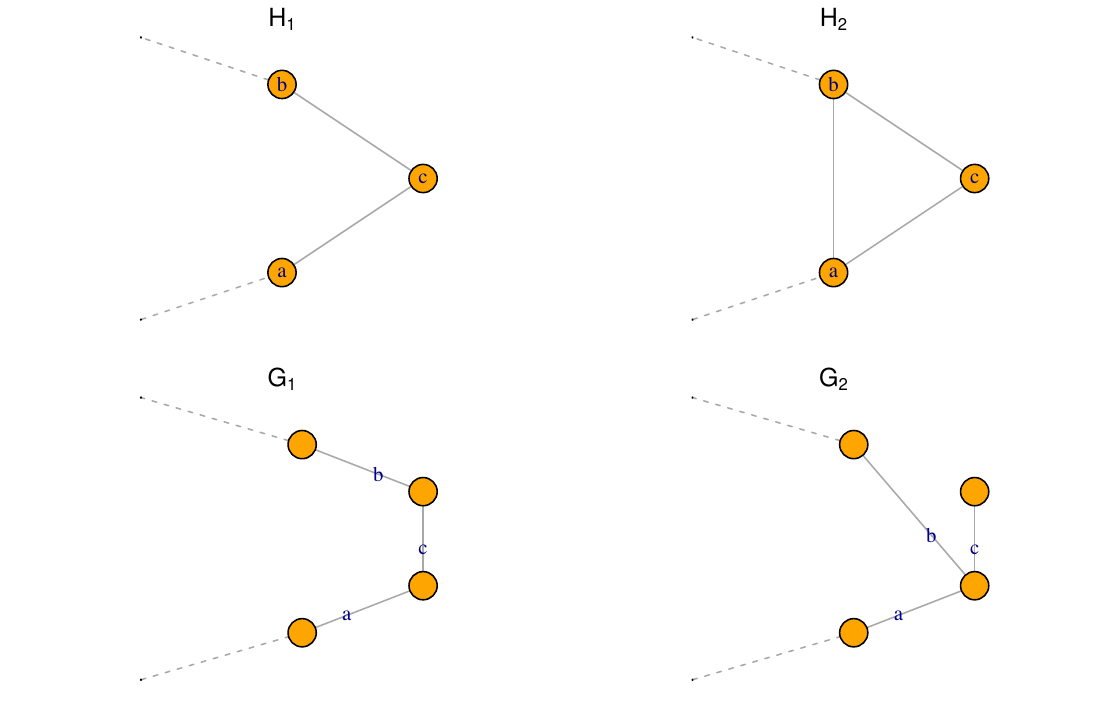}
 \caption{Line graphs $H_1$ and $H_2$, and their inverse line graphs $G_1$ and $G_2$ in the  \textit{triangle closing } scenario. }
 \label{fig:SpecialCase2}
\end{figure}

\begin{restatable}{lemma}{lemmaGeneralCaseWithEdgeAddition}(\textbf{General case})\label{lemma:GeneralCaseWithEdgeAddition}
    Suppose $H_1$ and $H_2$ are line graphs such that $H_2$ is obtained by adding an edge to $H_1$. Let $G_1$ and $G_2$ be the inverse line graphs of $H_1$ and $H_2$ respectively, i.e. $H_1 = L(G_1)$ and $H_2 = L(G_2)$. Then for all cases apart from the triangle closing (Lemma \ref{lemma:SpecialCaseWithEdgeAddition}) $G_2$ is obtained by merging two degree-1 vertices in $G_1$, i.e.,  $G_2 = \mergev(G_1)$. 
\end{restatable}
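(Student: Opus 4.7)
The plan is to go from $G_1$ to $G_2$ constructively, by identifying (merging) two degree-1 vertices of $G_1$ so that the corresponding edges $e_u, e_v$ come to share a vertex in $G_2$ --- precisely mirroring the new adjacency $(u,v)$ in $H_2$.

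First I would set up notation. Since $u \not\sim v$ in $H_1$ but $u \sim v$ in $H_2 = H_1 + (u,v)$, the edges $e_u = (a,b)$ and $e_v = (c,d)$ are vertex-disjoint in $G_1$, while in any root $G_2$ of $H_2$ they must share a vertex $w$. The core structural claim I would prove is: outside the triangle closing scenario of Lemma \ref{lemma:SpecialCaseWithEdgeAddition}, each of $e_u$ and $e_v$ has at least one degree-1 endpoint in $G_1$. Intuitively, if (say) both endpoints of $e_u$ in $G_1$ had degree $\geq 2$, there would exist another edge $e_x$ in $G_1$ neighbouring $e_u$ in a way that, combined with the new adjacency to $e_v$ in $G_2$ and the Beineke characterisation (Theorem \ref{thm:linegraphthm}), forces a degree-2 common neighbour of $u$ and $v$ in $H_1$ --- exactly the triangle closing setup we have excluded.

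Next I would perform the merge. Let $a' \in \{a,b\}$ and $c' \in \{c,d\}$ be such degree-1 endpoints, and let $G_2' := \mergev(G_1)$ be obtained by identifying $a'$ with $c'$. Because $a'$ and $c'$ had no other incident edges in $G_1$, the only new shared-vertex relation among edges after the merge is between $e_u$ and $e_v$. Consequently $L(G_2')$ differs from $L(G_1) = H_1$ by exactly one new edge, namely $(u,v)$, so $L(G_2') = H_2 = L(G_2)$. Applying Theorem \ref{thm:IsomorphicLineGraphs} (Whitney), we conclude $G_2' \cong G_2$, i.e.\ $G_2 = \mergev(G_1)$.

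The main obstacle will be the structural claim of the first paragraph: rigorously ruling out, in the non-triangle-closing case, that $e_u$ or $e_v$ lacks a degree-1 endpoint in $G_1$. This is where the proof really has to work --- once that claim is in hand, the verification $L(\mergev(G_1)) = H_2$ and the appeal to Whitney are routine. I would approach the claim via the Krausz clique partition of $H_2$: the clique $K_w$ in $H_2$ corresponding to $w$ contains $u$ and $v$, and one shows that if $|K_w| \geq 3$ then any $x \in K_w \setminus \{u,v\}$ must have degree exactly 2 in $H_1$ (otherwise the induced subgraph on $\{u,v,x,\text{a further neighbour of }x\}$ either violates Theorem \ref{thm:linegraphthm} or forces parallel edges in $G_2$), which is the triangle closing hypothesis of Lemma \ref{lemma:SpecialCaseWithEdgeAddition}.
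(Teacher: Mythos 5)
Your overall strategy is the same as the paper's: argue that, outside triangle closing, the new adjacency $uv$ can only be realised by identifying a degree-$1$ endpoint of $e_u$ with a degree-$1$ endpoint of $e_v$, because an endpoint of higher degree would drag a second edge into the new shared vertex and create a second new adjacency in $H_2$. Your endgame is actually cleaner than the paper's: explicitly forming $G_2' = \mergev(G_1)$, checking $L(G_2') = H_2$, and invoking Theorem \ref{thm:IsomorphicLineGraphs} is a genuine improvement over the paper's informal ``this is the only way it can happen'' conclusion. The problem is the structural claim you correctly identify as the crux, and specifically the implication you propose to prove it with.

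That implication --- ``if both endpoints of $e_u$ have degree $\geq 2$ in $G_1$, then $u$ and $v$ have a degree-$2$ common neighbour in $H_1$, i.e.\ we are in the triangle-closing case'' --- is false. Take $G_1$ to be a paw (triangle $a_1a_2z$ with pendant edge $z\delta$) together with a disjoint edge $c_1c_2$, and set $e_u = a_1a_2$, $e_s = a_1z$, $e_t = a_2z$, $e_r = z\delta$, $e_v = c_1c_2$. Then $H_1$ is the diamond on $\{u,s,t,r\}$ (all edges except $ur$) plus the isolated vertex $v$, and $H_2 = H_1 + uv$ is a line graph: its root is the triangle $xyz'$ with a path $x\!-\!p\!-\!q$ attached, via $e_u = xp$, $e_s = xy$, $e_t = xz'$, $e_r = yz'$, $e_v = pq$. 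Here both endpoints of $e_u$ have degree $2$ in $G_1$, yet $u$ and $v$ have \emph{no} common neighbour in $H_1$ (so this is not triangle closing), and your Krausz argument, which only treats $|K_w|\geq 3$, is silent on exactly this $|K_w|=2$ situation. The lemma survives only because the labelled root of $H_1$ is not unique: the diamond component admits the automorphism swapping $u$ and $r$, and under the \emph{other} root (with $e_u$ the pendant edge $z\delta$) the degree-$1$ endpoint exists and your merge goes through. So any correct proof of your structural claim must either be stated as ``\emph{some} root $G_1$ of $H_1$ has the required degree-$1$ endpoints'' and exploit this relabelling freedom, or handle the no-common-neighbour case by a separate argument showing that the offending configuration (a triangle on $e_u$ with both endpoints saturated) forces $H_2$ to contain a forbidden subgraph unless such a relabelling is available. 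To be fair, the paper's own proof of this lemma is informal in precisely the same place and would reject the example above as ``impossible''; but since you flagged this step as the one that ``really has to work,'' you should know that the reduction to triangle closing, as sketched, does not close it.
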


    \begin{figure}[t]
         \centering
         \centering
        \includegraphics[width=0.8\linewidth, trim=0 5 0 0.5, clip]{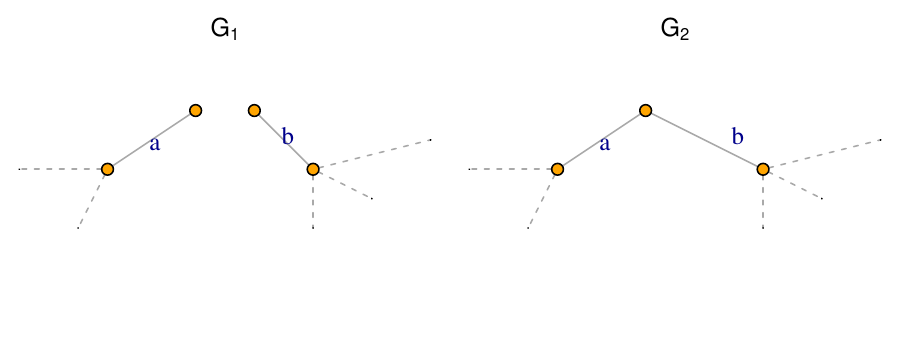}
        \caption{Graph $G_1$ on left with edges $a$ and $b$ not sharing a vertex and graph $G_2$ on the right with edges $a$ and $b$ sharing a  vertex. Possible edges shown in dashed lines. }
        \label{fig:GeneralCase2}
     \end{figure}

\begin{restatable}{lemma}{lemmaEdgeRelocationInHspace}\label{lemma:EdgeRelocationInHspace}
    Let $G_1$, $G_2$ be graphs and $H_1 = L(G_1)$, $H_2 = L(G_2)$ be their line graphs such that $ |V(H_1)| = |V(H_2)|$ and the only difference between $H_1$ and $H_2$ is that a single edge has relocated from $H_1$ to $H_2$. That is, $H_2 = \relocatee(H_1)$. This can only occur in the following scenarios:
    \begin{enumerate}
        \item $G_2 = \relocatee(G_1)$
        \item $G_2 = \mergev(G_1) + \splitv(G_1)$
    \end{enumerate}
\end{restatable}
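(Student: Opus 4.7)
The plan is to reduce the single edge relocation $H_2 = \relocatee(H_1)$ to two primitive edge-addition operations by introducing an intermediate line graph, and then to invoke Lemmas \ref{lemma:SpecialCaseWithEdgeAddition} and \ref{lemma:GeneralCaseWithEdgeAddition} on each step. Let $e_1 \in E(H_1) \setminus E(H_2)$ be the edge removed and $e_2 \in E(H_2) \setminus E(H_1)$ be the edge added. I define the intermediate subgraph $H^- := H_1 \setminus \{e_1\} = H_2 \setminus \{e_2\}$ and the intermediate supergraph $H^+ := H_1 \cup \{e_2\} = H_2 \cup \{e_1\}$.

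The first and main obstacle is showing that at least one of $H^-$ or $H^+$ is itself a line graph. I would argue this via Beineke's forbidden-subgraph characterization (Theorem \ref{thm:linegraphthm}): if $H^-$ contained a forbidden induced subgraph on some vertex set $V$, then both endpoints of $e_1$ and of $e_2$ must lie in $V$, because otherwise the forbidden subgraph would survive into $H_1$ or $H_2$, contradicting the fact that $H_1$ and $H_2$ are line graphs. A dual constraint holds for $H^+$ under the edge-removal case. A case analysis on the nine specific forbidden subgraphs should show that these two constraints cannot hold simultaneously, so at least one intermediate is a line graph.

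Assuming WLOG that $H^-$ is a line graph with root $G^- := L^{-1}(H^-)$, the two edge-addition transitions $H^- \to H_1$ (adding $e_1$) and $H^- \to H_2$ (adding $e_2$) each fall under Lemma \ref{lemma:SpecialCaseWithEdgeAddition} (triangle closing, giving $G_i = \relocatee(G^-)$) or Lemma \ref{lemma:GeneralCaseWithEdgeAddition} (general case, giving $G_i = \mergev(G^-)$), yielding four sub-cases to combine. In the ``both general'' sub-case, inverting $\mergev$ gives $G^- = \splitv(G_1)$, so $G_2 = \mergev(\splitv(G_1))$, which is precisely scenario 2 of the lemma. In the ``both triangle-closing'' sub-case, I would show by tracking which specific edges of $G^-$ move in each triangle-closing step that the composition $G_1 \to G^- \to G_2$ collapses to a single edge relocation $G_2 = \relocatee(G_1)$, giving scenario 1.

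The mixed sub-cases require extra care, since naively they produce $|V(G_1)| \neq |V(G_2)|$, which is incompatible with both claimed scenarios (both $\relocatee$ and $\mergev + \splitv$ preserve the vertex count). I would resolve these by switching to the alternative intermediate $H^+$, whose analysis is symmetric and yields a different combination of cases, and by exploiting Whitney's ambiguity (Theorem \ref{thm:IsomorphicLineGraphs}) in the selection of roots of $H_1$ and $H_2$ when one of their connected components is $K_3$ so as to select compatible roots. Reconciling the two intermediates and the root-selection ambiguity is where I expect the most delicate bookkeeping.
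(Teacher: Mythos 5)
Your route is genuinely different from the paper's: the paper never introduces an intermediate graph in the $H$ space. Instead it works forward from the $G$ space, observing that $|V(H_1)|=|V(H_2)|$ forces $|E(G_1)|=|E(G_2)|$, so the passage from $G_1$ to $G_2$ must be an edge-count-preserving combination of primary operations; it then uses the forward lemmas (edge relocation in $G$, and merge-plus-split in $G$, each yield a single edge relocation in $H$) together with a vertex-counting argument to exclude every other combination. Your decomposition through $H^- = H_1 - e_1 = H_2 - e_2$ (or $H^+$) followed by two applications of Lemmas \ref{lemma:SpecialCaseWithEdgeAddition} and \ref{lemma:GeneralCaseWithEdgeAddition} is an attractive alternative because it reuses the Case I machinery, but as written it has two genuine gaps.

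First, the entire argument hangs on at least one of $H^-$, $H^+$ being a line graph, and you only sketch why. The observation that any forbidden induced subgraph of $H^-$ (resp.\ $H^+$) must contain all endpoints of both $e_1$ and $e_2$ is correct, but turning that into ``the two constraints cannot hold simultaneously'' requires checking all placements of up to four distinguished vertices inside each of the nine forbidden graphs of Figure \ref{fig:ForbiddenGraphs}, for $H^-$ and $H^+$ jointly; this is a substantial unverified case analysis, and the claim is not self-evidently true. Second, the recombination step does not close. In the ``both triangle-closing'' sub-case you get $G_1=\relocatee(G^-)$ and $G_2=\relocatee(G^-)$, and two relocations of \emph{different} edges of $G^-$ (arising from two different degree-2 vertices of $H^-$) compose to something that is not a single edge relocation of $G_1$; you would have to show this configuration cannot arise, not merely assert the composition collapses. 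Likewise, in the mixed sub-cases you correctly note that $|V(G_1)|\neq|V(G_2)|$ is incompatible with the conclusion, but your fix --- switching to $H^+$ --- presupposes that $H^+$ is a line graph, which your first step only guarantees for \emph{one} of the two intermediates; you need instead to derive a contradiction from the mixed sub-case directly (e.g.\ via the fact that $H_1$ and $H_2$ determine the vertex counts of their roots). Until these two points are filled in, the proof is a plausible programme rather than a proof.
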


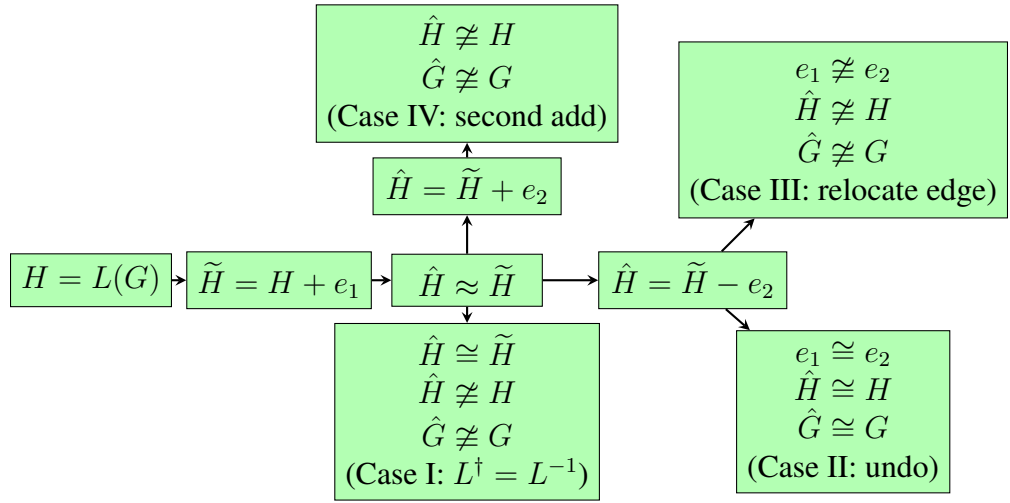
\begin{figure*}[t]
\centering
\begin{tikzpicture}[node distance=1cm]
\node (H) [adjmat2] {$H = L(G)$};
\node (Htilde) [adjmat2, right of=H, xshift=1.5cm] {$\widetilde{H} = H+ e_1$};
\node (Hhat) [adjmat2, right of=Htilde, xshift=1.5cm] {$\hat{H} \approx \widetilde{H}$};
\node (Hhatleft) [adjmat2, below of=Hhat, yshift=-0.75cm, align=center] {$\hat{H} \cong \widetilde{H}$ \\ $\hat{H} \ncong H$ \\ $\hat{G} \ncong G $\\ (Case I: $L^\dagger=L^{-1}$)}; 
\node (Hhatright) [adjmatsmall, right of=Hhat, xshift=2cm] {$\hat{H} =  \widetilde{H} - e_2$};
\node (Hhatadd) [adjmatsmall, above of=Hhat, yshift=0.25cm ] {$\hat{H} =  \widetilde{H} + e_2$};
\node (Hhatright1) [adjmatsmall, below of=Hhatright, xshift=2cm, yshift = -0.75cm, align=center] {$e_1 \cong  e_2 $ \\ $ \hat{H} \cong H$ \\  $\hat{G} \cong G$ \\ (Case II: undo)};
\node (Hhatright2) [adjmatsmall, above of=Hhatright, yshift = 1cm, xshift=2cm, align=center] {$e_1 \ncong  e_2 $ \\ $\hat{H} \ncong H$ \\ $\hat{G} \ncong G$ \\ (Case III: relocate edge)};
\node (Hhat++) [adjmatsmall, above of=Hhatadd, yshift = 0.5cm, align=center] {$\hat{H} \ncong H$ \\ $\hat{G} \ncong G$ \\ (Case IV: second add)};
\draw [arrow] (H) -- (Htilde);
\draw [arrow] (Htilde) -- (Hhat);
\draw [arrow](Hhat) -- (Hhatleft);
\draw [arrow](Hhat) -- (Hhatright);
\draw [arrow](Hhat) -- (Hhatadd);
\draw [arrow](Hhatright) -- (Hhatright1);
\draw [arrow](Hhatright) -- (Hhatright2);
\draw [arrow](Hhatadd) -- (Hhat++);
\end{tikzpicture}
\caption{Different cases}
\label{fig:ThreeCases}
\end{figure*}

\subsection{Spectral radius bounds  between $G$ and $H$ spaces}
The spectral radius of a square matrix $B$, denoted by $\lambda(B)$ is its maximum absolute eigenvalue. For a graph $G$ its spectral radius is the largest eigenvalue of its adjacency matrix $A(G)$. The spectral radius is a global property of the graph and changes to the spectral radius tells us how graph modifications affect its overall connectivity; for example,  the effect on diffusion of information and infection. We use the spectral radius as our graph norm, which we denote by $\lVert G \rVert$ and sometimes by $\lambda(A(G))$. Noting that if $G$ has $n$ vertices and $m$ edges, then $H$ has $m$ vertices, we denote the spectral radii of $G$ and $H$ by $\lVert G \rVert_n$ and $\lVert H \rVert_m$ to distinguish that the graphs are in different spaces.  In more general settings we denote $\lVert \cdot \rVert $ without a subscript.

We explore the relationship of the line graph $H$, its inverse line graph $L^{-1}(H)$ and pseudo-inverse of $L^{\dagger}(\widetilde{H})$ in terms of the spectral radius when $\widetilde{H} = H + e$. Borrowing the definition of bounded linear operators, we show that $L^{-1}$ and $L^{\dagger}$ are bounded without claiming they are linear. 

\begin{definition}\textbf{(Bounded linear operator)} Let $X$ and $Y$ be normed spaces over a scalar field. A linear map $T: X \to Y$ is a bounded linear operator if there is a positive constant $M$ satisfying 
\[
\lVert Tx \rVert_Y \leq M \lVert x \rVert_X \quad \text{for all} \quad x \in X\, . 
\]
 
\end{definition}

To show $L^{-1}$ and $L^{\dagger}$ are bounded, we use a result from  \citet{stevanovic2018spectral} and \citet{SmithGraphs} which categorises graphs with spectral radius $\lambda(A(G)) \leq 2$. 

\begin{theorem}\label{thm:SmithGraphs}\textbf{(Smith Graphs)} \citep{stevanovic2018spectral,SmithGraphs} Connected graphs with $\lambda(A(G)) \leq 2$ are precisely the induced subgraphs shown in Figure \ref{fig:smithgraphs}.
\end{theorem}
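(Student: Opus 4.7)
The plan is to establish both directions of the classification: (i) every graph in the Smith list has spectral radius at most $2$, and (ii) every connected graph not in that list has spectral radius strictly greater than $2$.

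First I would handle direction (i) by exhibiting explicit Perron eigenvectors. For the path $P_n$ the adjacency eigenvalues are $2\cos(k\pi/(n+1))$, so $\lambda(P_n) = 2\cos(\pi/(n+1)) < 2$. For the cycle $C_n$ the eigenvalues are $2\cos(2\pi k/n)$, attaining maximum exactly $2$. For each of the exceptional extended diagrams $\widetilde{D}_n, \widetilde{E}_6, \widetilde{E}_7, \widetilde{E}_8$ I would write down an explicit positive vertex labelling $x$ satisfying $A x = 2x$; together with Perron--Frobenius this certifies $\lambda = 2$. The finite diagrams $D_n, E_6, E_7, E_8$ are then proper connected induced subgraphs of their extended counterparts, so their spectral radius is strictly less than $2$, again by Perron--Frobenius.

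For direction (ii), the key tool is monotonicity of the spectral radius under taking induced subgraphs (Cauchy interlacing applied to principal submatrices of $A(G)$): if $H$ is an induced subgraph of $G$, then $\lambda(H) \leq \lambda(G)$, with strict inequality whenever $G$ is connected and $H$ is a proper induced subgraph. I would then identify a finite family of \emph{minimal forbidden graphs}: graphs $F$ with $\lambda(F) > 2$ all of whose proper induced subgraphs satisfy $\lambda \leq 2$. The natural candidates are $K_{1,5}$ (whose spectral radius is $\sqrt{5}$), a cycle with a pendant edge attached, and the ``T-trees'' $T_{p,q,r}$ formed by three paths of edge-lengths $p,q,r$ meeting at a common vertex whenever $\tfrac{1}{p+1} + \tfrac{1}{q+1} + \tfrac{1}{r+1} < 1$. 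Each candidate is checked to have spectral radius strictly greater than $2$ by direct eigenvector or characteristic-polynomial computation.

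The main obstacle is the combinatorial case analysis in direction (ii): given a connected graph $G$ outside the Smith list, I must exhibit a minimal forbidden graph as an induced subgraph. I would split on cycle content. If $G$ contains a cycle but is not itself a cycle, it must contain a cycle with an attached pendant as an induced subgraph, forcing $\lambda > 2$. If $G$ is a tree, I bound the branching structure: no vertex can have degree $\geq 5$ (else $K_{1,5}$ is induced); there can be at most one branching vertex of degree $\geq 3$ beyond what the extended diagrams allow, and its three longest arm lengths $(p,q,r)$ must satisfy $\tfrac{1}{p+1}+\tfrac{1}{q+1}+\tfrac{1}{r+1} \geq 1$ (else a minimal $T_{p,q,r}$ with $\lambda > 2$ is induced). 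Verifying that the only trees surviving all of these constraints are $A_n, D_n, E_6, E_7, E_8$ and their extensions, and that the minimal-forbidden family is complete, is where the real work lies.
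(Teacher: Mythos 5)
The paper does not actually prove this statement: it is imported from the literature (Smith's classification, via the two cited references), so there is no in-paper argument to compare yours against. Your outline follows the standard route for the classical result --- exhibit eigenvectors with eigenvalue $2$ for the affine diagrams, use Perron--Frobenius monotonicity, and classify the remaining graphs via minimal forbidden configurations --- and most of it is sound, but one step is concretely false and the decisive case analysis is deferred.

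The false step is the cycle case: you claim that a connected graph containing a cycle but not equal to a cycle must contain a cycle with an attached pendant as an \emph{induced} subgraph. $K_4$ is a counterexample: its only connected induced subgraphs are $K_1$, $K_2$, $K_3$ and $K_4$ itself, none of which is a cycle with a pendant vertex, yet $\lambda(K_4)=3>2$; the diamond $K_4$ minus an edge fails the same way. So your minimal forbidden family is incomplete and the reduction breaks. The standard fix is to abandon induced subgraphs at this point and use monotonicity under ordinary subgraphs: if $G$ is connected, contains a cycle $C$, and $G\neq C$, then $C$ is a proper subgraph of the connected graph $G$, hence $\lambda(G)>\lambda(C)=2$ by Perron--Frobenius (the spectral radius of a connected graph strictly exceeds that of any proper subgraph). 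With that repair the problem reduces cleanly to trees. In the tree case your constraints (maximum degree at most $4$, the $\tfrac{1}{p+1}+\tfrac{1}{q+1}+\tfrac{1}{r+1}\geq 1$ criterion for a single branch vertex, control of multiple branch vertices via the $\widetilde{D}_n$ family) are the right ones, but, as you acknowledge, verifying that these constraints leave exactly the Smith list and that no minimal forbidden tree has been missed is the bulk of the proof and is not carried out here.
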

Of the Smith graphs the star graph $K_{1,4}$, $W_{n}$, $F_7$, $F_8$ and $F_9$ cannot be line graphs as they contain the line forbidden graph $L_1 = K_{1,3}$. The inverse line graph of cycles $C_n$ are cycles. 

\begin{lemma}\label{lemma:smith}
    Line graphs $H$ that are induced subgraphs of Smith graphs satisfy $\lVert L^{-1}(H) \lVert \leq 2$.
\end{lemma}
\begin{proof}
    The inverse line graphs of paths and cycles are paths and cycles respectively. Suppose $P_n$ is a path of $n$ vertices and $C_n$ is a cycle of $n$ vertices. Then
    \[
    L^{-1}(P_n) = P_{n+1} \quad \text{and} \quad L^{-1}(C_n) = C_{n} \, . 
    \]
    The spectral radius is bounded by the maximum degree of the graph \citep{royle2001algebraic}. Thus, for paths and cycles $H$,  $\lVert H \rVert \leq 2$.
    
    Next we show that line graphs that are induced subgraphs of Smith graphs can only be paths or cycles. 
    Let us go through each of the Smith graphs in Figure \ref{fig:smithgraphs}. Induced subgraphs of cycles $C_n$ can be either be paths or cycles. 
    
    The graph $K_{1, 4}$ cannot be a line graph as it contains the forbidden graph $L_1 = K_{1,3}$ as a subgraph (see Figure \ref{fig:ForbiddenGraphs}). Thus, the induced subgraphs of $K_{1, 4}$ that can be line graphs are paths of length 1 and 2. 

    Similarly, $W_n$, $F_7$, $F_8$ and $F_9$ have $K_{1,3}$ as an induced subgraph and cannot be line graphs. However, the induced subgraphs which do not contain $K_{1,3}$ in these graphs are line graphs. But again, these are path graphs and satisfy $\lVert L^{-1}(P_n) \rVert \leq 2$ as before. 
\end{proof}

\begin{figure}
    \centering
    \includegraphics[width=0.98\linewidth]{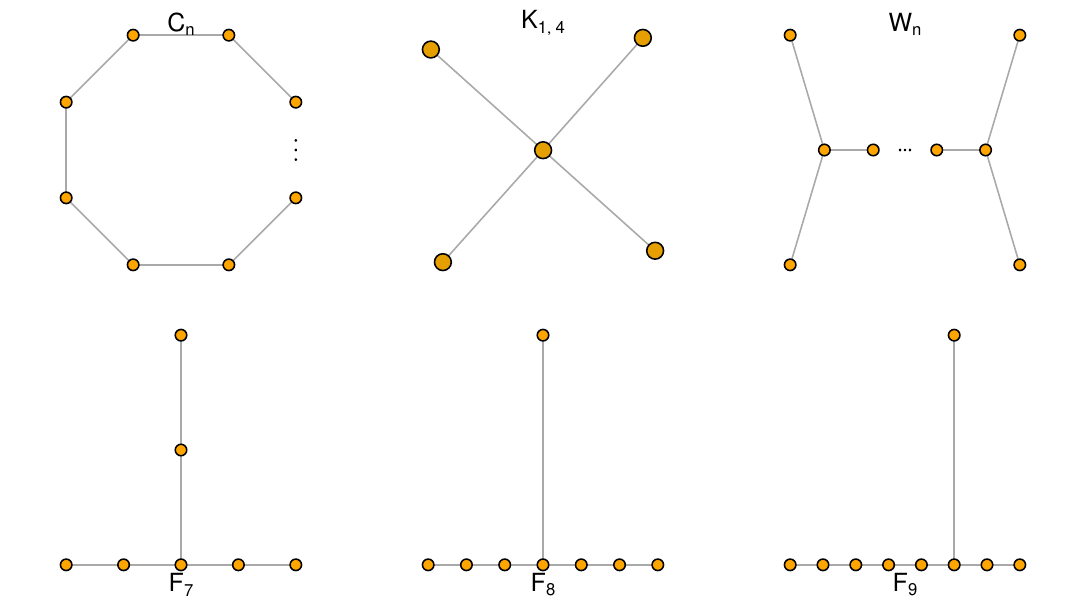}
    \caption{The Smith graphs with $\lambda(A(G)) = 2$.}
    \label{fig:smithgraphs}
\end{figure}

In addition to Smith's graphs we use the following theorem from \cite{beineke2021line}, which gives the relationship between the incidence matrix of a graph $G$ and the adjacency matrix of its line graph $L(G)$.

\begin{theorem}\label{thm:adjacencyandincidence}({\cite{beineke2021line} Theorem 4.4})
    Suppose $G$ is a graph and its incidence matrix is given by $B$. Let $L(G)$ denote the line graph of $G$ and $A(L(G))$ denote the adjacency matrix of $L(G)$. Then
    \begin{equation}\label{eq:incidenceandadjacency}
            A(L(G)) = B'B - 2I \, .
    \end{equation}
\end{theorem}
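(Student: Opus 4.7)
The approach is to compute the entries of $B'B$ directly from the definition of matrix multiplication and match them against the entries of $A(L(G)) + 2I$. First I would fix notation: write $B \in \{0,1\}^{|V(G)| \times |E(G)|}$ with $B_{v,e} = 1$ exactly when vertex $v$ is an endpoint of edge $e$. Under this convention the $(e,f)$ entry of $B'B$ is $\sum_{v \in V(G)} B_{v,e} B_{v,f}$, which simply counts the vertices of $G$ that are incident to both edges $e$ and $f$.

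Next I would split into two cases. For the diagonal entries ($e = f$) the sum reduces to $\sum_v B_{v,e}^2 = \sum_v B_{v,e}$, and since every edge has exactly two endpoints this equals $2$. For the off-diagonal entries ($e \neq f$), the standing assumption that $G$ is a simple graph guarantees that two distinct edges share at most one common vertex, so the sum is either $0$ or $1$; it equals $1$ precisely when $e$ and $f$ share a vertex. By the construction of a line graph used in Definition \ref{def:linegraphroot}, this is exactly the condition for $e$ and $f$ (viewed as vertices of $L(G)$) to be adjacent, i.e., for $A(L(G))_{e,f} = 1$.

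Combining the two cases, $B'B$ has diagonal $2$ and off-diagonal entries matching $A(L(G))$, so $B'B = A(L(G)) + 2I$, which rearranges to equation (\ref{eq:incidenceandadjacency}). There is no serious obstacle here; the only subtlety is keeping the conventions straight (that $B$ is vertex-by-edge, and that $G$ has no loops or parallel edges so that diagonal entries are exactly $2$ and off-diagonal incidence counts are $0$ or $1$). The identity is essentially a restatement of the definition of $L(G)$ in matrix form, so no deeper combinatorial argument is required.
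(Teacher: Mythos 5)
Your proof is correct and complete; the paper itself does not prove this statement but simply cites it from \cite{beineke2021line}, and your entrywise computation of $B'B$ (diagonal entries equal to $2$ since each edge has two endpoints, off-diagonal entries counting shared vertices, hence $0$ or $1$ in a simple graph and equal to $1$ exactly when the edges are adjacent in $L(G)$) is the standard argument for this identity. The one convention worth stating explicitly, which you do, is that $G$ is simple, so that distinct edges share at most one vertex and no loops inflate the diagonal.
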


\begin{proposition}\label{prop:LInverseCts}
    Let $G$ be a graph and $H = L(G)$ its line graph. Then either 
    \[
    \lVert L^{-1}(H) \rVert_n \leq 2  \lVert H \rVert_m   \quad \text{or} \quad  \lVert L^{-1}(H) \rVert_n \leq 2 \, . 
    \]   
 \end{proposition}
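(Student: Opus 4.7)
The approach is to invoke Theorem \ref{thm:adjacencyandincidence} to reduce the spectral comparison between $G$ and $H$ to a comparison between the adjacency matrix of $G$ and its signless Laplacian, and then to close the bound using the Smith graph characterization from Theorem \ref{thm:SmithGraphs}.

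\textbf{Step 1 (Rewrite $\lVert H \rVert_m + 2$ via the signless Laplacian).} By Theorem \ref{thm:adjacencyandincidence}, $A(L(G)) + 2I = B'B$, where $B$ is the incidence matrix of $G$. Hence $\lVert H \rVert_m + 2 = \lambda_1(B'B)$. Using the standard fact that $B'B$ and $BB'$ share all nonzero eigenvalues, together with the identity $BB' = A(G) + D(G) =: Q(G)$ (the signless Laplacian), I obtain
\[
\lVert H \rVert_m + 2 \;=\; \lambda_1(Q(G)).
\]

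\textbf{Step 2 (Compare $\lambda_1(Q(G))$ with $\lVert G \rVert_n$).} Choose a nonnegative Perron eigenvector $v$ of $A(G)$, which exists because $A(G)$ is a nonnegative symmetric matrix. Then
\[
v'\bigl(A(G)+D(G)\bigr)v \;=\; \lVert G \rVert_n \|v\|^2 + \sum_i d_i v_i^2 \;\geq\; \lVert G \rVert_n \|v\|^2,
\]
so $\lambda_1(Q(G)) \geq \lVert G \rVert_n$, and combined with Step 1 this gives
\[
\lVert G \rVert_n \;\leq\; \lVert H \rVert_m + 2.
\]

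\textbf{Step 3 (Close the bound with Smith graphs).} Since $H$ is not a Smith graph, Theorem \ref{thm:SmithGraphs} yields $\lVert H \rVert_m > 2$. Therefore $\lVert H \rVert_m + 2 \leq 2\lVert H \rVert_m$, and chaining with Step 2 gives $\lVert L^{-1}(H) \rVert_n = \lVert G \rVert_n \leq 2\lVert H \rVert_m$.

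The main obstacle, though modest, is the passage between connected and disconnected graphs: Theorem \ref{thm:SmithGraphs} is stated for connected graphs, so interpreting the exclusion ``$H$ is not a Smith graph'' must be handled componentwise, i.e., some component of $H$ must have spectral radius strictly greater than $2$, which is precisely what is needed in Step 3. Since the spectral radius of a disjoint union equals the maximum over components and $L^{-1}$ acts componentwise, the bound reduces to that maximal-radius component. The eigenvalue manipulations in Steps 1--2 are then routine applications of the Rayleigh principle and the $B'B$--$BB'$ symmetry.
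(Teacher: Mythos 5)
Your proposal is correct and takes essentially the same route as the paper: both establish $\lVert G \rVert_n \leq \lambda_1(B'B) = \lVert H \rVert_m + 2$ via the incidence-matrix identities $A(G) = BB' - D$ and $A(H) = B'B - 2I$, and then close the bound using $\lVert H \rVert_m \geq 2$ for non-Smith graphs. The only (immaterial) difference is that you prove $\lambda_1(A(G)) \leq \lambda_1(A(G)+D)$ by a Rayleigh-quotient argument with the Perron vector, whereas the paper invokes Weyl's inequality; your closing remark on handling disconnected graphs componentwise is a point the paper glosses over.
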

\begin{proof}
   The relationship between the adjacency matrix $A(G)$ and the incidence matrix $B$ of a graph $G$ is given by 
    \begin{equation}\label{eq:AdjacencyAndIncidence}
         A(G) = BB' - D
    \end{equation}
    where $D$ is the degree matrix of $G$ defined as the $n \times n$ diagonal matrix with $d_{ii}$ equal to degree of vertex $v_i$.
    Then from equation \eqref{eq:AdjacencyAndIncidence} we get
   \begin{align}\label{eq:GandLambda1BBT}
      \lVert G \rVert_n & =  \lambda_1\left(A(G)\right) = \lambda_1\left( BB' - D \right) \,  \notag \\
      & \leq \lambda_1\left( BB'  \right) - \min(d_{ii}) \,  
      \leq \lambda_1\left( BB'  \right) = \lambda_1(B'B) \, , 
    \end{align}
   where we have used Weyl's inequality and the fact that eigenvalues of $B'B$ are equal to those of $BB'$. 
    From Theorem \ref{thm:adjacencyandincidence}, the adjacency matrix of $H = L(G)$ denoted by $A(H)$ satisfies
    \[
    A(H) = B'B - 2I \, , 
    \]
    where $B$ denotes the incidence matrix of $G$ and $I$ denotes the identity matrix.  Therefore, if $\mu$ is an eigenvalue of $A(H)$,  $\mu + 2$ is an eigenvalue of $B'B$. This gives us 
   \[
   \lambda_1\left( BB'  \right) = \lambda_1\left(A(H) \right) + 2 = \lVert H \rVert_m + 2
   \]
   making
   \begin{equation}\label{eq:NormGAndNormH}
        \lVert G \rVert_n  \leq \lVert H \rVert_m + 2 \, , 
   \end{equation}
   where we have used equation \eqref{eq:GandLambda1BBT}. Only Smith graphs (Theorem \ref{thm:SmithGraphs}) satisfy $\lambda_1(A(G)) \leq 2$. Then for all other line graphs $H$ we have
    \begin{align*}
        \lVert H \rVert_m  & \geq  2 \, , \quad \text{giving us} \\
         \lVert G \rVert_n = \lVert L^{-1}(H) \rVert_n  & \leq 2\lVert H \rVert_m \, , 
    \end{align*}
    where we have used equation \eqref{eq:NormGAndNormH}. For Smith graphs $H$ from Lemma \ref{lemma:smith} we know that $\lVert L^{-1}(H) \rVert_n \leq 2$ giving us the result. 
\end{proof}

\begin{restatable}{lemma}{lemmasmithtwo}\label{lemma:smith2}
    All induced subgraphs $\widetilde{H}$ of Smith graphs that are edge augmented $H$ graphs (Definition \ref{def:edegaugmentedH})  satisfy  $\lVert L^{\dagger}(\widetilde{H}) \rVert \leq 3$. 
\end{restatable}

\begin{restatable}{proposition}{propPseudoinverseCts}\label{prop:PseudoinverseCts}
     In scenario edge augmented $H$ (Definition \ref{def:edegaugmentedH}) for all  graphs $\widetilde{H}$  
    we have
    \[
    \lVert L^{\dagger}(\widetilde{H}) \rVert_n \leq 3 \lVert \widetilde{H} \rVert_m \quad \text{or} \quad  \lVert L^{\dagger}(\widetilde{H}) \rVert_n \leq 3 \, . 
    \]
\end{restatable}
\subsection{Sensitivity to ``small'' perturbations}
In this part we focus on the change of spectral radius when graphs are slightly perturbed. 

\begin{restatable}{theorem}{thmperturbations}\label{thm:perturbations} For edge augmented $H$ for different cases the following statements hold: 
\begin{enumerate}[label=Case \Roman*:, leftmargin=*]
    \item $\frac{ \left \vert  \lVert \hat{G}\rVert_n -  \lVert G \rVert_n \right \vert }{C_G } \leq \frac{\left \vert  \lVert \hat{H} \rVert_m -  \lVert {H} \rVert_m \right \vert }{C_H} \leq 1 \, , $
    \item $\lVert \hat{H} \rVert = \lVert H \rVert$ and $\lVert  \hat{G} \rVert  = \lVert  G \rVert \, ,  $
    \item $\left \vert  \lVert \hat{H} \rVert_m -  \lVert {H} \rVert_m \right \vert \leq  1$\, ,   $\left \vert  \lVert \hat{G}\rVert_n -  \lVert G \rVert_n \right \vert  \leq   2\, , $
    \item  $0 < C \leq \left \vert  \lVert \hat{H} \rVert_m -  \lVert {H} \rVert_m \right \vert \leq  2 \, , $
\end{enumerate}
where $C_G$ depends the graphs in the $G$ space and, $C_H$ and $C$ depends on graphs in the $H$ space. 
\end{restatable}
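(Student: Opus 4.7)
The plan is to treat the four cases separately, exploiting the structural descriptions already established in Theorem \ref{thm:DifferentCases} and Lemmas \ref{lemma:SpecialCaseWithEdgeAddition}--\ref{lemma:EdgeRelocationInHspace} together with three standard tools: (i) Cauchy eigenvalue interlacing (removing an edge or an isolated vertex cannot increase the largest eigenvalue); (ii) the bound of \cite{li2012bounds} Lemma 1 and Corollary 1 already invoked in Proposition \ref{prop:PseudoinverseCts}, namely that adding a single edge increases the spectral radius by at most $1$; and (iii) Theorem \ref{thm:adjacencyandincidence}, which identifies $\lambda_1(A(H))+2 = \lambda_1(B^T B) = \lambda_1(BB^T) = \lambda_1(A(G)+D)$ and so couples $H$-space shifts to $G$-space shifts via the common incidence matrix.

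Case II is immediate: $\hat{H} \cong H$ and $\hat{G} \cong G$ give identical spectra. For Case I, $\hat{H} \cong \widetilde{H} = H + e_1$, so tool (ii) yields $0 \leq \lVert \hat{H} \rVert_m - \lVert H \rVert_m \leq 1$, which is the right-hand bound with $C_H = 1$. On the $G$ side, Lemmas \ref{lemma:SpecialCaseWithEdgeAddition}--\ref{lemma:GeneralCaseWithEdgeAddition} give $\hat{G} \in \{\relocatee(G), \mergev(G)\}$, each a composition of one edge addition, one edge deletion, and at most one isolated-vertex deletion, so tools (i)--(ii) bound $|\lVert \hat{G}\rVert_n - \lVert G \rVert_n|$ absolutely. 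The multiplicative comparison is then obtained from (iii): the shift in $\lVert H \rVert_m$ equals the shift in $\lambda_1(BB^T) = \lambda_1(A(G)+D)$, and a Weyl-type estimate transports this shift to $\lambda_1(A(G))$ up to a factor that captures the change in the degree matrix $D$ between $G$ and $\hat{G}$; collecting these into $C_G$ and $C_H$ yields the stated inequality.

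For Case III, two successive applications of (ii) on the chain $H \to \widetilde{H} = H + e_1 \to \hat{H} = \widetilde{H} - e_2$ give $\lVert H\rVert_m - 1 \leq \lVert \hat{H}\rVert_m \leq \lVert H\rVert_m + 1$, that is $|\lVert \hat{H}\rVert_m - \lVert H\rVert_m| \leq 1$. For $\hat{G}$, Lemma \ref{lemma:EdgeRelocationInHspace} gives either $\relocatee(G)$ (bound $1$ by the single-edge argument) or $\mergev(G) + \splitv(G)$; the latter amounts to two edge add/delete pairs together with a matched isolated-vertex add/delete, yielding the bound $2$ by two invocations of the same argument. For Case IV, $\hat{H} = H + e_1 + e_2$ so two applications of (ii) give the upper bound $2$; the strictly positive lower bound $C > 0$ follows from the fact that $\hat{H} \ncong H$ together with Perron--Frobenius strict monotonicity of $\lambda_1$ under edge additions on an irreducible graph, and $C$ can be quantified in terms of the Perron eigenvector entries at the endpoints of $e_1, e_2$.

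The principal obstacle will be the multiplicative inequality of Case I. The identity $A(H) = B^T B - 2I$ cleanly transports shifts between $\lambda_1(A(H))$ and $\lambda_1(BB^T)$, but converting a shift of $\lambda_1(BB^T)$ into a shift of $\lambda_1(A(G)) = \lambda_1(BB^T - D)$ is delicate because the degree matrix $D$ changes between $G$ and $\hat{G}$, and in the $\mergev$ branch several degrees can change simultaneously. Pinning down an explicit $C_G$ that absorbs this perturbation in $D$ through a sharp Weyl-type bound is where I expect the technical heart of the proof to lie.
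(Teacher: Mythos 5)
Your treatment of Cases II, III and IV is essentially the paper's: Case II is immediate from the isomorphisms, Case III chains the single-edge bound twice on $H\to\widetilde{H}\to\hat{H}$ and handles $\hat{G}$ via $\relocatee(G)$ or $\mergev(G)+\splitv(G)$ exactly as in Lemmas \ref{lemma:FEdgeRelocationNorm}, \ref{lemma:FVertexMerge} and \ref{lemma:case3NormH}, and Case IV's two-sided bound $0<C\le\cdot\le 2$ is obtained in Lemma \ref{lemma:case4normH} precisely by adding the two Perron-eigenvector inequalities of \cite{li2012bounds}, as you propose. One small remark: for the vertex-deletion step inside $\mergev$ the paper uses the quantitative bound $(1-2x_i^2)\lambda_1(A(F_{12}))\le\lambda_1(A(F_2))\le\lambda_1(A(F_{12}))$ rather than bare interlacing, but for the upper bounds this makes no difference.

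The genuine gap is in Case I. You set $C_H=1$ and plan to obtain the multiplicative comparison by transporting spectral shifts through the identity $\lambda_1(A(H))+2=\lambda_1(B'B)=\lambda_1(A(G)+D)$ with a Weyl-type correction for the change in $D$. This cannot work as stated, and the difficulty is not where you locate it. An inequality of the form
\[
\frac{\left\vert \lVert \hat{G}\rVert_n - \lVert G\rVert_n\right\vert}{C_G} \;\le\; \frac{\left\vert \lVert \hat{H}\rVert_m - \lVert H\rVert_m\right\vert}{C_H}
\]
requires a \emph{strictly positive lower bound} on the $H$-space shift: if $C_H=1$ and $\lVert\hat{H}\rVert_m-\lVert H\rVert_m$ can be arbitrarily small while the $G$-space shift is not, the inequality fails, and no upper bound on either side can rescue it. The paper's mechanism (Lemmas \ref{lemma:Hnormbounded}, \ref{lemma:specialCaseGNormBounded}, \ref{lemma:GeneralCaseGNormBound}) is to take $C_H=C_1(H,\hat{H})=2w_iw_j>0$, the Perron-eigenvector lower bound of \cite{li2012bounds} for the single edge added in $H$-space, so that $1/\left\vert\lVert\hat{H}\rVert_m-\lVert H\rVert_m\right\vert\le 1/C_1(H,\hat{H})$, and then to multiply this by the upper bound $\left\vert\lVert\hat{G}\rVert_n-\lVert G\rVert_n\right\vert\le C(G,\hat{G},G_{int})=C_G$ obtained from the decomposition of $\relocatee$ or $\mergev$ into primary operations. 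No passage through the incidence matrix or through the degree matrix $D$ is needed, and the "technical heart" you anticipate (absorbing the perturbation of $D$) does not appear in the paper's argument at all. To repair your proposal, replace $C_H=1$ by the eigenvector product lower bound for the added edge and drop the incidence-matrix transport entirely.
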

\begin{proof}[Proof Sketch] For this proof we use results from \cite{li2012bounds}, that state if a graph $F_1$ is perturbed either by adding an edge or removing a vertex (and its adjacent edges), resulting in a graph $F_2$, then the difference in spectral radius is bounded. Consider edge addition. If $F_2 = F_1 + e$ where $e$ connects vertices $i$ and $j$ and  $\bm{x}$ and $\bm{w}$ are the normalized principal eigen vectors of $A(F_2)$ and $A(F_1)$, then they showed that
  \[
   0 <  2w_iw_j \leq \lambda_1(A(F_2)) - \lambda_1(A(F_1)) \leq 2 x_ix_j \leq 1 \, . 
  \]
 If $F_2$ is obtained by  removing vertex $i$ from $F_1$, they showed that 
  \[
  (1 - 2 x_i^2) \lambda_1\left(A(F_{1})\right) \leq \lambda_1\left( A(F_2) \right) \leq \lambda_1\left( A(F_{1}) \right) \, . 
  \]
These two results tell us that edge addition and vertex deletion (and hence addition) cannot change the spectral radius of a graph significantly. 
  
By combining these results from \cite{li2012bounds} we  bound the difference in spectral radius when edges relocate ($F_2 = \relocatee(F_1)$), vertices merge ($F_2 = \mergev(F_1)$) and split ($F_2 = \splitv(F_1)$). 

Consider merging two degree-1 vertices $i$ and $j$, where $j$ is connected to a vertex $k$.  The merging can be thought of as $k$ connecting to $i$ ($\Adde(F_1)$), followed by removing the edge connecting $j$ and $k$ ($\Dele(F_1)$) and finally deleting the vertex $j$ ($\Delv(F_1)$). Splitting can be thought of as the inverse operation and obtained by $\Addv(F_1) + \Adde(F_1) + \Dele(F_1)$. Edge relocation is simply edge addition and edge deletion $\Adde(F_1) + \Dele(F_1)$.  

For certain cases we get inequalities of the form
\begin{equation}\label{eq:upperandlowerbound}
    0 < C_1 \leq \left \vert  \lVert \hat{H} \rVert_m -  \lVert {H} \rVert_m \right \vert \leq C_2 \, , 
\end{equation}
where $C_1$ and $C_2$ are graph dependent constants giving us
\[
\frac{1}{ \left \vert  \lVert \hat{H} \rVert_m -  \lVert {H} \rVert_m \right \vert} \leq \frac{1}{C_1} \, . 
\]
Multiplying with inequalities of the form 
\[
0 \leq \left \vert  \lVert \hat{G} \rVert_m -  \lVert {G} \rVert_m \right \vert \leq C_3 \, , 
\]
we obtain ratios (Cases I and II). When an edge relocates we do not have a strictly positive lower bound $C_1 $ as in equation \eqref{eq:upperandlowerbound}. Thus for these cases we do not have ratios, but we still obtain certain bounds. 
\end{proof}

\section{Estimating a pseudo-inverse line graph}\label{sec:estimation}
To extract a line graph from its noisy version, we use the relationship between the adjacency matrix of a line graph and the incidence matrix of the original graph as described in Theorem \ref{thm:adjacencyandincidence} \citep{beineke2021line}. From equation \eqref{eq:incidenceandadjacency} we have
\[
B'B = A(L(G)) + 2I \, , 
\]
where $A = A(L(G))$ denotes the $m \times m$ adjacency matrix of line graph $L(G)$,  $B$ denotes the $n \times m$ incidence matrix of graph $G$ and $I$ denotes the identity matrix. 
Then the inverse line graph problem is to find an $n \times m$ matrix $B$ such
that
\[
B'B = A + 2I \, , 
\]
for a given $n$. When $A$ is not an adjacency matrix of a line graph, the above equality does not hold.  For matrices $A$ that cannot be
decomposed in this way, we wish to find the minimum number of entries
of $A$ that must be ``flipped'' in order to allow such a result. Thus, given a graph $\widetilde{H}$ that may not be a line graph, we find a line graph $\hat{H}$ by minimizing the number of ``flips'' in $A(\widetilde{H})$.

Related work was conducted by \cite{labbe2021finding} where they focus on the application haplotype phasing. They have a graph called the Clark consistency graph, for which line invertible is useful for finding the set of ancestors that could produce the observed genotypes. From the observed Clark consistency graph they remove the minimum number of edges to obtain a line graph using an integer linear programming formulation. One of the differences between their method and our method is that we consider edge additions as well as edge deletions. Furthermore, their formulation is based on a relationship between line graphs and the graph colouring problem. Ours is based on the relationship of line graphs to incidence matrices (equation \eqref{eq:incidenceandadjacency}). 

Let us use lower case letters to represent the elements of a matrix denoted by the upper case letter, i.e. $A = (a_{ij})_{1 \leq i,j \leq m}$. First we introduce an $m \times m$ matrix $Z = (z_{ij})_{1 \leq i,j \leq m}$ that has entries
\begin{eqnarray*}
a_{ij} = 0 & \rightarrow & z_{ij} = 1  \, , \\
a_{ij} = 1 & \rightarrow & z_{ij} = -1 \, . 
\end{eqnarray*}

This value of $z$ effectively ``flips'' the corresponding $A$
entry. 
We then use the decision matrix $X$ to select flips using the Hadamard product $X \circ Z$ for binary $m \times m$ matrix $X$, where $Y = X \circ Z$ means $y_{ij} = x_{ij}. z_{ij}$.

We then require 
\[
B'B = A + 2I + X \circ Z \, , 
\]
and minimise the number of non-zero elements of $X$. For non-zero elements of $X$, $A$ gets flipped because $a_{ij}$ is replaced with $a_{ij} + x_{ij}z_{ij}$. When $x_{ij} = 0$, $a_{ij}$ remains as it is.  If all entries of $X$ are zero, then $A$ is an adjacency matrix of a line graph.

This gives us problem {\bf P:}

\noindent minimise $\sum_{i,j} x_{ij}$ \\
subject to
\begin{eqnarray}
  B'B & = & A + 2I + X \circ Z \label{Pcons1} \\
  b_{ij} & \in & \{0,1\} \\
  x_{ij} & \in & \{0,1\}
\end{eqnarray}

Note that squared terms $b_{ik} . b_{kj}$ appear in constraint
\eqref{Pcons1} in the product $B'B$, and so {\bf P} cannot be solved
using linear programming. However, we can linearise these elements as
follows.

Entry $(i,j)$ of the product $B'B$ is $\sum_k b_{ik} . b_{kj}$.
Let us define ``product'' variables $p^k_{ij} = b_{ik} . b_{kj}$. Now
$p^k_{ij}$ can only be 0 or 1, and is only 1 when both $b_{ik}$ and
$b_{kj}$ are 1. We can therefore constrain $P$ so that
\begin{eqnarray}
  b_{ik} + b_{kj} & \ge & 2 p^k_{ij}  \label{prodcon1} \\
  b_{ik} + b_{kj} & \le & 1 + p^k_{ij} \label{prodcon2}
\end{eqnarray}
It is clear that $p^k_{ij} = 1$ only when both $b_{ik}$ and $b_{kj}$ are 1. This case satisfies both equations \eqref{prodcon1} and \eqref{prodcon2}. Similarly, $p^k_{ij} = 0$ when either or both of $b_{ik}$ and $b_{kj}$ are 0. This case also satisfies both equations \eqref{prodcon1} and \eqref{prodcon2}.  


We can now formulate problem {\bf ILP:}

\noindent
minimise $\sum_{i,j} x_{ij}$ \\
subject to
\begin{eqnarray}
  \sum_k p^k_{ij} & = & a_{ij} + 2\delta_{ij} + x_{ij} . z_{ij}
  \;\;\; \forall i,j  \label{LPcons1}\\
  b_{ik} + b_{kj} & \ge & 2 p^k_{ij}  \;\;\; \forall i,j,k   \\
  b_{ik} + b_{kj} & \le & 1 + p^k_{ij} \;\;\;  \forall i,j,k   \\
  b_{ij} & \in & \{0,1\} \;\;\;  \forall i,j   \\
  x_{ij} & \in & \{0,1\} \;\;\;  \forall i,j   \\
  p^k_{ij} & \in & \{0,1\} \;\;\;  \forall i,j  
\end{eqnarray}
where $\delta_{ij}$ is the kronecker delta, $\delta_{ij} = 1$ if $i =
j, 0$ otherwise.



The input to the program is the adjacency matrix $A(\widetilde{H})$. As output we get matrices $X\circ Z$ and $B$ where $X\circ Z$ contains the flipped entries of the adjacency matrix $A(\widetilde{H})$ and the matrix $B$ is the incidence matrix of the inverse line graph {$\hat{G}$}. If { $\widetilde{H}$} is a line graph then $X\circ Z = \bm{0}$. We solve problem {\bf ILP} using the Gurobi linear programming solver
(\cite{gurobi}), using default parameters. Our solver is written in C++. Tests were carried out on a machine with 12 64-bit Intel i7-1365U cores and 12Mb of cache (5376 bogomips).

\section{Experiments}\label{sec:experiments}

\subsection{Synthetic examples}
We perform synthetic experiments on 3 graph types: (1) Erdős–Rényi (GNP) (2) preferential attachment (PA) \citep{barabasi1999emergence} and (3) small world graphs (SW) \citep{watts1998collective}.  For each graph type we generate 1000 line graphs $H$ of which 500 graphs are obtained by randomly adding 1 edge to $H$, i.e.,  $\tilde{H} = H + e$ and the other 500 graphs are obtained by randomly adding 5 edges to $H$, i.e.,  $\tilde{H} = H + 5e$. Table \ref{tab:freq-cases} gives the results of these experiments. For all 3 graph types, when 5 edges are added the proportion of graphs satisfying $\hat{H} \cong \widetilde{H}$ reduces drastically, as expected. Furthermore, deleting edges is more prevalent than adding edges. 
 The time taken for different experiments is shown in Figure \ref{fig:timeTaken}.

\begin{table}[t]
    \centering
    \begin{tabular}{c|ccc}
        \toprule
        & $\hat{H} \cong  \widetilde{H}$ & $\Dele(\widetilde{H})$ & $\Adde(\widetilde{H})$\\
        \midrule
        GNP-1  & 0.21 &  0.77 & 0.02\\
        GNP-5 & 0 &  1 & 0.04 \\
        PA-1 & 0.60 &  0.30 & 0.09\\
        PA-5 & 0.06 & 0.85 & 0.23\\
        SW-1 & 0.33 & 0.67 & 0 \\
        SW-5 & 0.01 & 0.99 & 0\\
        \bottomrule
    \end{tabular}
    \caption{The frequency of different edits to recover a line graph from Erd\H{o}s-Reny\'{i} (GNP), preferential attachment (PA) and small world (SW) graphs. $\widetilde{H} = H + e$ is denoted by XX-1 and $\widetilde{H} = H + 5e$ is denoted by XX-5. } 
    \label{tab:freq-cases}
\end{table}

\begin{figure}
    \centering
    \includegraphics[width=0.9\linewidth]{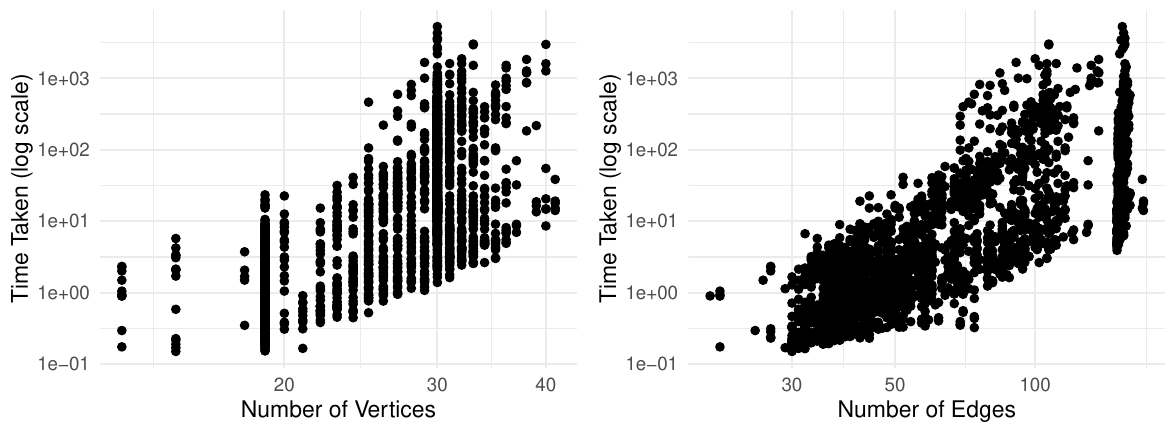}
    \caption{Time taken to find a pseudo-inverse line graph against the number of vertices  and the number of edges. \cheng{don't forget the units (seconds?), I suggest in the y axis label.} }
    \label{fig:timeTaken}
\end{figure}

\subsection{Estimating haplotype population size}
Given a genotype matrix $A$ we construct the Clark Consistency graph $\widetilde{H}$ and find a pseudo-inverse $\hat{G}$ (details in Appendix \ref{sec:AppExperiments}). The number of nodes in $|V(\hat{G})|$ is an  estimate for the size of the ancestor population. We use two 100 sample genotype datasets -- one with $10$ genotypes and the other with $15$ -- from the dataset provided by \cite{ferdosi2014hsphase} to test our method for population estimation. As edge additions are not valid in haplotype phasing, we remove the resulting psuedo-inverses that had edge additions, giving us 29 pseudo-inverses for the 10 genotoype datasets and 79 pseudo-inverses for the 15 genotype datasets. We validate our results using upper and lower bounds estimates discussed in \cite{ferdosi2013effect}. The lower bound estimate is the size of the paternal haplotype population, which indeed is a subset. The upper bound estimate is the size of the haplotype population when both parents are taken into account. This is an upper bound because some parents' alleles are not present in their offspring. 

Figure \ref{fig:haplotypeExample} shows the results with the red curves showing the upper bounds, the blue curves showing the lower bounds and the green curves showing the estimate from the pseudo-inverse method. We see the population estimate from the pseudo-inverse is within the upper and lower bounds.

\begin{figure}[t]
    \centering
    \includegraphics[width=0.98\linewidth]{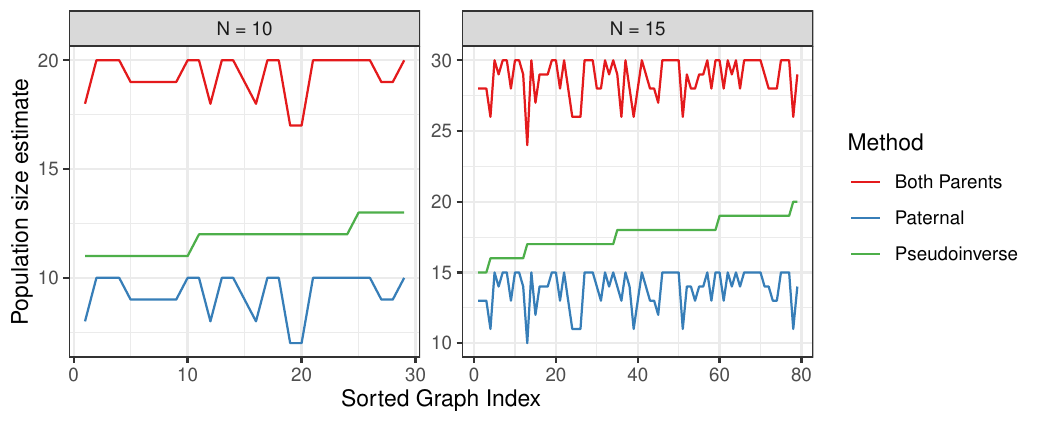}
    \caption{The upper and lower bounds of the parent population size  using methods in \cite{ferdosi2013effect} along with the pseudoinverse estimate of the population size. } 
    \label{fig:haplotypeExample}
\end{figure}
\section{Conclusion}\label{sec:conclusion}

We present a pseudo-inverse of a line graph extending the inverse line graph operation to non-line graphs. Limiting our attention to graphs that are obtained by adding an edge to a line graph, we explore the properties of such a pseudo-inverse.  Using the spectral radius as the graph norm we obtain bounds for the norm of such a pseudo-inverse and show that single edge additions in the line graph space result in small changes to the norm of pseudo-inverses. Furthermore, we propose an integer linear program that finds such a pseudo-inverse minimizing edge additions and deletions. We test our program by estimating the parent population size in genotype data and validate our results.

\bibliographystyle{agsm}
\bibliography{references}

\clearpage
\appendix



\section{Proof of Theorem \ref{thm:DifferentCases}} 

The proof structure of  Theorem \ref{thm:DifferentCases} is given in Figure \ref{fig:Thm3ProofTree}. Of Cases, I, II and III, Case III has many subparts and these are explored in Sections starting with Case III(a), Case III(b) and Case III. 

\begin{figure*}[!ht]
    \centering
\resizebox{\linewidth}{!}{%
\begin{tikzpicture}[node distance=3cm]
\Tree [.{Theorem \ref{thm:DifferentCases}} 
         [.{Case I} 
           [.{Lemma \ref{lemma:case1}} ]
           [.{Lemma \ref{lemma:case1AboutG}} 
             {Lemmas \ref{lemma:SpecialCaseWithEdgeAddition} and \ref{lemma:GeneralCaseWithEdgeAddition}}
           ]
         ]
         [.{Case II} 
            {Lemma \ref{lemma:case2}}
         ]  
         [.{Case III} 
            [.{Lemma \ref{lemma:case3}} ]
            [.{Lemma \ref{lemma:case3second}} 
                {Lemmas in Sections Case III(a), Case III(b) and Case III}
            ]
         ]  
         [.{Case IV}  
            {Lemma \ref{lemma:case4}}
         ]  
      ]  
\end{tikzpicture}   
}
\caption{Proof structure diagram for Theorem \ref{thm:DifferentCases}}
\label{fig:Thm3ProofTree}
\end{figure*}
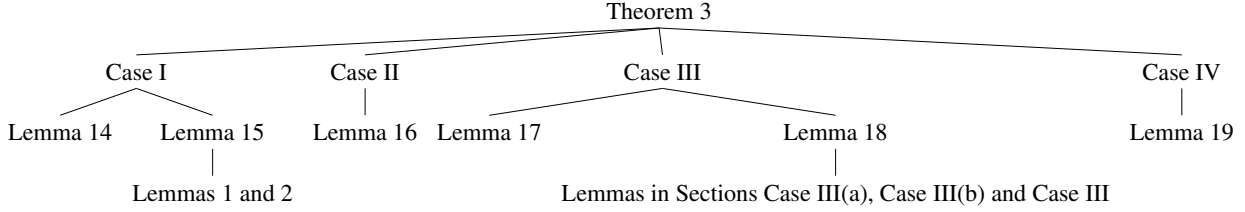


\subsection{Case I: edge addition in $H$ space}\label{sec:EdgeAddInH}

First we discuss Case I, which comprises Lemmas \ref{lemma:SpecialCaseWithEdgeAddition} and \ref{lemma:GeneralCaseWithEdgeAddition}.

\lemmaSpecialCaseWithEdgeAddition*
\begin{proof}
Figure \ref{fig:SpecialCase} shows snippets of the graphs $H_1$, $H_2$,  $G_1$ and $G_2$. The vertices $a$ and $b$ in $H_1$ or $H_2$ can be connected to other vertices, but we are not concerned about those edges. These other possible edges are shown in dashed lines.  

Graph $G_1$, which is the inverse line graph of $H_1$ has edges $a$, $b$, both connected to $c$ but not connected to each other. As vertices $a$ and $b$ are connected in $H_2$,  in $G_2$ edges $a$ and $b$ need to be connected, i.e., they need to share a vertex. This can happen only when edge $b$ detaches itself from the shared vertex with edge $c$ and attaches to the other vertex of edge $c$, which is shared with edge $a$. This is illustrated in Figure \ref{fig:SpecialCase}. This arrangement makes $H_2 = L(G_2)$ with $G_2 = \relocatee(G_1)$.

     
     \begin{figure}[!ht]
         \centering
         \includegraphics[width=0.8\linewidth]{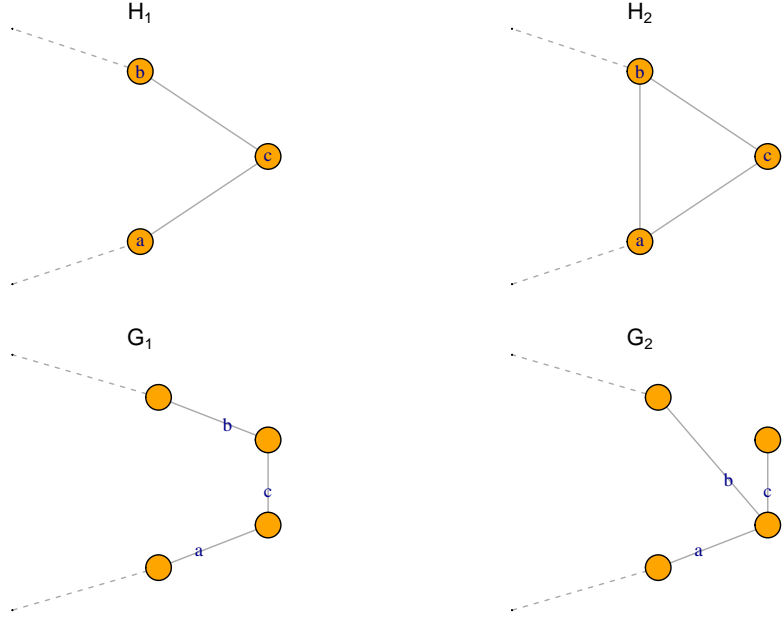}
         \caption{Line graphs $H_1$ and $H_2$, and their inverse line graphs $G_1$ and $G_2$ in the  \textit{triangle closing } scenario. }
         \label{fig:SpecialCase}
     \end{figure}

\end{proof}

\lemmaGeneralCaseWithEdgeAddition*
\begin{proof}
    
    An edge connects two vertices.  Suppose the additional edge in line graph $H_2$ connects vertices $a$ and $b$. These vertices are not connected in $H_1$. Vertices  $a$ and $b$  in $H_1$ and $H_2$ correspond to edges in graphs $G_1$ and $G_2$. In $G_1$ the edges  $a$ and $b$ do not share a vertex as $a$ and $b$ are not connected in $H_1$, but in $G_2$ the edges $a$ and $b$ share a vertex. Apart from the triangle closing (Lemma \ref{lemma:SpecialCaseWithEdgeAddition}) we argue that this can only happen in the following way. 
    
     Suppose in $G_1$ the edges $a$ and $b$ each have a degree-1 vertex. Then these two degree-1 vertices in $G_1$ can merge and become one vertex in $G_2$ making $a$ and $b$ connected in $H_2$. This is shown in Figure \ref{fig:GeneralCase}. 
        \begin{figure}[t]
         \centering
         \centering
        \includegraphics[width=0.8\linewidth, trim=0 5 0 0.5, clip]{Figures/Two_degree_1_vertices_merge_before_and_after.pdf}
        \caption{Graph $G_1$ on left with edges $a$ and $b$ not sharing a vertex and graph $G_2$ on the right with edges $a$ and $b$ sharing a  vertex. Possible edges shown in dashed lines. }
        \label{fig:GeneralCase}
     \end{figure}         
    
   In the current scenario, the two merging vertices in $G_1$ have degree 1. In the triangle closing case, the two vertices that are incident to edge $c$ in $G_1$  had degree 2 and had a common edge $c$.   Suppose there is another scenario in addition to these two scenarios where $a$ and $b$ are connected in $H_2$ but not in $H_1$. Such a scenario needs to consider at least one of the merging vertices in $G_1$ having degree 2 or higher with no common edges with the other merging vertex.  A simple example is shown in Figure \ref{fig:ImpossibleScenario}.  However, if the vertices merge as shown in  Figure \ref{fig:ImpossibleScenario}, we see that not only $a$ and $b$ share a vertex (are connected in $H_2$), but $a$ and $c$ share the same vertex as well. This results in 2 edges being added to $H_2$ ($ab$ and $ac$) compared to $H_1$, which is a contradiction. Therefore, if $H_2$ has only 1 extra edge compared to $H_1$, and it is not the triangle closing (Lemma \ref{lemma:SpecialCaseWithEdgeAddition}), then it is by joining two degree-1 vertices in $G_1$ to obtain $G_2$.

   \begin{figure}[!ht]
         \centering
         \centering
        \includegraphics[width=0.8\linewidth]{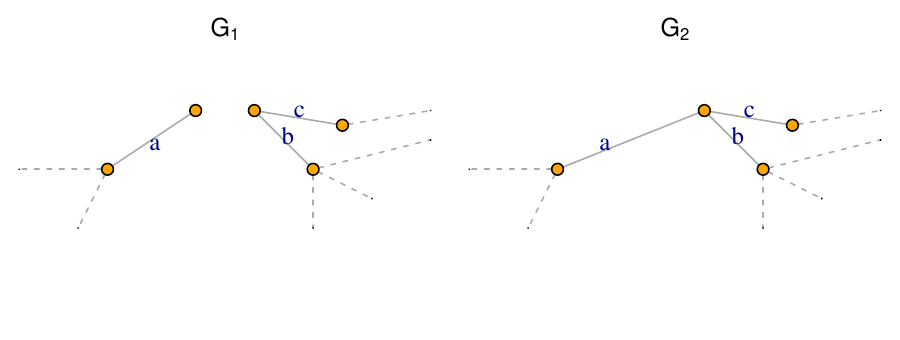}
        \hfill
        \caption{Graph $G_1$ on left with edges $a$ and $b$ not sharing a vertex and graph $G_2$ on the right with edges $a$ and $b$ sharing a  vertex. Possible edges shown in dashed lines. }
        \label{fig:ImpossibleScenario}
     \end{figure}  
\end{proof}

\subsection{Case III (a): edge addition and deletion in $G$ space}\label{sec:EdgeAddInG}

\begin{lemma}\label{lemma:EdgeAddition} Let $G_1$, $G_2$ be graphs such that $G_2 = \Adde(G_1)$, i.e., an edge is added to $G_1$ to form $G_2$. Suppose the edge is added to vertices $u$ and $v$ in $G_1$. Let $H_1 = L(G_1)$ and $H_2 = L(G_2)$ be their line graphs. Then  
\begin{align}
    |V(H_2)| & = V(H_1)| + 1   \quad   \text{and} \\
    |E(H_2)| & =  |E(H_1)| + \deg_{G_1}u + \deg_{G_1}v \, , 
\end{align}
where $\deg_{G_1}u $ and $ \deg_{G_1}v$ refer to the degrees of vertices $u$ and $v$ in $G_1$.
\end{lemma}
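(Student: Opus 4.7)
The plan is to prove the two equalities separately, using only the definition of the line graph together with the fact that $G_2 = G_1 + e$ where $e = uv$ is a new edge (in particular, no edge between $u$ and $v$ already exists in $G_1$, since we are working with simple graphs).

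First I would handle the vertex count. By definition of the line graph we have $V(L(G)) = E(G)$, so $|V(H_i)| = |E(G_i)|$ for $i=1,2$. Since $G_2$ is obtained from $G_1$ by a single $\Adde$ operation, $|E(G_2)| = |E(G_1)| + 1$, and the first equation follows immediately.

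For the edge count I would partition $E(H_2)$ into two disjoint pieces: (i) edges of $H_2$ whose two endpoints both correspond to edges of $G_1$ (i.e., old vertices of $H_1$), and (ii) edges of $H_2$ incident to the new vertex of $H_2$ corresponding to $e$. Piece (i) coincides exactly with $E(H_1)$, since the adjacency in a line graph depends only on the incidence relation among edges of the base graph, and adding $e$ does not alter which pairs in $E(G_1)$ share a vertex. Piece (ii) consists of pairs $\{e, f\}$ with $f \in E(G_1)$ sharing an endpoint with $e$; equivalently, $f$ is incident to $u$ or to $v$ in $G_1$. There are $\deg_{G_1} u$ edges of $G_1$ incident to $u$ and $\deg_{G_1} v$ incident to $v$.

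The only thing to check is that these two collections of edges are disjoint, which is where the simple-graph hypothesis is used: any edge of $G_1$ incident to both $u$ and $v$ would be a $uv$-edge of $G_1$, contradicting the fact that $e = uv$ is a newly added edge. Hence $|E(H_2)| = |E(H_1)| + \deg_{G_1} u + \deg_{G_1} v$. I expect no real obstacle beyond this small disjointness observation; the proof is essentially a direct counting exercise from the line graph definition.
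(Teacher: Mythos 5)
Your proof is correct and follows essentially the same counting argument as the paper: one new line-graph vertex for the new edge, and $\deg_{G_1}u + \deg_{G_1}v$ new adjacencies for it. Your explicit check that the edges incident to $u$ and those incident to $v$ are disjoint (because no $uv$-edge exists in $G_1$ before the addition) is a small point the paper leaves implicit, but it does not change the approach.
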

\begin{proof}
    As edges of a graph are mapped to vertices to form its line graph a new edge present in $G_2$ increases the number of vertices in $H_2$ by 1 compared to $H_1$. 
    
    Let us call the new edge in $G_2$ as $e_1$.  As $e_1$ connects vertices $u$ and $v$, $e_1$ is connected to all edges incident with $u$ as well as connected to all edges incident with $v$. The edges incident to $u$ in $G_1$ form $\deg_{G_1} u$ number of vertices in $H_1$. As $e_1$ forms a new vertex in $H_2$, this vertex is now connected to all the $\deg_{G_1} u$ vertices in $H_2$. This increases the number of edges in $H_2$ compared to $H_1$ by $\deg_{G_1}u$. Similarly when we consider vertex $v$ we get
    \[
     |E(H_2)|  =  |E(H_1)| + \deg_{G_1}u + \deg_{G_1}v \, .
    \]
\end{proof}

\begin{lemma}\label{lemma:EdgeDeletion} Let $G_1$, $G_2$ be graphs such that $G_2 = \Dele(G_1)$, i.e., an edge is deleted from $G_1$ to form $G_2$. Suppose the edge is deleted from vertices $u$ and $v$ in $G_1$. Let $H_1 = L(G_1)$ and $H_2 = L(G_2)$ be their line graphs. Then  
\begin{align}
    |V(H_2)| & = V(H_1)| - 1   \quad   \text{and} \\
    |E(H_2)| & =  |E(H_1)| - \deg_{G_1}u - \deg_{G_1}v + 2 \, , 
\end{align}
where $\deg_{G_1}u $ and $ \deg_{G_1}v$ refer to the degrees of vertices $u$ and $v$ in $G_1$.
\end{lemma}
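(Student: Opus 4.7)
The plan is to obtain Lemma \ref{lemma:EdgeDeletion} as a direct corollary of Lemma \ref{lemma:EdgeAddition}, exploiting the fact that edge deletion is the exact inverse of edge addition. Concretely, if $G_2 = \Dele(G_1)$ via removal of the edge $uv$, then equivalently $G_1 = \Adde(G_2)$, where the added edge connects the same pair of vertices $u$ and $v$ now viewed inside $G_2$. The key bookkeeping observation is that the degrees shift: $\deg_{G_2}u = \deg_{G_1}u - 1$ and $\deg_{G_2}v = \deg_{G_1}v - 1$, since the removed edge contributed $1$ to each of $\deg_{G_1}u$ and $\deg_{G_1}v$.

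First I would apply the vertex count from Lemma \ref{lemma:EdgeAddition} to the pair $(G_2, G_1)$, giving $|V(H_1)| = |V(H_2)| + 1$, which rearranges to $|V(H_2)| = |V(H_1)| - 1$. Then I would apply the edge count, obtaining
\[
|E(H_1)| \;=\; |E(H_2)| + \deg_{G_2}u + \deg_{G_2}v \;=\; |E(H_2)| + (\deg_{G_1}u - 1) + (\deg_{G_1}v - 1),
\]
and rearrange to $|E(H_2)| = |E(H_1)| - \deg_{G_1}u - \deg_{G_1}v + 2$, as required.

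Alternatively, one can argue directly without invoking Lemma \ref{lemma:EdgeAddition}: the vertex of $H_1$ corresponding to the deleted edge $e = uv$ is adjacent in $H_1$ to exactly the $(\deg_{G_1}u - 1)$ other edges incident to $u$ and the $(\deg_{G_1}v - 1)$ other edges incident to $v$, so it has degree $\deg_{G_1}u + \deg_{G_1}v - 2$. Removing $e$ from $G_1$ corresponds to deleting this vertex from $H_1$ along with all of its incident edges, which removes exactly $\deg_{G_1}u + \deg_{G_1}v - 2$ edges, yielding the claimed identity.

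I do not anticipate a serious obstacle; the only subtlety is making sure the two sets of edges incident to $u$ and $v$ (other than $e$ itself) are disjoint, so that no double counting occurs. In a simple graph this holds because no other edge can join $u$ and $v$, and the $-1$ correction in each degree already accounts for $e$. This mild case check is the one place where the calculation could be mis-stated, but it is handled cleanly by the reduction to Lemma \ref{lemma:EdgeAddition}, which I would therefore favor as the main argument.
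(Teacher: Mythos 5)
Your main argument --- viewing $G_1 = \Adde(G_2)$, applying Lemma \ref{lemma:EdgeAddition}, and correcting the degrees via $\deg_{G_2}u = \deg_{G_1}u - 1$ and $\deg_{G_2}v = \deg_{G_1}v - 1$ --- is exactly the paper's proof. The proposal is correct and takes essentially the same approach.
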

\begin{proof}
    This is the same as adding an edge to $G_2$ to obtain $G_1$. Then from Lemma \ref{lemma:EdgeAddition} 
    \begin{align}
    |V(H_1)| & = V(H_2)| + 1   \quad   \text{and} \\
    |E(H_1)| & =  |E(H_2)| + \deg_{G_2}u + \deg_{G_2}v \, .
\end{align}
As the edge is removed from $G_1$ to obtain $G_2$, $\deg_{G_2}u = \deg_{G_1}u - 1$ and  $\deg_{G_2}v = \deg_{G_1}v - 1$ giving the result. 
\end{proof}

\begin{lemma}\label{lemma:EdgeAddAndDelete}Let $G_1$, $G_2$ be graphs such that $G_2 = \relocatee(G_1)$. That is, an edge has relocated in $G_1$ to form $G_2$. Suppose the new edge in $G_2$ connects vertices $u$ and $v$ and the deleted edge in $G_1$ connected vertices $a$ and $b$. Let $H_1 = L(G_1)$ and $H_2 = L(G_2)$ be their line graphs. Then  
\begin{align}
    |V(H_2)| & = V(H_1)|    \quad   \text{and} \\
    |E(H_2)| & =  |E(H_1)| + \deg_{G_1}u + \deg_{G_1}v  \\
    & \quad - \deg_{G_1}a - \deg_{G_1}b + 2  \, .
\end{align}
If $u = a$, then it results in $H_2$ having $\deg_{G_1}v$ new edges and deleting  $\deg_{G_1}b - 1$ edges.  
\end{lemma}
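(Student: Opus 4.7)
The plan is to decompose the edge relocation into two primary operations and chain together the immediately preceding two lemmas. Since $G_2 = \relocatee(G_1) = \Adde(G_1) + \Dele(G_1)$, let $G'$ denote the intermediate graph obtained from $G_1$ by first adding the edge $uv$. Then $G_2$ is obtained from $G'$ by deleting the edge $ab$. The line-graph vertex and edge counts are additive along this chain, so the entire claim reduces to careful degree bookkeeping in $G_1$ versus $G'$.

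First I would apply Lemma \ref{lemma:EdgeAddition} to $G_1 \to G'$ to obtain $|V(L(G'))| = |V(H_1)| + 1$ and $|E(L(G'))| = |E(H_1)| + \deg_{G_1} u + \deg_{G_1} v$. Next I would apply Lemma \ref{lemma:EdgeDeletion} to $G' \to G_2$ to obtain $|V(H_2)| = |V(L(G'))| - 1 = |V(H_1)|$ and $|E(H_2)| = |E(L(G'))| - \deg_{G'} a - \deg_{G'} b + 2$. In the generic situation $\{u,v\} \cap \{a,b\} = \emptyset$, the intermediate degrees satisfy $\deg_{G'} a = \deg_{G_1} a$ and $\deg_{G'} b = \deg_{G_1} b$, so substitution yields the stated main formula.

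The main obstacle is the overlapping case $u = a$, where the relocated edge shares an endpoint with the deleted one, because then $\deg_{G'} a = \deg_{G_1} a + 1$, which throws the naive chain calculation off by one. Rather than rely on the chained substitution in this case, I would count new and deleted line-graph edges directly by inspecting the neighborhood of the single affected vertex, which represents $ab$ in $H_1$ and $av$ in $H_2$ (the vertex count is unchanged since $|V(H_2)| = |V(H_1)|$). Its $H_1$-neighbourhood splits into $\deg_{G_1} a - 1$ edges through $a$ and $\deg_{G_1} b - 1$ edges through $b$. After relocation, $\deg_{G_2} a = \deg_{G_1} a$ (lose $ab$, gain $av$) and $\deg_{G_2} v = \deg_{G_1} v + 1$, so the $H_2$-neighbourhood consists of $\deg_{G_1} a - 1$ edges through $a$ (exactly the same set as before, because the other incidences at $a$ are untouched) together with $\deg_{G_1} v$ edges through $v$.

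Comparing the two neighbourhoods, the edges through $a$ cancel out, leaving $\deg_{G_1} v$ genuinely new edges in $H_2$ and $\deg_{G_1} b - 1$ genuinely deleted edges from $H_1$, which is precisely the final assertion. As a sanity check, the net change $\deg_{G_1} v - (\deg_{G_1} b - 1) = \deg_{G_1} v - \deg_{G_1} b + 1$ agrees with the corrected chained formula in this overlapping case, reassuring consistency with Lemmas \ref{lemma:EdgeAddition} and \ref{lemma:EdgeDeletion}. The symmetric overlaps ($u = b$, $v = a$, or $v = b$) follow identically by relabeling, so no additional argument is required.
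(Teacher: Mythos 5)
Your proposal is correct and follows essentially the same route as the paper: chain Lemma \ref{lemma:EdgeAddition} (for the added edge $uv$) with Lemma \ref{lemma:EdgeDeletion} (for the deleted edge $ab$) through an intermediate graph to get the vertex and edge counts, and then handle the $u=a$ case by directly comparing the line-graph neighbourhoods of the relocated edge. Where you go beyond the paper is in the degree bookkeeping at the intermediate graph: you correctly observe that the displayed formula
\[
|E(H_2)| = |E(H_1)| + \deg_{G_1}u + \deg_{G_1}v - \deg_{G_1}a - \deg_{G_1}b + 2
\]
is only the result of the naive chaining when $\{u,v\}\cap\{a,b\}=\emptyset$, since when $u=a$ one has $\deg_{G'}a = \deg_{G_1}a + 1$ in the intermediate graph and the correct net change is $\deg_{G_1}v - \deg_{G_1}b + 1$, not $\deg_{G_1}v - \deg_{G_1}b + 2$. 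Your sanity check thus exposes a genuine off-by-one inconsistency between the lemma's main formula (specialised to $u=a$) and its final sentence, which the paper's one-line proof (``combining lemmas'') does not reconcile. Your direct count of $\deg_{G_1}v$ new edges and $\deg_{G_1}b-1$ deleted edges in the overlap case matches the paper's argument and is the correct reading; the main formula should be understood as holding only for disjoint endpoint sets, and your treatment makes that caveat explicit.
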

\begin{proof}
    We get the two equations by combining lemmas \ref{lemma:EdgeAddition} and \ref{lemma:EdgeDeletion}.  If $u = a$, then an edge $e$ has switched from vertex $b$ to $v$. In this case, it adds $\deg_{G_1} v$ edges from Lemma \ref{lemma:EdgeAddition}. As $e$ is no longer incident to vertex $b$, it is no longer connected to the other $\deg_{G_1} b - 1$ edges incident to $b$. This it removes $\deg_{G_1} b - 1$ edges from $H_1$ to obtain $H_2$.
\end{proof}

\begin{lemma}\label{lemma:EdgeSwapGLem1} Let $G_1$, $G_2$ be graphs such that and edge $e$ connecting vertices $a$ and $b$ has switched from vertex $b$ to $v$ to form $G_2$. Suppose the resulting line graphs $H_1 = L(G_1)$ and $H_2 = L(G_2)$ differ by an edge relocation, i.e., $H_2 = \relocatee(H_1)$. Then, $\deg_{G_1} v = 1$ and $\deg_{G_1} b = 2$.
\end{lemma}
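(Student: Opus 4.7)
The plan is to reduce this statement directly to the $u=a$ case of Lemma~\ref{lemma:EdgeAddAndDelete}. The hypothesis — an edge $e$ that switches from connecting $a,b$ to connecting $a,v$ — is precisely the configuration in that lemma where the two endpoint pairs share the vertex $a$. Applying Lemma~\ref{lemma:EdgeAddAndDelete} then tells us that, as one passes from $H_1$ to $H_2$, exactly $\deg_{G_1} v$ new edges appear (the new adjacencies of the moved vertex $e$ with the edges incident to $v$ in $G_1$) and exactly $\deg_{G_1} b - 1$ edges are lost (the old adjacencies of $e$ with the other edges formerly incident to $b$).

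Next I would invoke Definition~\ref{def:RelocateEdge}: the assumption $H_2 = \relocatee(H_1)$ means $H_2$ differs from $H_1$ by exactly one edge addition and one edge deletion. Matching these required counts to those supplied by Lemma~\ref{lemma:EdgeAddAndDelete} forces
$$
\deg_{G_1} v = 1 \qquad \text{and} \qquad \deg_{G_1} b - 1 = 1,
$$
which is exactly the conclusion of the lemma.

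The only step that is not automatic, and the main obstacle, is checking that the two edge counts from Lemma~\ref{lemma:EdgeAddAndDelete} really are the exact numbers of added/removed edges in $H$, without collapse due to coincident adjacencies. A coincidence would require an edge $f$ in $G_1$ incident simultaneously to $v$ and to $b$, which could in principle reduce both counts by one. I would dispose of this by arguing that in any such coincidence scenario the adjacency $(e,f)$ in $H$ would be preserved rather than flipped, so that the only way to have exactly one new and one removed edge in $H$ is the non-coincident case, forcing $\deg_{G_1} v = 1$ and $\deg_{G_1} b = 2$ as the unique solution. A brief picture showing that no other integer choice of degrees balances the two counts to a single addition and single deletion completes the argument.
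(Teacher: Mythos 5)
Your proposal follows essentially the same route as the paper's proof: apply the $u=a$ case of Lemma~\ref{lemma:EdgeAddAndDelete} to get $\deg_{G_1} v$ added and $\deg_{G_1} b - 1$ deleted edges in the line graph, then match these to the single addition and single deletion required by $H_2=\relocatee(H_1)$. Your extra check that a common edge $f$ incident to both $b$ and $v$ would preserve rather than flip the adjacency $(e,f)$ is a point the paper's proof silently skips, and it is a worthwhile refinement rather than a departure.
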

\begin{proof}
    From Lemma \ref{lemma:EdgeAddAndDelete} an edge relocation with one vertex change from vertex $b$ to $v$ adds $\deg_{G_1} v$ edges and deletes $\deg_{G_1} b - 1$ edges from $H_1$ to $H_2$.  As only one edge is added $\deg_{G_1} v = 1$. Similarly, as only one edge is deleted we have  $\deg_{G_1} b - 1 = 1$ giving the result. 
\end{proof}

\SK{This following Lemma needs to be scrutinized and updated.}
\begin{lemma}\label{lemma:EdgeSwapGLem2} Let $G_1$, $G_2$ be  graphs such that an edge $e$ connecting vertices $a$ and $b$ in $G_1$ has relocated to vertices $u$ and $v$ to form $G_2$ where $u$ and $v$ are different vertices from $a$ and $b$. Suppose the resulting line graphs $H_1 = L(G_1)$ and $H_2 = L(G_2)$ differ by an edge relocation, i.e., $H_2 = \Adde(H_1) + \Dele(H_1)$. Then, vertices $a$ and $b$ in $G_1$ have degrees 2 and 3, and vertices $u$ and $v$ in $G_1$ have degrees 2 and 1. Furthermore, $u$ and $v$ are neighbours of $a$.
\end{lemma}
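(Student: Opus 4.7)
The plan is to extract all structural constraints from the single-edge-relocation hypothesis in $H$-space by tracking how the neighbourhood of the vertex of $H$ corresponding to the moved edge changes between $H_1$ and $H_2$, and then reading off the admissible degree profiles in $G_1$.

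First I would apply Lemma \ref{lemma:EdgeAddAndDelete}: since $H_2 = \relocatee(H_1)$ preserves the edge count, the formula $|E(H_2)| - |E(H_1)| = \deg_{G_1}u + \deg_{G_1}v - \deg_{G_1}a - \deg_{G_1}b + 2 = 0$ gives the balance $\deg_{G_1}a + \deg_{G_1}b = \deg_{G_1}u + \deg_{G_1}v + 2$. Next, because $u, v \notin \{a, b\}$, every pair of edges in $E(G_1)\setminus\{e\}$ shares a vertex in $G_1$ iff it shares one in $G_2$, so the only vertex of $H$ whose neighbourhood changes is the one corresponding to $e$; call it $\hat{e}$, with neighbourhoods $N_i$ in $H_i$. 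Then $N_1$ is the set of non-$e$ edges of $G_1$ incident to $\{a,b\}$ and $N_2$ is the set of those incident to $\{u,v\}$, giving for simple graphs $|N_1| = \deg_{G_1}a + \deg_{G_1}b - 2$ and $|N_2| = \deg_{G_1}u + \deg_{G_1}v$; their intersection size $k$ counts edges of $G_1$ with one endpoint in $\{a,b\}$ and the other in $\{u,v\}$. A genuine single-edge relocation in $H$ forces $|N_1| - k = |N_2| - k = 1$, yielding the system $\deg_{G_1}a + \deg_{G_1}b = k + 3$ and $\deg_{G_1}u + \deg_{G_1}v = k + 1$.

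The remainder of the proof pins down $k = 2$ and the specific split. For $k \geq 3$ the cross-edges would demand $\deg_{G_1}u + \deg_{G_1}v \geq 4$ distributed over only two vertex-incidences, which a short check rules out. For $k \in \{0, 1\}$ I would enumerate all degree sequences compatible with the system and show that either a vertex of $\{a, b\}$ becomes isolated in $G_2$ (excluded by the standing convention that the graphs in question support a meaningful line-graph operation), or $G_1$ is a disjoint union of small paths fixed up to relabelling by the relocation, yielding $G_2 \cong G_1$ and hence Case II rather than Case III of Theorem \ref{thm:DifferentCases}. With $k = 2$, the split $\{1, 4\}$ of $\deg_{G_1}a + \deg_{G_1}b = 5$ again isolates a vertex, leaving $\{2, 3\}$; likewise $\deg_{G_1}u + \deg_{G_1}v = 3$ must split as $\{1, 2\}$. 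Finally, since $\deg_{G_1}a = 2$ the unique non-$e$ edge at $a$ must be one of the two cross-edges, and a short incidence count, using that $\deg_{G_1}u + \deg_{G_1}v = 3$ must absorb the other cross-edge, forces the second cross-edge to also attach to $a$, whence both $u$ and $v$ are neighbours of $a$ in $G_1$.

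The main obstacle is the elimination of the $k = 1$ case: pure counting admits $(|N_1|, |N_2|) = (2, 2)$ with degree pattern $(2, 2)$ at $\{a,b\}$ and $(1, 1)$ at $\{u, v\}$, and ruling this out requires invoking the implicit non-isomorphism $G_2 \ncong G_1$ of Case III, together with a structural inspection showing that every such configuration is of path-plus-matching type whose edge move is merely an automorphism of the disjoint union. This is the step I expect to require the most care, and is likely the reason the authors have flagged the statement for further scrutiny.
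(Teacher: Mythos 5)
Your counting framework is essentially the paper's: both arguments reduce the hypothesis $H_2=\relocatee(H_1)$ to the balance equations $\deg_{G_1}a+\deg_{G_1}b = k+3$ and $\deg_{G_1}u+\deg_{G_1}v = k+1$, where $k$ is the number of cross edges between $\{a,b\}$ and $\{u,v\}$, and then case-split on $k$ (the paper only treats $k=0$ and $k=2$ explicitly; you are more systematic in intent). The genuine gap is the step ``for $k\ge 3$ \ldots a short check rules out'': no such check exists, and $k=3$ in fact defeats the lemma. Take $G_1$ with edge set $\{ab,\,au,\,av,\,bu,\,aw,\,vz\}$ and relocate $e=ab$ to $uv$. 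Then, in your notation, $N_1=\{au,av,bu,aw\}$ and $N_2=\{au,av,bu,vz\}$, so exactly one incidence of $\hat e$ is deleted (to $aw$) and one is added (to $vz$); hence $H_2=\relocatee(H_1)$, and here $G_2\ncong G_1$ and $H_2\ncong H_1$ (the degree sequences differ). Yet $\deg_{G_1}a=4$, $\deg_{G_1}b=2$ and $\deg_{G_1}u=\deg_{G_1}v=2$, contradicting the stated conclusion. So the $k\ge 3$ branch cannot be dismissed by counting alone; it must either be excluded by an extra hypothesis or absorbed into a weaker conclusion.

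A second gap sits in your $k=2$ endgame: the ``short incidence count'' forcing both cross edges onto $a$ also fails. With cross edges $au$ and $bu$, one further edge at $a$ and one at $v$ (e.g.\ $G_1=\{ab,au,bu,aw,vz\}$), all of your equations are satisfied with the advertised degrees $(3,2)$ and $(2,1)$, yet $v$ is not a neighbour of $a$. Relatedly, your elimination of the $(1,4)$ split and of $k\in\{0,1\}$ leans on assumptions the statement does not make: an isolated vertex in $G_2$ does not obstruct the line-graph operation, and the lemma as written does not hypothesise $G_2\ncong G_1$, so appealing to ``Case III'' imports external context. To be fair, the paper's own proof has the same blind spots (it silently skips $k=1$ and $k\ge 3$, and its dismissal of the $(1,4)$ split is also incorrect), so your route is the natural one and more honest about where the work lies; but the two dismissals above are precisely where the lemma itself breaks, and no proof along these lines can close without strengthening the hypotheses of the statement.
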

\begin{proof}
This is a special case shown in Figure \ref{fig:CaseIIIC}. Edge 4 detaches from vertices $a$ and $b$ in $G_1$ and attaches to $u$ and $v$ in $G_2$ resulting in edge 4-5  getting deleted in $H_1$ and edge 1-4 getting added in $H_2$. Even though  edge 4 has relocated both vertices from $G_1$ to $G_2$, it is still incident to edges 2 and 3 because vertex $a$ is a neighbour of $u$ and $v$. The labels $u$ and $v$ can swap and similarly $a$ and $b$ can swap in the following discussion. 

From Lemma \ref{lemma:EdgeAddition} we know that the number of edges from $H_1$ to $H_2$ increase by $\deg_{G_1} u + \deg_{G_1} v$ and the number of edges decrease by $\deg_{G_1} a + \deg_{G_1} b -2 $. As there is an edge relocation from $H_1$ to $H_2$ we have $\deg_{G_1} u + \deg_{G_1} v = 1$ implying that $\deg_{G_1} u$ and $\deg_{G_1} v$ can only take the values 1 and 0 as degrees are not negative. That is, either $u$ or $v$ is an isolated vertex in $G_1$, which is akin to a vertex addition scenario.  As we are considering edge addition and deletion without vertex addition or deletion, we disregard this option.  

What if edges are shared between vertices $u$, $a$ and $b$? Vertices $u$ and $a$ can share an edge, and similarly $u$ and $b$ can share an edge. In this instance, in addition to the relocating edge, two edges are counted in $\deg_{G_1} u + \deg_{G_1} v$ and $\deg_{G_1} a + \deg_{G_1} b -2 $ making $\deg_{G_1} u + \deg_{G_1} v = 3$. This can only happen when wlog $\deg_{G_1} u = 2$ and $\deg_{G_1} v = 1$. Similarly counting the edges $au$ and $bu$ we get  $\deg_{G_1} a + \deg_{G_1} b -2 = 3 $ making $\deg_{G_1} a$ and $\deg_{G_1} b$ either 2 and 3 or 1 and 4 (permutations excepted). However, $\deg_{G_1} a$ and $\deg_{G_1} b$ cannot take the values 1 and 4 because this means after deleting edge $e$ from vertices $a$ and $b$, more edges would be deleted from $H_1$ to $H_2$ . Thus, vertices $a$ and $b$ have degrees 2 and 3. From Figure \ref{fig:CaseIIIC} we see that if either vertex $a$ or $b$ in $G_1$ are incident to additional edges then relocating edge 4 from $a$ and $b$ to $u$ and $v$ results in more edge deletions. 

  \begin{figure}
            \centering
            \includegraphics[width=0.8\linewidth]{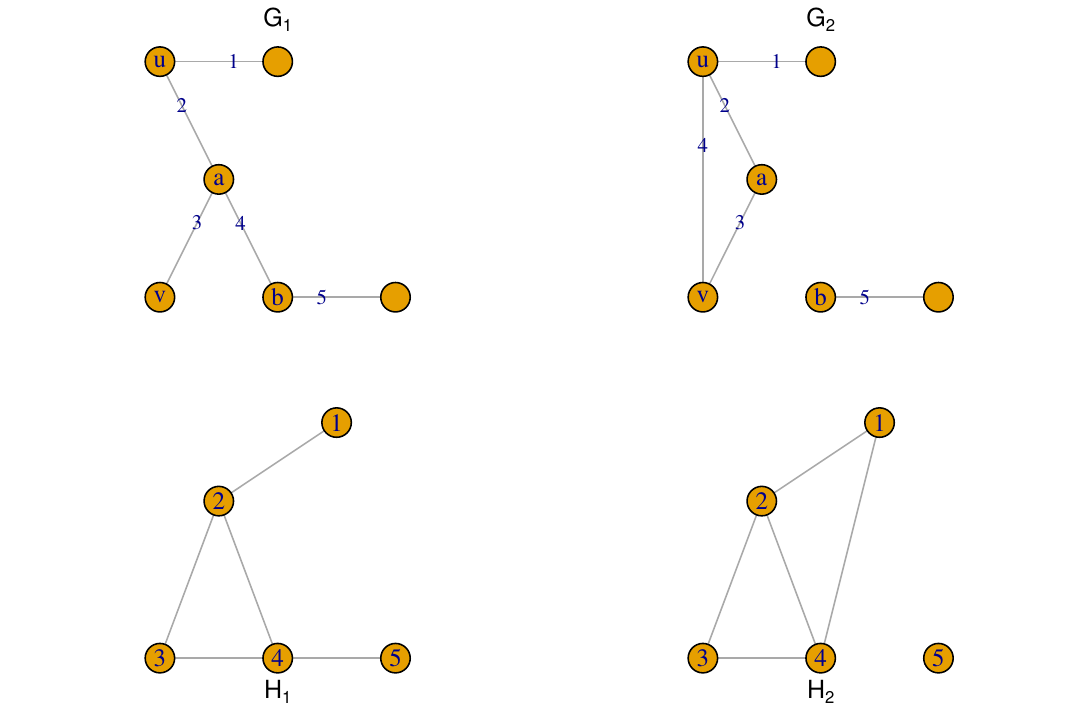}
            \caption{Case in Lemma \ref{lemma:EdgeSwapGLem2}}
            \label{fig:CaseIIIC}
\end{figure}
\end{proof}

\subsection{Case III (b): vertex merging and splitting in $G$ space}\label{sec:SecIIIb}
\begin{lemma}\label{lemma:VertexMerge}
Let $G_1$, $G_2$ be graphs such that $G_2$ is obtained by merging two degree-1 vertices in $G_1$, i.e.  $G_2= \mergev(G_1)$.  Let $H_1 = L(G_1)$ and $H_2 = L(G_2)$ be their line graphs. Then $H_2$ differs from $H_1$ by an edge addition, i.e.,  $H_2  = \Adde(H_1)$ with $|V(H_2)|  = V(H_1)|$ and $ |E(H_2)|  =  |E(H_1)| + 1$.
\end{lemma}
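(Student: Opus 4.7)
The plan is to verify the two numerical claims $|V(H_2)|=|V(H_1)|$ and $|E(H_2)|=|E(H_1)|+1$ directly from the definition $\mergev(G_1)=\Adde(G_1)+\Dele(G_1)+\Delv(G_1)$, which makes the result essentially a bookkeeping exercise.

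First I would note that a merge of two degree-1 vertices does not change the number of edges of $G$, since it adds one edge and deletes one edge (the final vertex deletion is of an isolated vertex and so affects no edges). Because vertices of $H$ correspond bijectively to edges of $G$, this immediately gives $|V(H_2)|=|V(H_1)|$.

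For the edge count, I would track the intermediate graphs. Adopt the notation of Definition \ref{def:MergeVertex}: $a,b$ are the degree-1 vertices being merged, $b$ is adjacent to $c$ via an edge $e_b$, and $a$ is adjacent to some vertex $d$ via an edge $e_a$. Let $G_1' = \Adde(G_1)$ be the graph after adding edge $ac$, and $G_1'' = \Dele(G_1')$ the graph after further deleting $bc$. Applying Lemma \ref{lemma:EdgeAddition} to the addition of $ac$ gives an increase of $\deg_{G_1}a+\deg_{G_1}c=1+\deg_{G_1}c$ edges in the line graph. Applying Lemma \ref{lemma:EdgeDeletion} to the deletion of $bc$ from $G_1'$ (where $b$ still has degree $1$ and $c$ now has degree $\deg_{G_1}c+1$) removes $\deg_{G_1'}b+\deg_{G_1'}c-2=\deg_{G_1}c$ edges. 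Finally, deleting the now-isolated vertex $b$ leaves the line graph unchanged. Summing the net change yields $(1+\deg_{G_1}c)-\deg_{G_1}c=1$ additional edge in $H_2$.

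Alternatively, and perhaps more transparently, I would give a direct combinatorial argument: the only pair of edges of $G$ whose adjacency status changes between $G_1$ and $G_2$ is $\{e_a,e_b\}$, since these two edges now share the merged vertex while every other pair of edges is incident to the same set of common vertices in $G_1$ and $G_2$ (the degree-1 vertices $a,b$ contribute no other incidences). Hence exactly one new edge $e_ae_b$ appears in $H_2$. Either route confirms $H_2=\Adde(H_1)$. The only subtle point will be ensuring the direct argument handles the degenerate case where $e_a$ and $e_b$ already shared a vertex in $G_1$ (which cannot happen here because $a,b$ each have degree 1 and are distinct from $c,d$), so the main obstacle is merely to record these non-degeneracy observations cleanly.
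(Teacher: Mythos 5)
Your proof is correct, but it is more detailed than, and routed differently from, the paper's own argument. The paper disposes of this lemma in two sentences: it observes that Lemma \ref{lemma:GeneralCaseWithEdgeAddition} already identifies vertex merging as the $G$-space counterpart of an $H$-space edge addition, and adds the one-line remark that the two pendant edges come to share the merged vertex, hence become adjacent in the line graph. Your second argument (tracking which pairs of edges change adjacency status) is essentially a careful expansion of that remark, while your first argument -- decomposing $\mergev$ into $\Adde+\Dele+\Delv$ and summing the edge counts $(1+\deg_{G_1}c)-\deg_{G_1}c=1$ via Lemmas \ref{lemma:EdgeAddition} and \ref{lemma:EdgeDeletion} -- is a genuinely different, more quantitative route that the paper does not take here; it has the advantage of actually verifying the stated formulas $|V(H_2)|=|V(H_1)|$ and $|E(H_2)|=|E(H_1)|+1$ rather than inheriting them from a converse-direction lemma, and of not being circular (note the paper's appeal to Lemma \ref{lemma:GeneralCaseWithEdgeAddition} is slightly awkward since that lemma argues in the opposite direction). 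One small imprecision: your stated reason for excluding the degenerate case -- that $a,b$ are "distinct from $c,d$" -- does not rule out $c=d$, i.e.\ both degree-1 vertices hanging off the same neighbour; in that configuration $e_a$ and $e_b$ already share a vertex and the merge would create a parallel edge. The correct exclusion is that the merge operation is only applied when it produces a simple graph, which forces $c\neq d$; with that fixed, your non-degeneracy observation goes through.
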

\begin{proof}
Lemma \ref{lemma:GeneralCaseWithEdgeAddition} shows that an edge addition in the $H$ can be accounted for by vertex merging in the $G$ space. When two degree-1 vertices merge, the respective edges share a vertex making the corresponding vertices in the line graph space connected.
\end{proof}

\begin{lemma}\label{lemma:VertexSplit}
Let $G_1$, $G_2$ be graphs such that $G_2$ is obtained by splitting a degree-2 vertex in $G_1$, i.e.  $G_2= \splitv(G_1)$.  Let $H_1 = L(G_1)$ and $H_2 = L(G_2)$ be their line graphs. Then $H_2$ differs from $H_1$ by an edge deletion, i.e.,  $H_2  = \Dele(H_1)$ with $|V(H_2)|  = V(H_1)|$ and $ |E(H_2)|  =  |E(H_1)| - 1$.
\end{lemma}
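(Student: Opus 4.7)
The plan is to exploit the fact that vertex splitting is defined as the inverse of vertex merging (Definition \ref{def:SplitVertex}), so Lemma \ref{lemma:VertexSplit} should be an immediate corollary of Lemma \ref{lemma:VertexMerge} with the roles of $G_1$ and $G_2$ exchanged. Concretely, if $G_2 = \splitv(G_1)$, then equivalently $G_1 = \mergev(G_2)$; applying Lemma \ref{lemma:VertexMerge} to the pair $(G_2, G_1)$ yields $H_1 = \Adde(H_2)$, which rearranges to $H_2 = \Dele(H_1)$, together with $|V(H_1)| = |V(H_2)|$ and $|E(H_1)| = |E(H_2)| + 1$, exactly the claim.

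For completeness I would also give a direct verification that clarifies which edge of $H_1$ disappears. Let $v$ be the degree-2 vertex in $G_1$ with incident edges $e_1 = va$ and $e_2 = vb$. After $\splitv$, vertex $v$ is replaced by two degree-1 vertices $v_1, v_2$ so that $e_1 = v_1 a$ and $e_2 = v_2 b$ in $G_2$; no edge of $G_1$ is otherwise altered, so $|E(G_1)| = |E(G_2)|$ and therefore $|V(H_1)| = |V(H_2)|$. In the line graph, every adjacency of $e_1$ coming from its endpoint $a$ is preserved, as is every adjacency of $e_2$ coming from $b$. The only adjacency that is lost is the one arising from the shared endpoint $v$, namely the edge between the vertex $e_1$ and the vertex $e_2$ in $H_1$. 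Hence $H_2$ differs from $H_1$ by exactly one edge deletion, giving $|E(H_2)| = |E(H_1)| - 1$ and $H_2 = \Dele(H_1)$.

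The only non-routine part is verifying that no \emph{other} edges of $H$ are affected, but this is essentially bookkeeping: since $a$ and $b$ retain the same set of incident edges (modulo renaming the shared endpoint), the neighbourhoods of all vertices of $H_1$ other than $e_1$ and $e_2$ are untouched, and the neighbourhoods of $e_1$ and $e_2$ each lose precisely the mutual adjacency contributed by $v$. Thus the main obstacle is notational rather than mathematical, and either the direct argument or the appeal to Lemma \ref{lemma:VertexMerge} via the inverse-operation relationship suffices.
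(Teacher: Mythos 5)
Your proof is correct and its main argument — observing that $G_2 = \splitv(G_1)$ is equivalent to $G_1 = \mergev(G_2)$ and invoking Lemma \ref{lemma:VertexMerge} with the roles reversed — is exactly the paper's proof, which states only that this lemma is the reverse of Lemma \ref{lemma:VertexMerge}. Your additional direct verification identifying the deleted edge of $H_1$ as the adjacency between $e_1$ and $e_2$ arising from the shared degree-2 endpoint is a sound and welcome elaboration, but not a different route.
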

\begin{proof}
 This is the reverse of Lemma \ref{lemma:VertexMerge}. 
\end{proof}

\begin{lemma}\label{lemma:MergeAndSplit}
Let $G_1$, $G_2$ be graphs such that $G_2$ is obtained by merging two degree-1 vertices in $G_1$ and splitting a degree-2 vertex in $G_1$ to make 2 degree-1 vertices in $G_2$. That is,  $G_2= \mergev(G_1) + \splitv(G_1)$.  Let $H_1 = L(G_1)$ and $H_2 = L(G_2)$ be their line graphs. Then $H_2$ differs from $H_1$ by an edge relocation, i.e.,  $H_2  = \relocatee(H_1)$ with $|V(H_2)|  = V(H_1)|$ and $ |E(H_2)|  =  |E(H_1)|$.
\end{lemma}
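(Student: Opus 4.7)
The plan is to decompose $G_2 = \mergev(G_1) + \splitv(G_1)$ into the two primitive operations and chain the results of Lemmas \ref{lemma:VertexMerge} and \ref{lemma:VertexSplit}. Because the merge acts only on two degree-1 vertices and the split acts on a degree-2 vertex, the two operations touch disjoint vertices of $G_1$; consequently they commute, and I can introduce an intermediate graph $G' = \mergev(G_1)$ and write $G_2 = \splitv(G')$.

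First I would apply Lemma \ref{lemma:VertexMerge} to the pair $(G_1, G')$ to obtain $L(G') = \Adde(H_1)$ with $|V(L(G'))| = |V(H_1)|$ and $|E(L(G'))| = |E(H_1)| + 1$. Next I would apply Lemma \ref{lemma:VertexSplit} to the pair $(G', G_2)$ to obtain $H_2 = \Dele(L(G'))$ with $|V(H_2)| = |V(L(G'))|$ and $|E(H_2)| = |E(L(G'))| - 1$. Chaining these identities immediately yields $|V(H_2)| = |V(H_1)|$ and $|E(H_2)| = |E(H_1)|$, and by Definition \ref{def:RelocateEdge} the composition $\Dele \circ \Adde$ is precisely an edge relocation, so $H_2 = \relocatee(H_1)$ as required.

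The main obstacle will be confirming that the relocation is non-trivial, i.e., that the added edge and the deleted edge in $H$-space are genuinely distinct. The edge added by Lemma \ref{lemma:VertexMerge} joins the two $H$-vertices corresponding to the pendant edges at the merged degree-1 vertices, whereas the edge deleted by Lemma \ref{lemma:VertexSplit} joins the two $H$-vertices corresponding to the edges incident to the split degree-2 vertex. These pairs of $H$-vertices are distinct because the degree-1 vertices being merged and the degree-2 vertex being split are necessarily different vertices of $G_1$ (their degrees differ). The only borderline configuration is when both pendants happen to be attached to the very vertex being split, which would require merging two edges sharing an endpoint and therefore produces a multi-edge, excluded by the simple-graph convention. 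In that degenerate case one would simply recover $H_2 \cong H_1$, still consistent with the stated vertex and edge counts, so the claim holds in either reading.
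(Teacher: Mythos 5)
Your proof is correct and follows essentially the same route as the paper: introduce the intermediate graph $G' = \mergev(G_1)$, apply Lemma \ref{lemma:VertexMerge} to $(G_1, G')$ and Lemma \ref{lemma:VertexSplit} to $(G', G_2)$, and chain the vertex and edge counts. Your additional check that the added and deleted edges in $H$-space are genuinely distinct goes slightly beyond what the paper records, but does not change the argument.
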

\begin{proof}
    Let $G_{12}$ denote the in-between graph from $G_1$ to $G_2$ where $G_{12} = \mergev(G_1)$ and $G_2 = \splitv(G_{12})$ and let $H_{12} = L(G_{12})$. Then applying Lemma \ref{lemma:VertexMerge} to $G_{1}$ and $G_{12}$  and Lemma \ref{lemma:VertexSplit} to graphs $G_{12}$ and $G_2$ we get the result. 
 \end{proof}

\subsection{Case III: edge relocation in $H$ space} \label{sec:CaseIII}
In this section we combine Case III (a) and Case III (b).

\lemmaEdgeRelocationInHspace*
\begin{proof}
    As edges in $G_1$ and $G_2$ are mapped to vertices in $H_1$ and $H_2$ and as $ |V(H_1)| = |V(H_2)|$ we know that $ |E(G_1)| = |E(G_2)|$. Thus, the change from $G_1$ to $G_2$ does not consider only an edge addition. Nor can it consider only an edge deletion. Rather, it can consider edge relocations which is $\Adde(G_1) + \Dele(G_1)$. Lemmas \ref{lemma:EdgeAddAndDelete}, \ref{lemma:EdgeSwapGLem1} and \ref{lemma:EdgeSwapGLem2} show that edge relocation in $G$ space result in an edge relocation in $H$ space. 
    However, it is not the case that multiple edge relocations in $G$ space can cause a single edge relocation in $H$ space because this implies that edge relocations apart from one had no effect, i.e., they cancelled out each other.     

    Similarly, Lemma \ref{lemma:MergeAndSplit} shows that vertex merging and splitting in $G$ space result in edge relocation in $H$ space. If multiple sets of vertices merged and split in $G$ space but still resulted in a single edge relocation in $H$ space, this means that apart from one split and merge the others cancelled out each other. Thus, only a single vertex merge and a single split can result in an edge relocation in $H$ space. 

    Furthermore, it cannot be the case that a vertex merge and an edge relocation can happen in $G$ space, because it would reduce the number of vertices in the $H$ space. As the number of vertices in $H_1$ is the same as that of $H_2$ vertex merging need to balanced with splitting. Similarly, other combinations of \textit{Primary Operations} (Definition \ref{def:AddingDeletingEdgesVertices}) in $G$ space would result in a different number of vertices in $H$ space.     
\end{proof}

\subsection{Assembling different cases to prove Theorem \ref{thm:DifferentCases}}

\begin{lemma}(\textbf{Case I})\label{lemma:case1} 
For edge augmented $H$ (Definition \ref{def:edegaugmentedH}) if  $\widetilde{H} \cong \hat{H}$ then $L^\dagger = L^{-1}$.
\end{lemma}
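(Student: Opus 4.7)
The plan is to observe that the hypothesis $\widetilde{H} \cong \hat{H}$ collapses the pseudo-inverse construction to the ordinary inverse line graph construction. The key point is that being a line graph is preserved under graph isomorphism, so if $\hat{H}$ is a line graph (which it is by construction, since $\hat{H} = L(\hat{G})$) and $\widetilde{H} \cong \hat{H}$, then $\widetilde{H}$ itself must already be a line graph.

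First I would invoke the definition of the pseudo-inverse (Definition \ref{def:pseudoinverse}): $\hat{H} = L(\hat{G})$ is chosen to minimize the symmetric difference $|E(\hat{H}) \triangle E(\widetilde{H})|$ over all line graphs $\hat{H}$. Since $\widetilde{H}$ is itself a line graph, it is an admissible choice with symmetric difference $0$, so the minimum is $0$. Consequently the optimal $\hat{H}$ (up to isomorphism) satisfies $\hat{H} \cong \widetilde{H}$, and thus $\hat{G} = L^{-1}(\widetilde{H})$, which is exactly the claim $L^{\dagger} = L^{-1}$.

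The only subtlety I would flag concerns uniqueness: the inverse line graph $L^{-1}(\widetilde{H})$ is well-defined up to the Whitney exception (Theorem \ref{thm:IsomorphicLineGraphs}), i.e.\ ambiguity only when $\widetilde{H} = K_3$. For the edge-augmented scenario $\widetilde{H} = H + e$ with $H = L(G)$, if $\widetilde{H} = K_3$ then $H$ would have two edges and three vertices; I would briefly note that in this degenerate case both roots ($K_3$ and $K_{1,3}$) are legitimate pseudo-inverses, consistent with the pseudo-inverse not being unique (as already stated after Definition \ref{def:pseudoinverse}). Outside this corner case, $L^{\dagger}(\widetilde{H}) = L^{-1}(\widetilde{H})$ uniquely.

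Because the argument reduces to unpacking definitions, I do not anticipate a hard step; the entire proof should fit in a few lines. The only thing to watch is being precise about what ``$L^{\dagger} = L^{-1}$'' means as an equality of (set-valued) operators on $\widetilde{H}$, which I would handle by phrasing the conclusion as: every pseudo-inverse $\hat{G}$ of $\widetilde{H}$ is a root of $\widetilde{H}$, and conversely.
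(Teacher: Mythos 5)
Your proposal is correct and follows essentially the same route as the paper: the hypothesis $\widetilde{H}\cong\hat{H}$ forces $\widetilde{H}$ to be a line graph, so the pseudo-inverse collapses to $L^{-1}(\widetilde{H})$. Your extra remarks on the zero-cost minimizer and the $K_3$ Whitney exception are a slight elaboration but do not change the argument.
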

\begin{proof}
    If $\widetilde{H} \cong \hat{H}$ then $\widetilde{H}$ is a line graph. Thus, $L^{-1}(\hat{H}) = L^{-1}(\widetilde{H})$. As  $L^{-1}(\hat{H}) = L^{\dagger}(\widetilde{H})$ we have $L^\dagger = L^{-1}$ in this instance. 
\end{proof}

\begin{lemma}(\textbf{Case I})\label{lemma:case1AboutG} 
For edge augmented $H$ (Definition \ref{def:edegaugmentedH}) if  $\widetilde{H} \cong \hat{H}$ then $G$ and $\hat{G}$ satisfy either the \textit{Special case} (Lemma \ref{lemma:SpecialCaseWithEdgeAddition})  or the \textit{General case} (Lemma \ref{lemma:GeneralCaseWithEdgeAddition}), i.e., either $\hat{G} = \relocatee(G)$ or $\hat{G} = \mergev(G)$.
\end{lemma}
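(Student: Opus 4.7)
The plan is to reduce the statement directly to Lemmas \ref{lemma:SpecialCaseWithEdgeAddition} and \ref{lemma:GeneralCaseWithEdgeAddition}. Under Case I we have $\widetilde{H} \cong \hat{H}$, which means the perturbed graph $\widetilde{H} = H + e_1$ is itself already a line graph. Identifying $H_1 := H$, $H_2 := \hat{H}$, $G_1 := G$, and $G_2 := \hat{G}$ places us exactly in the hypothesis of those two structural lemmas, namely, two line graphs whose second is obtained from the first by a single edge addition.

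I would then split the analysis into the two subcases covered by those lemmas, determined by the local structure of the added edge $e_1 = \{a,b\}$ in $H$. If the endpoints $a$ and $b$ share a common degree-2 neighbour $c$ in $H$, then Lemma \ref{lemma:SpecialCaseWithEdgeAddition} (triangle closing) applies and yields $\hat{G} = \relocatee(G)$. In every other configuration Lemma \ref{lemma:GeneralCaseWithEdgeAddition} applies and yields $\hat{G} = \mergev(G)$. These two subcases are mutually exclusive and exhaustive under Case I, and together they give the required disjunction.

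The main subtle point is compatibility with Definition \ref{def:pseudoinverse}: strictly speaking $\hat{G}$ is specified only by the requirement $L(\hat{G}) = \hat{H}$ together with the edge-edit minimality of $\hat{H}$ relative to $\widetilde{H}$, so there could in principle be non-isomorphic candidate roots of $\hat{H}$. Whitney's theorem (Theorem \ref{thm:IsomorphicLineGraphs}) resolves this up to isomorphism whenever $\hat{H} \ncong K_3$, so the witness $G_2$ produced by Lemma \ref{lemma:SpecialCaseWithEdgeAddition} or Lemma \ref{lemma:GeneralCaseWithEdgeAddition} must coincide with $\hat{G}$ up to isomorphism. The exceptional case $\hat{H} \cong K_3$ forces $H \cong P_3$, so $G \in \{P_4, K_{1,3}\}$; a direct check on these small graphs confirms that both admissible roots $K_3$ and $K_{1,3}$ of $\hat{H}$ are obtainable from $G$ via $\relocatee$ or $\mergev$, so the statement holds in this boundary case as well.
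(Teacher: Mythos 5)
Your proposal is correct and takes essentially the same route as the paper, whose entire proof is the one-line reduction "$L(\hat{G}) = \hat{H} \cong \widetilde{H} = \Adde(H)$, hence apply Lemmas \ref{lemma:SpecialCaseWithEdgeAddition} and \ref{lemma:GeneralCaseWithEdgeAddition}"; your additional care about root uniqueness via Theorem \ref{thm:IsomorphicLineGraphs} is a welcome strengthening that the paper omits. One small slip in that extra remark: if $\hat{H} \cong K_3$ then $H \cong P_3$ has the \emph{unique} root $G \cong P_4$ (since $L(K_{1,3}) = K_3 \neq P_3$), and the two-root ambiguity lies in $\hat{G} \in \{K_3, K_{1,3}\}$, both of which, as you verify, are reachable from $P_4$ by $\mergev$ or $\relocatee$ respectively.
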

\begin{proof}
As $L(\hat{G}) = \hat{H} \cong \widetilde{H} = \Adde(H)$ from Lemmas \ref{lemma:SpecialCaseWithEdgeAddition} and \ref{lemma:GeneralCaseWithEdgeAddition} we know that  
either $\hat{G} = \relocatee(G)$ or $\hat{G} = \mergev(G)$.
\end{proof}

\begin{lemma}(\textbf{Case II})\label{lemma:case2} 
For edge augmented $H$ (Definition \ref{def:edegaugmentedH}) suppose $\hat{H} =  \Dele(\widetilde{H}) = \widetilde{H} - e_2$. Then 
\[ e_1 \cong e_2  \iff  \hat{H} \cong H \iff \hat{G} \cong G . 
\]    
\end{lemma}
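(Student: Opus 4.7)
The plan is to chain the two biconditionals through the middle condition $\hat{H}\cong H$, handling one biconditional with Whitney's theorem and the other by a direct restriction-and-extension argument at the level of graph automorphisms.

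First, I would prove $\hat{H}\cong H \iff \hat{G}\cong G$. The reverse direction is immediate from the functoriality of the line-graph operator: if $\hat{G}\cong G$ then $\hat{H}=L(\hat{G})\cong L(G)=H$. For the forward direction, $\hat{H}=L(\hat{G})\cong L(G)=H$ together with Theorem \ref{thm:IsomorphicLineGraphs} (Whitney) yields $\hat{G}\cong G$ unless one of $G,\hat{G}$ is $K_3$ and the other is $K_{1,3}$. I would dispose of that exception by a small side check: if $G\cong K_{1,3}$ then $H\cong K_3$ admits no edge addition (it is already complete on three vertices), and the symmetric case $G\cong K_3$ forces $\hat{G}\cong G$ by counting edges in the minimal edit, so the Whitney exception cannot arise in the edge-augmented-$H$ scenario.

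Next, I would prove $e_1\cong e_2 \iff \hat{H}\cong H$. The intended reading of $e_1\cong e_2$ is that $e_1$ and $e_2$ lie in the same orbit of $\mathrm{Aut}(\widetilde{H})$, i.e.\ some automorphism $\phi$ of $\widetilde{H}$ maps $e_1$ to $e_2$. The forward direction is then a restriction argument: such a $\phi$ restricts to an isomorphism $\widetilde{H}-e_1\to \widetilde{H}-e_2$, which is exactly an isomorphism $H\to\hat{H}$. For the reverse direction, given $\psi\colon\hat{H}\to H$, I would view $\psi$ as a vertex bijection on the common vertex set and observe that $\psi$ preserves all edges of $\widetilde{H}$ except possibly at the pair $e_2$ (removed from $\hat{H}$) and the pair $e_1$ (removed from $H$). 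Since $\widetilde{H}=H+e_1$, any automorphism of $H$ that moves a designated non-edge to $e_1$ lifts to an automorphism of $\widetilde{H}$; I would therefore compose $\psi$ with a suitable element of $\mathrm{Aut}(H)$ so that the composed map sends $e_2$ to $e_1$, producing the required automorphism of $\widetilde{H}$.

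The main obstacle is exactly this extension step: when $\widetilde{H}$ has multiple non-adjacent pairs, $\psi(e_2)$ need not coincide with $e_1$ outright, so realigning the non-edges via $\mathrm{Aut}(H)$ is essential. The bookkeeping here is what makes the backward direction of the second equivalence the nontrivial part; once it is in place, the first biconditional is a clean invocation of Whitney's theorem and the lemma follows by composition.
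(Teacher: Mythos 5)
Your treatment of the second biconditional, $\hat{H} \cong H \iff \hat{G} \cong G$, is fine and in fact more careful than the paper's: the paper simply invokes Theorem \ref{thm:IsomorphicLineGraphs}, whereas you explicitly rule out the $K_3$/$K_{1,3}$ exception by noting that it would force $H \cong K_3$, which is complete and so admits no edge addition, contradicting the edge-augmented hypothesis. The gap is in the first biconditional, and specifically in the direction $\hat{H} \cong H \Rightarrow e_1 \cong e_2$ under your chosen reading of $e_1 \cong e_2$ as ``some $\phi \in \mathrm{Aut}(\widetilde{H})$ maps $e_1$ to $e_2$.'' Your plan is to take an isomorphism $\psi \colon \hat{H} \to H$ and, when $\psi(e_2) \neq e_1$, compose with an element of $\mathrm{Aut}(H)$ carrying the non-edge $\psi(e_2)$ to the non-edge $e_1$. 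No such automorphism need exist: two non-edges of $H$ lying in different $\mathrm{Aut}(H)$-orbits can still have the property that adding either one produces isomorphic graphs. Equivalently, $\widetilde{H} - e_1 \cong \widetilde{H} - e_2$ does not imply that $e_1$ and $e_2$ are in the same $\mathrm{Aut}(\widetilde{H})$-orbit; this is exactly the phenomenon of \emph{pseudosimilar} (removal-similar) edges, for which explicit counterexamples are known. So the ``realignment'' step you identify as the main obstacle is not merely bookkeeping --- it is false in general, and under your interpretation the stated biconditional itself would fail for a $\widetilde{H}$ containing a pseudosimilar pair.

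The paper avoids this by (implicitly) taking $e_1 \cong e_2$ to mean nothing stronger than ``deleting $e_2$ from $\widetilde{H}$ yields a graph isomorphic to $H = \widetilde{H} - e_1$''; its proof reads ``if we remove the same edge or an isomorphic edge \ldots we get back $H$,'' and the converse direction is asserted in the same deletion-isomorphism sense. Under that weaker reading the first biconditional is essentially definitional and the whole content of Lemma \ref{lemma:case2} is the Whitney step, which you have. To repair your write-up you should either adopt the deletion-isomorphism definition of $e_1 \cong e_2$ (making your restriction argument one direction and the other direction trivial), or, if you insist on the orbit definition, add a hypothesis excluding pseudosimilar edges --- the unconditional claim cannot be proved.
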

\begin{proof}
    Graph $\widetilde{H}$ is obtained by adding edge $e_1$ to $H$. If we remove the same edge or an isomorphic edge to obtain $\hat{H}$, then we get back $H$, i.e. $H \cong \hat{H}$, which implies $G \cong \hat{G}$ (Theorem \ref{thm:IsomorphicLineGraphs}). Similarly if  $H \cong \hat{H}$ then $e_1 \cong e_2$ and $\hat{G} \cong G$.
\end{proof}

\begin{lemma}(\textbf{Case III})\label{lemma:case3} 
For edge augmented $H$ (Definition \ref{def:edegaugmentedH}) suppose $ \hat{H} = \Dele(\widetilde{H}) = \widetilde{H} - e_2$. If $e_1 \ncong e_2$, then $\hat{H} \ncong H$ and $ \hat{G} \ncong G$.    
\end{lemma}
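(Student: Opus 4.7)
The plan is to derive Lemma \ref{lemma:case3} as a direct corollary of Lemma \ref{lemma:case2} by taking the contrapositive. Recall that Lemma \ref{lemma:case2} established the chain of biconditionals
\[
e_1 \cong e_2 \;\iff\; \hat{H} \cong H \;\iff\; \hat{G} \cong G,
\]
so both conclusions of Lemma \ref{lemma:case3} follow immediately by negating across these equivalences.

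Concretely, I would first assume $e_1 \ncong e_2$. Applying the contrapositive of the implication $\hat{H} \cong H \Rightarrow e_1 \cong e_2$ (the right-to-left half of the first biconditional in Lemma \ref{lemma:case2}), I conclude $\hat{H} \ncong H$. Next, I apply the contrapositive of the implication $\hat{G} \cong G \Rightarrow \hat{H} \cong H$. Note that this direction does not require Whitney's theorem at all; it is merely the statement that the line graph operation is well-defined on isomorphism classes, i.e.\ isomorphic graphs yield isomorphic line graphs. Consequently $\hat{H} \ncong H$ forces $\hat{G} \ncong G$.

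There is essentially no main obstacle in the argument, since the substantive work was already packaged into Lemma \ref{lemma:case2}. The only point worth flagging explicitly is that the implication $\hat{H} \ncong H \Rightarrow \hat{G} \ncong G$ uses only the trivial direction of the correspondence between a graph and its line graph, so the $K_3 / K_{1,3}$ exception of Theorem \ref{thm:IsomorphicLineGraphs} plays no role here and does not need to be managed separately. The proof should therefore be short, consisting mainly of a citation of Lemma \ref{lemma:case2} followed by the two contrapositive steps above.
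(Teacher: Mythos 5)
Your proposal is correct and matches the paper's own proof, which simply observes that Lemma \ref{lemma:case3} is the contrapositive of Lemma \ref{lemma:case2}. Your additional remark that the step $\hat{H} \ncong H \Rightarrow \hat{G} \ncong G$ needs only the trivial direction (isomorphic graphs have isomorphic line graphs), so the $K_3/K_{1,3}$ exception is irrelevant here, is a valid and slightly more careful elaboration of the same argument.
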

\begin{proof}
    This is the contrapositive of Lemma \ref{lemma:case2}.
\end{proof}

\begin{lemma}(\textbf{Case III})\label{lemma:case3second} 
For edge augmented $H$ (Definition \ref{def:edegaugmentedH}) suppose $ \hat{H} = \Dele(\widetilde{H}) = \widetilde{H} - e_2$. If $e_1 \ncong e_2$, then $\hat{H} = \relocatee(H)$ and either $\hat{G} = \relocatee(G)$ or $\hat{G} = \mergev(G) + \splitv(G)$.
\end{lemma}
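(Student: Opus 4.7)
The plan is to reduce the statement directly to Lemma \ref{lemma:EdgeRelocationInHspace}. First I would rewrite $\hat{H}$ in terms of $H$: by the edge augmented $H$ setup $\widetilde{H} = H + e_1$, and by hypothesis $\hat{H} = \widetilde{H} - e_2$, hence $\hat{H} = H + e_1 - e_2$. Since adding or deleting edges does not alter the vertex set, $|V(\hat{H})| = |V(H)|$. The assumption $e_1 \ncong e_2$ rules out the case $e_1 = e_2$ (an edge is always congruent to itself), so the net effect on $H$ is a single genuine edge addition and a single distinct edge deletion. By Definition \ref{def:RelocateEdge} this is exactly $\hat{H} = \relocatee(H)$, which settles the first claim.

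For the second claim I would observe that both $H = L(G)$ and $\hat{H} = L(\hat{G})$ are line graphs by construction of $\hat{G}$ via the pseudo-inverse. Setting $H_1 := H$, $H_2 := \hat{H}$, $G_1 := G$ and $G_2 := \hat{G}$, every hypothesis of Lemma \ref{lemma:EdgeRelocationInHspace} is in place: the two line graphs have equal vertex counts (as shown above) and are related by a single edge relocation. Applying that lemma immediately yields the dichotomy $\hat{G} = \relocatee(G)$ or $\hat{G} = \mergev(G) + \splitv(G)$, which is exactly what we need.

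The only delicate point is interpreting $e_1 \ncong e_2$ correctly. A priori this is a statement about isomorphism classes of edge placements rather than literal equality of edges, but all we actually need is that the added edge and the deleted edge are not the same edge of $\widetilde{H}$, which the non-congruence forces (any edge is congruent to itself by the identity automorphism). Once that is noted, the argument is purely a matter of assembling previously proved results, and there is no substantial additional obstacle: the heavy lifting is already packaged inside Lemma \ref{lemma:EdgeRelocationInHspace}, which in turn depends on the Case III(a) and Case III(b) lemmas established earlier in the supplement.
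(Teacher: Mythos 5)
Your proposal is correct and follows essentially the same route as the paper: rewrite $\hat{H} = H + e_1 - e_2$ with $e_1 \ncong e_2$ to conclude $\hat{H} = \relocatee(H)$, then invoke Lemma \ref{lemma:EdgeRelocationInHspace} with $H_1 = H$, $H_2 = \hat{H}$ to obtain the dichotomy on $\hat{G}$. Your explicit checks of the equal vertex counts and of the meaning of $e_1 \ncong e_2$ are sensible elaborations of the same argument rather than a different approach.
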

\begin{proof}
 As $\widetilde{H} = H + e_1 = \Adde(H)$,  $\hat{H} = \Dele(\widetilde{H}) $ and $e_1\ncong e_2$ we have $\hat{H} = \relocatee(\widetilde{H})$. From Lemma \ref{lemma:EdgeRelocationInHspace} we get the result. 
 \end{proof}

\begin{lemma}(\textbf{Case IV})\label{lemma:case4} 
For edge augmented $H$ (Definition \ref{def:edegaugmentedH}) suppose $\hat{H} = \Adde(\widetilde{H}) = \widetilde{H} + e_2$. Then $\hat{H} \ncong H$, $\hat{H} \ncong \widetilde{H}$ and $\hat{G} \ncong G$.
\end{lemma}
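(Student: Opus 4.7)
The plan is to reduce all three non-isomorphism claims to a single edge-counting observation, since the operation $\Adde$ strictly increases the number of edges while leaving the vertex set untouched.

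First I would record the size data. Since $\widetilde{H} = \Adde(H)$ and $\hat{H} = \Adde(\widetilde{H})$, the vertex set is unchanged at each step, giving
\[
|V(\hat{H})| = |V(\widetilde{H})| = |V(H)|,
\]
while the edge counts satisfy $|E(\widetilde{H})| = |E(H)| + 1$ and $|E(\hat{H})| = |E(\widetilde{H})| + 1 = |E(H)| + 2$. Because isomorphic graphs have identical edge counts, the strict inequalities $|E(\hat{H})| \neq |E(\widetilde{H})|$ and $|E(\hat{H})| \neq |E(H)|$ immediately deliver $\hat{H} \ncong \widetilde{H}$ and $\hat{H} \ncong H$.

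Next I would argue $\hat{G} \ncong G$ by contrapositive. The line graph construction is well defined on isomorphism classes: if two graphs are isomorphic, so are their line graphs. Therefore $\hat{G} \cong G$ would force $\hat{H} = L(\hat{G}) \cong L(G) = H$, contradicting the conclusion $\hat{H} \ncong H$ established in the previous step. Hence $\hat{G} \ncong G$. Note that a direct edge count in the $G$-space does not work here, because $|V(L(F))| = |E(F)|$ combined with $|V(\hat{H})| = |V(H)|$ only tells us $|E(\hat{G})| = |E(G)|$; the asymmetry has to be detected on the line-graph side.

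There is essentially no obstacle in this case: everything follows from the trivial fact that $\Adde$ adds an edge without adding vertices, together with the standard functorial observation that isomorphic root graphs have isomorphic line graphs (a one-line consequence of Theorem \ref{thm:IsomorphicLineGraphs} or even of the definition of $L$ itself). The whole lemma is a bookkeeping statement distinguishing Case IV from the three preceding cases.
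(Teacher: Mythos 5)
Your proof is correct and follows essentially the same route as the paper: edge counting gives $\hat{H}\ncong\widetilde{H}$ and $\hat{H}\ncong H$, and the last claim follows by contrapositive. Your justification of the final step is in fact slightly cleaner than the paper's, which cites Theorem \ref{thm:IsomorphicLineGraphs} (isomorphic line graphs imply isomorphic roots) where the implication actually needed is the elementary converse direction you use, namely that $\hat{G}\cong G$ forces $L(\hat{G})\cong L(G)$.
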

\begin{proof}
    As $\hat{H} = \Adde(\widetilde{H})$, $\hat{H} \ncong \widetilde{H}$. As $\hat{H}$ has two additional edges compared to $H$, $\hat{H} \ncong {H}$. Thus, $\hat{G} \ncong G$ from Theorem \ref{thm:IsomorphicLineGraphs}.
\end{proof}

\thmDifferentCases*
\begin{proof}
    The cases are done separately in Lemmas \ref{lemma:case1}, \ref{lemma:case1AboutG},\ref{lemma:case2}, \ref{lemma:case3}, \ref{lemma:case3second} and \ref{lemma:case4}
\end{proof}

\section{Smith graphs related proofs}
\lemmasmithtwo*
\begin{proof}
   
    As $\widetilde{H}$ is an induced subgraph of a Smith graph that also satisfies the edge augmented $H$ scenario, it is at most one edge away from a line graph $H$. Either $\widetilde{H}$ is a line graph or a line graph can be recovered from $\widetilde{H}$ by adding or deleting one edge. If $\widetilde{H}$ is a line graph then from Lemma \ref{lemma:smith},  $\lVert L^{-1}(\widetilde{H}) \rVert \leq 2$. 
      
      Suppose $\widetilde{H}$ is not a line graph. Let $L^{\dagger}(\widetilde{H}) = L^{-1}(\hat{H})$. If $\hat{H}$ is obtained by removing an edge from $\widetilde{H}$, then $\hat{H}$ is an induced subgraph of a Smith graph and as such from Lemma \ref{lemma:smith} we have $\lVert  L^{-1}(\hat{H}) \rVert \leq 2$. As $L^{\dagger}(\widetilde{H}) = L^{-1}(\hat{H})$ we get the result for this case.  

      If $\hat{H}$ is obtained by adding an edge from $\widetilde{H}$, then $\hat{H}$ is not an induced subgraph of a Smith graph. We consider this scenario by going over each of the Smith graphs. This cannot occur for cycles $C_n$ because all induced subgraphs of cycles are line graphs. Let us consider the other graphs one by one. 

     For $K_{1,4}$, graph $\widetilde{H}$ can only be $K_{1,3}$ and if an edge is added it will produce $\hat{H}$ and $\hat{G} = L^{-1}(\hat{H}) = L^{\dagger}(\widetilde{H})$ as shown in Figure \ref{fig:HhatSmith1}.  As the maximum degree of $\hat{G} = 3$, using the fact that spectral radius is bounded by the maximum degree of a graph we get $ \lVert L^{\dagger}(\widetilde{H}) \rVert  = \lVert \hat{G} \rVert \leq 3$. 

     \begin{figure}[!ht]
         \centering
         \includegraphics[width=0.8\linewidth]{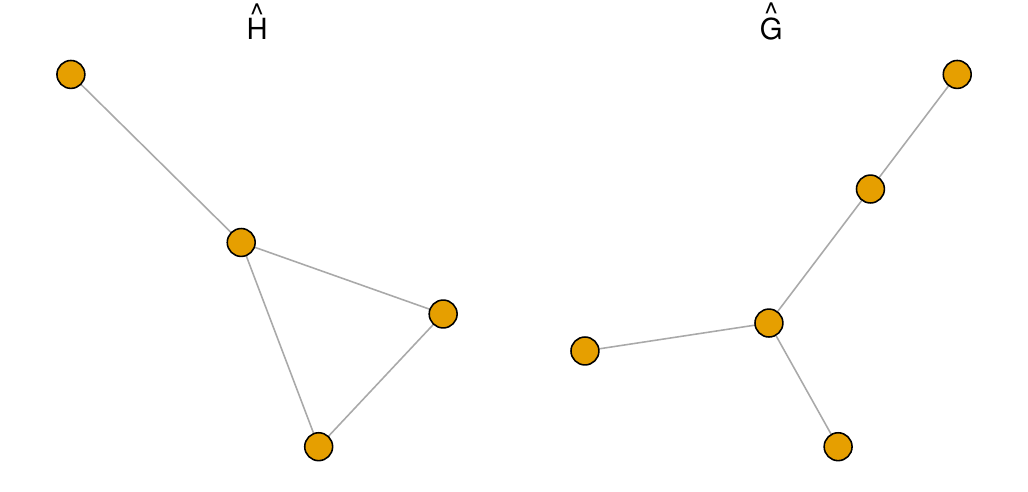}
         \caption{Graphs $\hat{H}$ and $\hat{G} = L^{-1}(\hat{H})$ for $\widetilde{H} = K_{1,3}$}
         \label{fig:HhatSmith1}
     \end{figure}

    Next, let us consider the family of graphs $W_n$ shown in Figure \ref{fig:smithgraphs}. The induced subgraphs that are not line graphs but are edge augmented $H$ graphs contain one copy of $K_{1,3}$. The associated $\widetilde{H}$ and $\hat{H}$ graphs resemble those shown in Figure \ref{fig:HhatSmith2} where more edges can be added to one side of $\widetilde{H}$. 

    \begin{figure}[!ht]
        \centering
        \includegraphics[width=0.8\linewidth]{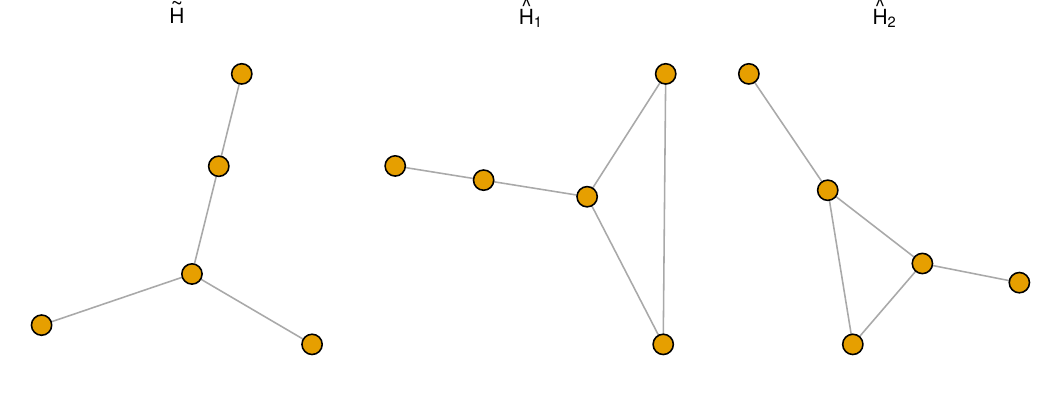}
        \caption{Graphs $\hat{H}_1$ and $\hat{H}_2$  for induced subgraph $\widetilde{H}$ in $W_n$. Note that more edges can be added to $\widetilde{H}$ resulting in the path part of $\hat{H}$ being extended. }
        \label{fig:HhatSmith2}
    \end{figure}

    The inverse line graphs $\hat{G} = L^{\dagger}(\widetilde{H})$ for $\hat{H}_1$ and $\hat{H}_2$ are given in Figure \ref{fig:HhatSmith21}.

    \begin{figure}[!ht]
        \centering
        \includegraphics[width=0.8\linewidth]{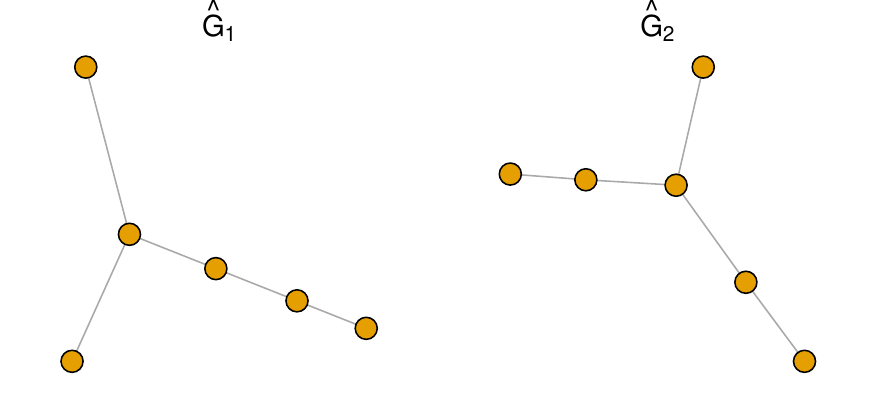}
        \caption{Graphs $\hat{G}_1$ and $\hat{G}_2$ for $\hat{H}_1$ and $\hat{H}_2$ above. }
        \label{fig:HhatSmith21}
    \end{figure}

    Again, as the maximum degree of $\hat{G}_1$ and $\hat{G}_2$ is 3, $\lVert \hat{G} \rVert = \lVert L^{\dagger}(\widetilde{H}) \rVert \leq 3$. 

    Similarly for $F_7$, $F_8$ and $F_9$ the induced subgraphs $\widetilde{H}$ and $\hat{H}$ resemble those in Figure \ref{fig:HhatSmith2}, making  
    $\lVert L^{\dagger}(\widetilde{H}) \rVert \leq 3$.
   
\end{proof}

\propPseudoinverseCts*
\begin{proof}
    As $L^\dagger(\widetilde{H}) = L^{-1}(\hat{H})$ from Proposition \ref{prop:LInverseCts} we have
    \begin{equation}\label{eq:prevresult}
        \lVert L^\dagger(\widetilde{H}) \rVert_n  \leq 2 \lVert \hat{H} \rVert_m  \quad \text{or} \quad  \lVert L^{\dagger}(\widetilde{H}) \rVert_n \leq 2 \, . 
    \end{equation}
    As $\widetilde{H} = H + e$, either $\hat{H} \cong \widetilde{H}$,   $\hat{H} = \widetilde{H} - e$ or $\hat{H} = \widetilde{H} + e$. 
    From eigenvalue interlacing theorems \cite{royle2001algebraic} 
    we know that removing an edge from a graph reduces its largest eigenvalue. Hence for $\hat{H} \cong \widetilde{H}$ or  $\hat{H} = \widetilde{H} - e$ we get 
    \[
    \lVert \hat{H} \rVert_m = \lambda_1(A(\hat{H})) \leq  \lambda_1(A(\widetilde{H})) = \lVert \widetilde{H} \rVert_m  \, . 
    \]
    Combining with equation \eqref{eq:prevresult} we get
    \[
    \lVert L^{\dagger}(\widetilde{H}) \rVert_n \leq 2 \lVert \widetilde{H} \rVert_m \, ,
    \]
    when $\hat{H} \cong \widetilde{H}$ or  $\hat{H} = \widetilde{H} - e$. When $\hat{H} = \widetilde{H} + e$ from \cite{li2012bounds} Lemma 1 and Corollary 1, the spectral radius of $\hat{H}$ satisfies
    \[
    \lVert \widetilde{H}  \rVert_m \leq \lVert \hat{H} \rVert_m \leq  \lVert \widetilde{H}  \rVert_m + 1
    \]
    making
    \begin{align*}
         \lVert L^{\dagger}(\widetilde{H}) \rVert_n & \leq 2 \left( \lVert \widetilde{H}  \rVert_m + 1 \right) \, .
    \end{align*}
    where we have used equation \eqref{eq:prevresult}. For all $\widetilde{H}$ apart from Smith graphs we have $ \lVert \widetilde{H}  \rVert_m \geq 2$ giving us 
    \[
     \lVert L^{\dagger}(\widetilde{H}) \rVert_n  \leq 2\lVert \widetilde{H}  \rVert_m + \lVert \widetilde{H}  \rVert_m = 3 \lVert \widetilde{H}  \rVert_m \, . 
    \]
    For Smith graphs that satisfy edge augmented $H$ criteria from Lemma \ref{lemma:smith2} we have $ \lVert L^{\dagger}(\widetilde{H}) \rVert_n  \leq 3$ giving the result. 
\end{proof}

\section{Proof of Theorem \ref{thm:perturbations}}
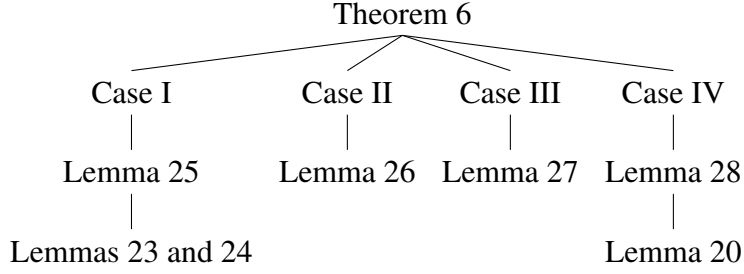
\begin{figure}
\centering
\begin{tikzpicture}[node distance=3cm]
\Tree [.{Theorem \ref{thm:perturbations}} 
         [.{Case I} 
           [.{Lemma \ref{lemma:case1NormH}} 
             {Lemmas \ref{lemma:specialCaseGNormBounded} and \ref{lemma:GeneralCaseGNormBound}}   
           ]
         ]
         [.{Case II} 
            {Lemma \ref{lemma:case2NormH}}
         ]  
         [.{Case III} 
            [.{Lemma \ref{lemma:case3NormH}} 
            ]
         ]
         [.{Case IV}  
            [.{Lemma \ref{lemma:case4normH}} 
                {Lemma \ref{lemma:Hnormbounded}}
            ]
         ]  
      ]  
\end{tikzpicture}       
\caption{Proof structure diagram for Theorem \ref{thm:perturbations}}
\label{fig:Thm6ProofTree}
\end{figure}

The proof structure of Theorem \ref{thm:perturbations} is given in Figure \ref{fig:Thm6ProofTree}. We first obtain some spectral inequalities that can be applied to graphs in either $G$ or $H$ spaces. Next, we focus on spectral inequalities on graphs and their line graphs before assembling the proof from different Lemmas as shown in Figure \ref{fig:Thm6ProofTree}.

\subsection{Spectral inequalities for generic perturbed graphs}
In this section we use graphs $F_1$ and $F_2$ where $F_2$ is a perturbed version of $F_1$. We get spectral radius bounds for different cases.  

\begin{lemma}\label{lemma:Hnormbounded}Suppose $F_1$ and $F_2$ are  connected graphs such that they differ by an edge addition or deletion, i.e., either $F_2 = \Adde(F_1)$ or $F_2 = \Dele(F_1)$.  Then
\begin{equation}\label{eq:Hbound}
    0 < C_1(F_1, F_2) \leq  \left | \lVert F_2 \rVert - \lVert F_1 \rVert \right| \leq C_2(F_1, F_2) \leq 1 \, , 
\end{equation}
where $C_1(F_1, F_2)$ and $C_2(F_1, F_2)$ depends on the principle eigenvectors of $F_1$ and $F_2$.  If  $F_2 = \Adde(F_1)$ then
\[
 0 < C_1(F_1, F_2) \leq   \lVert F_2 \rVert - \lVert F_1 \rVert \leq C_2(F_1, F_2) \leq 1 \, , 
\]
\end{lemma}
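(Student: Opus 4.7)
The plan is to apply directly the eigenvalue perturbation bounds from \cite{li2012bounds} already recalled in the proof sketch of Theorem \ref{thm:perturbations}. Recall that if a graph $F$ is modified by inserting an edge $e = \{i,j\}$ to produce $F'$, and $\bm{w},\bm{x}$ denote the normalized principal eigenvectors of $A(F)$ and $A(F')$ respectively, then
\[
0 < 2w_i w_j \leq \lambda_1(A(F')) - \lambda_1(A(F)) \leq 2 x_i x_j \leq 1.
\]
This is essentially the statement we want to prove, packaged in the two directions (addition vs.\ deletion).

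First I would dispose of the signed bound when $F_2 = \Adde(F_1)$. Let $i,j$ be the endpoints of the added edge and let $\bm{w}$, $\bm{x}$ be the normalized principal eigenvectors of $A(F_1)$ and $A(F_2)$. Set $C_1(F_1,F_2) := 2 w_i w_j$ and $C_2(F_1,F_2) := 2 x_i x_j$. Since $F_1$ and $F_2$ are connected, by the Perron--Frobenius theorem applied to each nonnegative irreducible adjacency matrix, both $\bm{w}$ and $\bm{x}$ have strictly positive entries, so $C_1 > 0$. For the ceiling $C_2 \le 1$, I would use the AM--GM / Cauchy--Schwarz bound $2 x_i x_j \leq x_i^2 + x_j^2 \leq \sum_k x_k^2 = 1$, which holds since $\bm{x}$ is normalized. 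This yields the signed chain stated in the lemma.

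For the deletion case $F_2 = \Dele(F_1)$, I would simply swap the roles: $F_1 = \Adde(F_2)$, so the same inequality applies with $F_1$ and $F_2$ interchanged, yielding
\[
0 < 2 w_i' w_j' \leq \lambda_1(A(F_1)) - \lambda_1(A(F_2)) \leq 2 x_i' x_j' \leq 1,
\]
where now $\bm{w}'$, $\bm{x}'$ are the principal eigenvectors of $A(F_2)$ and $A(F_1)$. Taking absolute values in the two cases combines them into the unsigned statement $0 < C_1 \leq \lvert \lVert F_2\rVert - \lVert F_1 \rVert \rvert \leq C_2 \leq 1$, with the constants $C_1, C_2$ depending on (components of) the principal eigenvectors of $F_1$ and $F_2$ as required.

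The only real subtlety, and hence the step that deserves the most care, is justifying strict positivity of $C_1$: this is where connectedness of both $F_1$ and $F_2$ is used, since Perron--Frobenius (and hence $w_i, w_j > 0$) relies on irreducibility of the adjacency matrix, i.e.\ the graph being connected. Everything else is either a citation of \cite{li2012bounds} or the elementary normalization estimate $2x_i x_j \leq x_i^2 + x_j^2 \leq 1$.
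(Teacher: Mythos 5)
Your proof is correct and follows essentially the same route as the paper: both invoke Lemma 1 and Corollary 1 of Li et al.\ (2012) for the edge-addition case, use Perron--Frobenius on the connected graphs to get strict positivity of the lower constant, and handle deletion by swapping the roles of $F_1$ and $F_2$. Your assignment $C_1 = 2w_iw_j$, $C_2 = 2x_ix_j$ is the one consistent with the chain of inequalities (the paper's own proof states the assignment with the two constants transposed, which appears to be a typo), and your explicit justification $2x_ix_j \leq x_i^2 + x_j^2 \leq 1$ fills in a small step the paper leaves to the citation.
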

\begin{proof}
    We consider the case $F_2 = \Adde(F_1)$. Let $\bm{x}$ and $\bm{w}$ be the normalized principal eigen vectors of $A(F_2)$ and $A(F_1)$.  From the Perron-Frobenius theorem all components of $\bm{x}$ and $\bm{w}$ are positive.  Suppose the vertices $i$ and $j$ form the additional edge in $F_2$. Then from Lemma 1 and Corollary 1 in \cite{li2012bounds} we have 
    \[
   0 <  2w_iw_j \leq \lambda_1(A(F_2)) - \lambda_1(A(F_1)) \leq 2 x_ix_j \leq 1 \, . 
    \]
    By letting $C_1(F_1, F_2) =2x_ix_j$ and  $C_2(F_1, F_2) = 2 w_iw_j $ we get the result.  The case $F_1 = \Adde(F_2)$ is similar. 
\end{proof}

\begin{lemma}\label{lemma:FEdgeRelocationNorm} Suppose $F_1$ and $F_2$ are connected graphs such that they differ by an edge relocation, i.e.,  $F_2 = \relocatee(F_1)$.  Then
    \[
    0  \leq  \left | \lVert F_2 \rVert - \lVert F_1 \rVert \right| \leq C_2(F_1, F_{12}, F_2) \leq 1 \, , 
    \]
    where $F_{12}$ denotes the graph in-between $F_1$ and $F_2$, i.e., $F_{12} = \Adde(F_1)$ and $F_2 = \Dele(F_{12})$. 
\end{lemma}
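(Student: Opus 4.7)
The plan is to leverage Lemma \ref{lemma:Hnormbounded} twice by decomposing the edge relocation through the intermediate graph $F_{12}$ already given in the statement. Since $F_{12} = \Adde(F_1)$ and $F_2 = \Dele(F_{12})$, both elementary steps differ by a single edge, so the earlier single-edge bound applies to each pair. Note that $F_{12}$ is connected (it contains the connected graph $F_1$ as a subgraph plus an edge), so the hypothesis of Lemma \ref{lemma:Hnormbounded} holds on both sides provided $F_2$ is also connected, which is part of the standing assumption.

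First I would apply Lemma \ref{lemma:Hnormbounded} to the pair $(F_1, F_{12})$ to obtain
\[
0 < C_1(F_1, F_{12}) \leq \lVert F_{12} \rVert - \lVert F_1 \rVert \leq C_2(F_1, F_{12}) \leq 1,
\]
where the strict sign on the left uses that an edge addition strictly increases the largest eigenvalue on a connected graph. Applying the same lemma to $(F_2, F_{12})$, viewed as $F_{12} = \Adde(F_2)$, yields
\[
0 < C_1(F_2, F_{12}) \leq \lVert F_{12} \rVert - \lVert F_2 \rVert \leq C_2(F_2, F_{12}) \leq 1.
\]
Then I would write the telescoping identity
\[
\lVert F_2 \rVert - \lVert F_1 \rVert = \bigl(\lVert F_{12} \rVert - \lVert F_1 \rVert\bigr) - \bigl(\lVert F_{12} \rVert - \lVert F_2 \rVert\bigr),
\]
so that both bracketed quantities are non-negative numbers in $[0,1]$. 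The difference therefore lies in $[-1,1]$, which upon taking absolute values gives
\[
\bigl|\lVert F_2 \rVert - \lVert F_1 \rVert\bigr| \leq \max\bigl\{\lVert F_{12} \rVert - \lVert F_1 \rVert,\; \lVert F_{12} \rVert - \lVert F_2 \rVert\bigr\} \leq \max\bigl\{C_2(F_1, F_{12}),\; C_2(F_2, F_{12})\bigr\} \leq 1.
\]
The stated constant $C_2(F_1, F_{12}, F_2)$ can thus be defined as $\max\{C_2(F_1, F_{12}), C_2(F_2, F_{12})\}$, inheriting dependence on the principal eigenvectors of all three graphs.

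The lower bound of $0$ is trivial from the absolute value, and unlike the single-edge case it cannot be strengthened to a strictly positive constant: the two positive increments coming from the addition and the deletion legs can cancel exactly (for instance when $F_1$ and $F_2$ happen to be isomorphic), so $\lVert F_2 \rVert = \lVert F_1 \rVert$ is attainable. The main obstacle is precisely this lack of a nonzero lower bound, which marks the qualitative difference from Lemma \ref{lemma:Hnormbounded}; the subsequent parts of Theorem \ref{thm:perturbations} that involve edge relocation therefore state only upper bounds rather than ratios. A minor technical care-point is confirming connectedness of $F_{12}$ and $F_2$ before invoking Perron--Frobenius inside Lemma \ref{lemma:Hnormbounded}; this follows from $F_{12} \supseteq F_1$ together with the standing connectedness assumption on $F_2$.
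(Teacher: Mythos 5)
Your proposal is correct and follows essentially the same route as the paper: decompose the relocation through the intermediate graph $F_{12}$, apply Lemma \ref{lemma:Hnormbounded} to the pairs $(F_1,F_{12})$ and $(F_2,F_{12})$, and combine the two single-edge bounds (the paper subtracts the inequalities, you take a maximum, but the substance is identical and your explicit choice $C_2(F_1,F_{12},F_2)=\max\{C_2(F_1,F_{12}),C_2(F_2,F_{12})\}$ is if anything a cleaner way to name the constant). Your closing remark that the lower bound cannot be made strictly positive also matches the paper's own observation in the proof sketch of Theorem \ref{thm:perturbations}.
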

\begin{proof}
    From Lemma \ref{lemma:Hnormbounded} we have 
    \begin{align}
          0 < C_1(F_1, F_{12}) & \leq   \lVert F_{12} \rVert - \lVert F_1 \rVert  \leq C_2(F_1, F_{12}) \leq 1 \, ,  \label{eq:align1}  \\
          0 < C_1(F_2, F_{12}) & \leq   \lVert F_{12} \rVert - \lVert F_2 \rVert  \leq C_2(F_2, F_{12}) \leq 1 \, ,  \label{eq:align2}\\
         -1 \leq -C_2(F_2, F_{12}) & \leq  \lVert F_2 \rVert - \lVert F_{12} \rVert \leq -C_1(F_2, F_{12}) \leq 0 \notag \\
          \lVert F_2 \rVert - \lVert F_{1} \rVert & \leq C_2(F_1, F_{12}) - C_1(F_2, F_{12}) \leq 1 \notag\\
         0  \leq  &  \left | \lVert F_2 \rVert - \lVert F_1 \rVert \right| \leq C_2(F_1, F_{12}, F_2) \leq 1 \notag
    \end{align}
    where we have multiplied equation \eqref{eq:align2} by -1 and added to equation \eqref{eq:align1} and taken the absolute value. 
\end{proof}

\begin{lemma}\label{lemma:FVertexMerge} 
    Suppose $F_1$ and $F_2$ are connected graphs such that they differ by a vertex merge or a vertex split, i.e.,  $F_2 = \mergev(F_1)$ or $F_2 = \splitv(F_1)$.  Then
    \[
    0  \leq  \left | \lVert F_2 \rVert - \lVert F_1 \rVert \right| \leq C_2(F_1, F_{12}, F_2) \leq 1 \, , 
    \]
    where $F_{12}$ denotes the graph in-between $F_1$ and $F_2$, i.e., in the case of vertex merging $F_{12} = \Adde(F_1)$ and $F_2 = \Dele(F_{12}) + \Delv(F_{12})$.
\end{lemma}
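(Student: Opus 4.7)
The plan is to mirror the telescoping argument used in the proof of Lemma \ref{lemma:FEdgeRelocationNorm}, with the observation that the final $\Delv$ step in a vertex merge contributes nothing to the spectral radius. First I would unpack Definition \ref{def:MergeVertex}: writing $a$ and $b$ for the two degree-$1$ vertices of $F_1$ being merged and letting $b$ be adjacent to $c$, the intermediate graph $F_{12} = \Adde(F_1) = F_1 + ac$ is connected, and the remaining $\Dele(bc)$ followed by $\Delv(b)$ collapses, spectrally, to the deletion of the now-pendant leaf $b$ of $F_{12}$, since removing an isolated vertex merely drops a zero row and column of the adjacency matrix. Hence $\lVert F_2 \rVert = \lVert F_{12} - b \rVert$, and both graphs in this second comparison are connected.

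I would then bound the two resulting steps separately. Lemma \ref{lemma:Hnormbounded} applied to the edge addition $F_1 \to F_{12}$ gives
\[
0 < C_1(F_1, F_{12}) \leq \lVert F_{12} \rVert - \lVert F_1 \rVert \leq C_2(F_1, F_{12}) \leq 1,
\]
while the vertex-deletion bound of \cite{li2012bounds} cited in the proof sketch of Theorem \ref{thm:perturbations}, applied to the leaf removal $F_{12} \to F_2$, gives
\[
0 \leq \lVert F_{12} \rVert - \lVert F_2 \rVert \leq 2 x_b^2 \lVert F_{12} \rVert,
\]
where $x_b$ is the principal-eigenvector component of $F_{12}$ at $b$. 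Because $b$ is a leaf adjacent only to $c$, the eigenvalue equation forces $x_b = x_c / \lVert F_{12} \rVert$, so the right-hand side simplifies to $2 x_c^2 / \lVert F_{12} \rVert$, which is at most $1$ whenever $\lVert F_{12} \rVert \geq 2$, i.e.\ outside the Smith exceptions of Theorem \ref{thm:SmithGraphs}. Subtracting the two chains and taking absolute values, precisely as in the arithmetic at the end of Lemma \ref{lemma:FEdgeRelocationNorm}, would then yield $0 \leq |\lVert F_2 \rVert - \lVert F_1 \rVert| \leq C_2(F_1, F_{12}, F_2) \leq 1$.

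The vertex-split case $F_2 = \splitv(F_1)$ follows immediately by symmetry, since it is equivalent to $F_1 = \mergev(F_2)$ and the claimed bound is symmetric in $F_1$ and $F_2$; swapping the roles in the argument above gives the same two-sided bound because the left-hand side is an absolute value. The main obstacle I anticipate is obtaining a uniformly clean $\leq 1$ bound for the leaf-deletion step: the estimate above needs the non-Smith hypothesis $\lVert F_{12} \rVert \geq 2$, and the Smith exceptions would have to be handled either by direct inspection of the finite list in Figure \ref{fig:smithgraphs} or by repackaging the composite constant $C_2(F_1, F_{12}, F_2)$ so that it absorbs the worst case. Since analogous Smith-graph carve-outs appear earlier in the paper (e.g.\ in Proposition \ref{prop:LInverseCts}), the same tactic should close the argument.
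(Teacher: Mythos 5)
Your proof follows essentially the same route as the paper's: decompose the merge into the edge addition $F_1 \to F_{12}$ bounded by Lemma \ref{lemma:Hnormbounded}, followed by the deletion of the pendant vertex bounded by the vertex-removal inequality of \cite{li2012bounds}, then combine and handle the split case by symmetry. Your extra step showing $2x_b^2\lVert F_{12}\rVert = 2x_c^2/\lVert F_{12}\rVert \leq 1$ via the leaf eigenvalue equation is in fact more careful than the paper, which asserts the final $\leq 1$ bound without explicitly verifying that the vertex-deletion term cannot push the combined constant above $1$.
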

\begin{proof}
    We consider vertex merging as by relabelling $F_1$ to $F_2$ we get vertex splitting. Recall that $\mergev(F) = \Adde(F) + \Dele(F) + \Delv(F)$ (Definition \ref{def:MergeVertex}). From Lemma \ref{lemma:Hnormbounded} we have
  \begin{equation} \label{eq:f1f12}
       0 < C_1(F_1, F_{12}) \leq   \lVert F_{12} \rVert - \lVert F_1 \rVert \leq C_2(F_1, F_{12}) \leq 1 \, , 
  \end{equation}
Let $\bm{x}$ be the normalized principle eigen vector of $A(F_{12})$ and suppose we delete vertex $i$ and the incident edge from $F_{12}$ to obtain  $F_2$.  Then from Theorem 1 in \cite{li2012bounds} 
\[
(1 - 2 x_i^2) \lambda_1\left(A(F_{12})\right) \leq \lambda_1\left( A(F_2) \right) \leq \lambda_1\left( A(F_{12}) \right) \, . 
\]
This gives us
\[
- 2 x_i^2 \lVert F_{12} \rVert  \leq \lVert F_{2} \rVert  -  \lVert F_{12} \rVert  \leq 0 \, . 
\]
Adding to equation \eqref{eq:f1f12} and taking absolute values we get the result. 
\end{proof}

\subsection{Spectral inequalities for graphs and their line graphs}

\begin{lemma}\textbf{(Special case: triangle closing)}\label{lemma:specialCaseGNormBounded}
    Consider the \textit{special case: triangle closing} scenario shown in Figure \ref{fig:SpecialCase} where $H_2 = \Adde(H_1)$ and $G_2 = \relocatee(G_1)$. Then 
 
    \[
     \frac{ \left \vert  \lVert G_2 \rVert_n -  \lVert G_1 \rVert_n \right \vert }{C(G_1, G_2, G_{int}) } \leq \frac{\left \vert  \lVert H_2 \rVert_m -  \lVert H_1 \rVert_m \right \vert }{C_1(H_1, H_2)} \leq 1 \, , 
    \]       
    where $G_{int}$ denotes an intermediate graph between $G_1$ to $G_2$, where $G_{int} = \Adde(G_1)$ and $G_2 = \Dele(G_{int})$ and $C(G_1, G_2, G_{int})$ depends on the normalized principal eigenvectors of $G_1$, $G_2$ and $G_{int}$. 
\end{lemma}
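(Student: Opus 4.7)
The plan is to combine the generic edge-addition bound (Lemma \ref{lemma:Hnormbounded}) on the $H$-side with the generic edge-relocation bound (Lemma \ref{lemma:FEdgeRelocationNorm}) on the $G$-side, invoking the structural content of Lemma \ref{lemma:SpecialCaseWithEdgeAddition} to assert that triangle closing means simultaneously $H_2 = \Adde(H_1)$ and $G_2 = \relocatee(G_1)$. Both perturbation lemmas come from the same family of spectral bounds of \cite{li2012bounds}, so the two ratios can be normalised to be comparable.

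First I would handle the $H$-side. Since $H_2 = \Adde(H_1)$, Lemma \ref{lemma:Hnormbounded} immediately supplies strictly positive constants $C_1(H_1, H_2)$ and $C_2(H_1, H_2)$, depending on the principal eigenvectors of $A(H_1)$ and $A(H_2)$, such that
\[
0 < C_1(H_1, H_2) \leq \bigl\vert \lVert H_2 \rVert_m - \lVert H_1 \rVert_m \bigr\vert \leq C_2(H_1, H_2) \leq 1 .
\]
This already controls the middle ratio in the target statement: the normalisation by the appropriate constant from the lemma forces the rightmost inequality $\leq 1$, and both constants have clean expressions $2w_i w_j$ and $2 x_i x_j$ in terms of eigenvector entries at the endpoints of the added edge.

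Second I would treat the $G$-side. Because triangle closing produces $G_2 = \relocatee(G_1) = \Adde(G_1) + \Dele(G_1)$, I would introduce the intermediate graph $G_{int}$ with $G_{int} = \Adde(G_1)$ and $G_2 = \Dele(G_{int})$ and apply Lemma \ref{lemma:FEdgeRelocationNorm}, obtaining
\[
0 \leq \bigl\vert \lVert G_2 \rVert_n - \lVert G_1 \rVert_n \bigr\vert \leq C_2(G_1, G_{int}, G_2) \leq 1 ,
\]
with $C_2(G_1, G_{int}, G_2)$ depending only on the principal eigenvectors of $A(G_1)$, $A(G_{int})$ and $A(G_2)$. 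I would then take the constant $C(G_1, G_2, G_{int})$ in the statement to equal this upper bound $C_2(G_1, G_{int}, G_2)$, so that the $G$-side ratio is automatically bounded above by $1$. Chaining against the $H$-side ratio follows by observing that both ratios live in $[0,1]$ and that the edge addition in $H$-space is precisely the image under $L$ of the edge relocation in $G$-space, so a term-by-term comparison via the incidence/adjacency relation of Theorem \ref{thm:adjacencyandincidence} yields the claimed ordering.

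The main obstacle I anticipate is bookkeeping the normalising constants cleanly so that the middle inequality, comparing the two ratios rather than merely showing each is $\leq 1$, holds with constants that depend only on the advertised eigenvector data. A secondary subtlety is that the $G$-side lower bound is merely $0$, since an edge relocation can leave the spectral radius unchanged, whereas the $H$-side admits the strictly positive lower bound $C_1(H_1, H_2) > 0$; this asymmetry is exactly why the lemma states only an upper bound on the $G$-side rather than a two-sided sandwich.
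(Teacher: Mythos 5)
Your setup is the same as the paper's: Lemma \ref{lemma:Hnormbounded} applied to the pair $(H_1,H_2)$ because $H_2=\Adde(H_1)$, and Lemma \ref{lemma:FEdgeRelocationNorm} applied to $(G_1,G_2)$ through the intermediate graph $G_{int}$ because $G_2=\relocatee(G_1)$. The gap is in your chaining step. The observation that both ratios lie in $[0,1]$ gives no ordering between them, and Theorem \ref{thm:adjacencyandincidence} only relates $A(L(G))$ to $B'B$; it does not provide any ``term-by-term'' comparison between the eigenvector products $2x_ix_j$ computed for graphs in $G$-space and those computed for graphs in $H$-space, so that appeal cannot produce the middle inequality. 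As written, the first (and central) inequality of the lemma is not established.

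The mechanism that actually delivers it --- and the one the paper uses --- is precisely the strictly positive $H$-side lower bound that you relegate to a closing remark about asymmetry. Since $\left\vert \lVert H_2 \rVert_m - \lVert H_1 \rVert_m \right\vert \geq C_1(H_1,H_2) > 0$, taking reciprocals gives
\[
\frac{1}{\left\vert \lVert H_2 \rVert_m - \lVert H_1 \rVert_m \right\vert} \leq \frac{1}{C_1(H_1,H_2)} \, ,
\]
and multiplying by $0 \leq \left\vert \lVert G_2 \rVert_n - \lVert G_1 \rVert_n \right\vert \leq C(G_1,G_2,G_{int})$ yields
\[
\frac{\left\vert \lVert G_2 \rVert_n - \lVert G_1 \rVert_n \right\vert}{\left\vert \lVert H_2 \rVert_m - \lVert H_1 \rVert_m \right\vert} \leq \frac{C(G_1,G_2,G_{int})}{C_1(H_1,H_2)} \, ,
\]
which rearranges exactly to the claimed comparison of ratios; no cross-space comparison of eigenvector data is needed. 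One caveat you inherited from the paper: for this rearrangement the denominator $C_1(H_1,H_2)$ must be the \emph{lower}-bound constant, whereas the rightmost bound $\leq 1$ requires it to dominate the increment, i.e.\ to be the upper-bound constant $2x_ix_j$ of \cite{li2012bounds}; the paper's own labelling of $C_1$ and $C_2$ in Lemma \ref{lemma:Hnormbounded} is inconsistent on this point, and your proof would need to fix a single convention rather than silently using both.
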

\begin{proof}
    Applying Lemma \ref{lemma:Hnormbounded} to $H_1$ and $H_2$ we have 
     \[
     \frac{1}{C_2(H_1, H_2)} \leq \frac{1}{ \left | \lVert H_2 \rVert_m - \lVert H_1 \rVert_m \right|} \leq \frac{1}{C_1(H_1, H_2)} \, . 
     \]
     Applying Lemma \ref{lemma:FEdgeRelocationNorm} we have 
     \[
     0 \leq \left | \lVert G_{2} \rVert_n - \lVert G_1 \rVert_n \right| \leq C(G_{int}, G_1, G_2)
     \]
    which gives us
     \[
    \frac{ \left | \lVert G_{2} \rVert_n - \lVert G_1 \rVert_n \right|}{ \left | \lVert H_2 \rVert_m - \lVert H_1 \rVert_m \right|} \leq \frac{C(G_{int}, G_1, G_2)}{C_1(H_1, H_2)} \, . 
     \]
Reorganising terms and recognising $C(H_1, H_2) \leq 1$ (equation \eqref{eq:Hbound}) gives the result.    

\end{proof}

\begin{lemma}(\textbf{General case})\label{lemma:GeneralCaseGNormBound}
    Consider the \textit{general case} scenario shown in Figure \ref{fig:GeneralCase} where $H_2 = \Adde(H_1)$ and $G_2 = \mergev(G_1)$. Then 
    \[
     \frac{ \left \vert  \lVert G_2 \rVert_n -  \lVert G_1 \rVert_n \right \vert }{C(G_1, G_2, G_{int}) } \leq \frac{\left \vert  \lVert H_2 \rVert_m -  \lVert H_1 \rVert_m \right \vert }{C_1(H_1, H_2)} \leq 1 \, , 
    \]  
    where $G_{int}$ denotes an intermediate graph between $G_1$ to $G_2$, where $G_{int} = \Adde(G_1)$ and $G_2 = \Dele(G_{int})$ and $C(G_1, G_2, G_{int})$ depends on the normalized principal eigenvectors of $G_1$, $G_2$ and $G_{int}$. 
    
\end{lemma}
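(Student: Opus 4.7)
The plan is to mirror the proof of Lemma \ref{lemma:specialCaseGNormBounded} almost verbatim, replacing the edge-relocation estimate (Lemma \ref{lemma:FEdgeRelocationNorm}) with the vertex-merge estimate (Lemma \ref{lemma:FVertexMerge}). Everything in $H$-space is identical to the special case, since in both cases the line-graph perturbation is a single edge addition, $H_2 = \Adde(H_1)$. The only thing that changes is the $G$-space inequality, which must now accommodate a vertex merge rather than an edge relocation.

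Concretely, I would first apply Lemma \ref{lemma:Hnormbounded} to $H_1$ and $H_2$ to obtain
\[
0 < C_1(H_1, H_2) \leq \left\lvert \lVert H_2 \rVert_m - \lVert H_1 \rVert_m \right\rvert \leq C_2(H_1, H_2) \leq 1,
\]
and then take reciprocals on the appropriate side to get an upper bound on $1/\lvert \lVert H_2 \rVert_m - \lVert H_1 \rVert_m \rvert$ in terms of $1/C_1(H_1, H_2)$. Next, since $G_2 = \mergev(G_1)$, Lemma \ref{lemma:FVertexMerge} supplies the intermediate graph $G_{int} = \Adde(G_1)$ (with $G_2$ recovered from $G_{int}$ by $\Dele$ followed by $\Delv$) and the bound
\[
0 \leq \left\lvert \lVert G_2 \rVert_n - \lVert G_1 \rVert_n \right\rvert \leq C(G_1, G_{int}, G_2),
\]
where the constant depends on the normalized principal eigenvectors of the three graphs. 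Multiplying these two inequalities and reorganising exactly as in the special-case proof, using $C_2(H_1, H_2) \leq 1$ from Lemma \ref{lemma:Hnormbounded}, yields the claimed double inequality.

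The argument is essentially bookkeeping, and I do not expect a genuine obstacle. The one conceptual point worth flagging is that a vertex merge is a composite of three primary operations ($\Adde + \Dele + \Delv$) rather than two, so the $G$-space estimate must absorb an extra vertex-deletion step that shrinks the ambient dimension of the adjacency matrix. That step is already handled inside Lemma \ref{lemma:FVertexMerge} via the vertex-deletion inequality of \cite{li2012bounds} (which guarantees $\lambda_1$ cannot increase upon vertex removal), so no new spectral machinery is needed at this stage.
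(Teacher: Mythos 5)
Your proposal matches the paper's own proof essentially verbatim: it applies Lemma \ref{lemma:Hnormbounded} to $H_1,H_2$, applies Lemma \ref{lemma:FVertexMerge} to $G_1,G_2$ in place of the edge-relocation bound, and multiplies and reorganises exactly as in the special case. No substantive difference; your remark that the extra vertex deletion is already absorbed inside Lemma \ref{lemma:FVertexMerge} is also how the paper handles it.
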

\begin{proof}
Applying Lemma \ref{lemma:Hnormbounded} to $H_1$ and $H_2$ we get
 \[
     \frac{1}{ \left | \lVert H_2 \rVert_m - \lVert H_1 \rVert_m \right|} \leq \frac{1}{C_1(H_1, H_2)} \, .
     \]
Applyling Lemma \ref{lemma:FVertexMerge} to $G_1$ and $G_2$ we get
\[
\left \vert \lVert G_2 \rVert_n - \lVert G_1 \rVert_n \right \vert \leq C(G_1, G_{2}, G_{int}) \, . 
\]
Multiplying the two inequalities we get
 \[
     \frac{  \left \vert \lVert G_2 \rVert_n - \lVert G_1 \rVert_n \right \vert }{ \left | \lVert H_2 \rVert_m - \lVert H_1 \rVert_m \right|} \leq \frac{C(G_1, G_{2}, G_{int})}{C_1(H_1, H_2)} \, , 
     \]
 which can be reorganised and combined with Lemma \ref{lemma:Hnormbounded}  to obtain the result.

\end{proof}



\subsection{Assembling the proof of Theorem \ref{thm:perturbations}}
\begin{lemma}\textbf{(Case I)} \label{lemma:case1NormH} 
For edge augmented $H$ (Definition \ref{def:edegaugmentedH}) if  $\widetilde{H} \cong \hat{H}$ then
 \[
     \frac{ \left \vert  \lVert \hat{G}\rVert_n -  \lVert G \rVert_n \right \vert }{C(G, \hat{G}, G_{int}) } \leq \frac{\left \vert  \lVert \hat{H} \rVert_m -  \lVert {H} \rVert_m \right \vert }{C_1(H, \hat{H})} \leq 1 \, , 
 \]
 where $G_{int}$ denotes an intermediate graph between $G$ and $\hat{G}$.
\end{lemma}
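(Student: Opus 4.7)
The plan is to reduce Case I to the dichotomy already handled by Lemmas \ref{lemma:specialCaseGNormBounded} and \ref{lemma:GeneralCaseGNormBound}. First I would observe that under the hypothesis $\widetilde{H} \cong \hat{H}$, the graph $\hat{H}$ is isomorphic to $H$ with one additional edge, since by definition $\widetilde{H} = H + e_1 = \Adde(H)$. Thus the pair $(H, \hat{H})$ plays the role of $(H_1, H_2)$ with $H_2 = \Adde(H_1)$, which is exactly the hypothesis needed by both of the relevant ratio lemmas on the line-graph side.

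Next I would invoke Lemma \ref{lemma:case1AboutG}, which was crafted precisely for Case I and asserts that either $\hat{G} = \relocatee(G)$ (the triangle-closing special case of Lemma \ref{lemma:SpecialCaseWithEdgeAddition}) or $\hat{G} = \mergev(G)$ (the general merging case of Lemma \ref{lemma:GeneralCaseWithEdgeAddition}). In the triangle-closing subcase, the identifications $G_1 = G$, $G_2 = \hat{G}$, $H_1 = H$, $H_2 = \hat{H}$ satisfy the hypotheses of Lemma \ref{lemma:specialCaseGNormBounded}, and applying it directly yields
\[
\frac{\left\vert \lVert \hat{G}\rVert_n - \lVert G \rVert_n \right\vert}{C(G, \hat{G}, G_{int})} \leq \frac{\left\vert \lVert \hat{H} \rVert_m - \lVert H \rVert_m \right\vert}{C_1(H, \hat{H})} \leq 1,
\]
where $G_{int}$ is the intermediate graph $\Adde(G)$ arising in the $\relocatee$ decomposition.

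In the merging subcase, the same identifications feed into Lemma \ref{lemma:GeneralCaseGNormBound}, which produces the identical inequality with $G_{int}$ now the intermediate graph from the $\Adde$ component of the $\mergev$ operation (recall $\mergev = \Adde + \Dele + \Delv$, so the $\Adde$ step supplies the natural choice of $G_{int}$). Since Lemma \ref{lemma:case1AboutG} exhausts Case I with these two subcases, the bound holds throughout.

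The proof is essentially an assembly step: all the spectral-perturbation machinery has already been absorbed into Lemmas \ref{lemma:Hnormbounded}, \ref{lemma:FEdgeRelocationNorm}, \ref{lemma:FVertexMerge} and their line-graph-side consequences. The only mild care required is to identify the intermediate graph $G_{int}$ correctly within each subcase; there is no genuine obstacle to overcome, because the content of Case I is exactly the union of the two configurations that the preceding lemmas were designed to handle.
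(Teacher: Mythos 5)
Your proposal is correct and follows essentially the same route as the paper: invoke Lemma \ref{lemma:case1AboutG} to split Case I into the triangle-closing and merging subcases, then apply Lemmas \ref{lemma:specialCaseGNormBounded} and \ref{lemma:GeneralCaseGNormBound} respectively. Your write-up is in fact somewhat more explicit than the paper's one-line assembly, particularly in identifying the intermediate graph $G_{int}$ in each subcase.
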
 
\begin{proof}
    When $\widetilde{H} \cong \hat{H}$ from  Lemma \ref{lemma:case1AboutG} either the special case or the general case describe modifications in the $G$ space. Thus, the result follows from Lemmas \ref{lemma:specialCaseGNormBounded} and \ref{lemma:GeneralCaseGNormBound}.
\end{proof}

\begin{lemma}(\textbf{Case II})\label{lemma:case2NormH} For edge augmented $H$ (Definition \ref{def:edegaugmentedH}) 
suppose $\hat{H} = \Dele(\widetilde{H})$. Then 
\[ e_1 \cong e_2  \iff  \lVert \hat{H} \rVert = \lVert H \rVert \iff \lVert  \hat{G} \rVert  = \lVert  G \rVert. 
\]    
\end{lemma}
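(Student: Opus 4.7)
The plan is to route the norm equalities through graph isomorphism, leveraging the structural chain already proven in Lemma \ref{lemma:case2}.

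For the forward direction, Lemma \ref{lemma:case2} gives $e_1 \cong e_2 \Rightarrow \hat{H} \cong H$ and (via Theorem \ref{thm:IsomorphicLineGraphs}) $\hat{G} \cong G$. Since isomorphic graphs have permutation-similar adjacency matrices and therefore identical spectra, both norm equalities $\lVert \hat{H} \rVert_m = \lVert H \rVert_m$ and $\lVert \hat{G} \rVert_n = \lVert G \rVert_n$ follow immediately.

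For the reverse direction, I would argue by contrapositive. Suppose $e_1 \ncong e_2$; then by Lemma \ref{lemma:case3second}, $\hat{H} = \relocatee(H)$ and $\hat{G}$ is obtained from $G$ by either $\relocatee(G)$ or $\mergev(G) + \splitv(G)$. The plan is to force $\lVert \hat{H} \rVert_m \neq \lVert H \rVert_m$ by routing through the intermediate graph $\widetilde{H} = H + e_1$: by Lemma \ref{lemma:Hnormbounded}, we have strict inequalities $\lVert H \rVert_m < \lVert \widetilde{H} \rVert_m$ and $\lVert \hat{H} \rVert_m < \lVert \widetilde{H} \rVert_m$, with gap sizes sandwiched by $2w_i w_j$ and $2x_i x_j$ where $\bm{w},\bm{x}$ are the relevant Perron vectors evaluated at the endpoints of $e_1$ (respectively $e_2$). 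Since $e_1 \ncong e_2$, the two gap-regions correspond to structurally inequivalent edge sites, so the two ``drops'' from $\lVert \widetilde{H} \rVert_m$ cannot coincide, yielding the desired strict separation. A parallel argument on the $G$-side, through an analogous intermediate graph and Lemmas \ref{lemma:FEdgeRelocationNorm} and \ref{lemma:FVertexMerge}, handles $\lVert \hat{G} \rVert_n \neq \lVert G \rVert_n$.

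The hard part will be justifying rigorously that the two gaps cannot conspire to give equal spectral radii---equivalently, ruling out coincidentally cospectral pairs $(H,\hat{H})$ related by a single edge relocation in which the removed edge is non-isomorphic to the freshly added one. I expect this to require a case analysis on the structural possibilities enumerated in Lemma \ref{lemma:case3second}, possibly combined with Whitney rigidity (Theorem \ref{thm:IsomorphicLineGraphs}) to transport $H$-space non-cospectrality conclusions back to the $G$-space and vice versa.
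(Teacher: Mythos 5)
Your forward direction coincides with the paper's entire proof: the paper's argument for this lemma is the single line that it ``follows from Lemma \ref{lemma:case2}'', i.e., it routes $e_1 \cong e_2 \Rightarrow \hat{H} \cong H \Rightarrow \hat{G} \cong G$ and reads off equality of spectral radii from isomorphism. For that half you are doing exactly what the paper does, only more explicitly.

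The genuine gap is in your reverse direction, at the step claiming that because $e_1 \ncong e_2$ ``the two gap-regions correspond to structurally inequivalent edge sites, so the two drops from $\lVert \widetilde{H} \rVert_m$ cannot coincide.'' Lemma \ref{lemma:Hnormbounded} only sandwiches each drop between $2w_iw_j$ and $2x_ix_j$ for the relevant Perron vectors; nothing prevents the two sandwiching intervals from overlapping, and non-isomorphism of the two edge sites does not force the exact quantities $\lVert \widetilde{H}\rVert_m - \lVert H \rVert_m$ and $\lVert \widetilde{H}\rVert_m - \lVert \hat{H} \rVert_m$ to differ. Non-isomorphic graphs can share a spectral radius (indeed can be cospectral), and there is no interlacing-type obstruction special to a single edge relocation that rules this out; Whitney rigidity (Theorem \ref{thm:IsomorphicLineGraphs}) transports isomorphism between the $G$ and $H$ spaces, not spectral coincidence. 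So your planned case analysis cannot close the converse without an essentially new ingredient. You are right to flag this as the hard part, but note that the paper does not address it either: Lemma \ref{lemma:case2} delivers the implications from $e_1 \cong e_2$ to the norm equalities, not their converses, so the converse directions of the stated equivalence remain unproved in the paper as well.
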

\begin{proof}
  The result  follows from Lemma \ref{lemma:case2}.
\end{proof}

\begin{lemma}(\textbf{Case III})\label{lemma:case3NormH} For edge augmented $H$ (Definition \ref{def:edegaugmentedH})
$\hat{H} = L(\hat{G}) = \Dele(\widetilde{H}) = \widetilde{H} - e_2$. Then if $e_1 \ncong e_2$ 
\[
0 \leq \left \vert  \lVert \hat{H} \rVert_m -  \lVert {H} \rVert_m \right \vert \leq C(H, \hat{H}, \widetilde{H}) \leq 1 \, , 
\]
and
\[
0 \leq \left \vert  \lVert \hat{G}\rVert_n -  \lVert G \rVert_n \right \vert  \leq C(G, \hat{G}, G_{int_1}, \ldots, G_{int_4} ) \leq 2\, . 
\]
 where  $G_{int_1} \ldots, G_{int_1}$  denote different intermediate graphs between $G$ and $\hat{G}$. Furthermore, if $\left \vert  \lVert \hat{H} \rVert_m -  \lVert {H} \rVert_m \right \vert \geq C > 0$, where $C = C(H, \hat{H}, \widetilde{H})$ then 
\[
     \frac{ \left \vert  \lVert \hat{G}\rVert_n -  \lVert G \rVert_n \right \vert }{C(G, \hat{G}, G_{int_1} \ldots, G_{int_4}) } \leq \frac{\left \vert  \lVert \hat{H} \rVert_m -  \lVert {H} \rVert_m \right \vert }{C_1(H, \hat{H}, \widetilde{H})}   \, , 
 \]

\end{lemma}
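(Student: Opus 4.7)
The plan is to leverage the structural characterization established in Lemma \ref{lemma:case3second}, which says that when $\hat{H} = \widetilde{H} - e_2$ with $e_1 \ncong e_2$, the line graphs satisfy $\hat{H} = \relocatee(H)$ and the root graphs satisfy either $\hat{G} = \relocatee(G)$ or $\hat{G} = \mergev(G) + \splitv(G)$. This reduces the problem to applying the generic spectral perturbation lemmas from the preceding subsection separately in $H$-space and $G$-space, then dividing.

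First I would handle the $H$-space bound. Since $\hat{H} = \relocatee(H)$, a direct application of Lemma \ref{lemma:FEdgeRelocationNorm} with $\widetilde{H}$ playing the role of the intermediate graph (recall $\widetilde{H} = H + e_1$ and $\hat{H} = \widetilde{H} - e_2$) immediately yields
\[
0 \leq \left\vert \lVert \hat{H} \rVert_m - \lVert H \rVert_m \right\vert \leq C(H, \widetilde{H}, \hat{H}) \leq 1.
\]
Next I would split into cases for the $G$-space bound. In the first case, $\hat{G} = \relocatee(G)$, so Lemma \ref{lemma:FEdgeRelocationNorm} directly gives $\left\vert \lVert \hat{G} \rVert_n - \lVert G \rVert_n \right\vert \leq 1$, using one intermediate graph. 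In the second case, $\hat{G} = \mergev(G) + \splitv(G)$, I would introduce the graph $G'$ with $G' = \mergev(G)$ and $\hat{G} = \splitv(G')$, then apply Lemma \ref{lemma:FVertexMerge} twice—once to the pair $(G, G')$ and once to $(G', \hat{G})$—each time producing its own intermediate auxiliary graph. Adding the two bounds via the triangle inequality gives
\[
\left\vert \lVert \hat{G} \rVert_n - \lVert G \rVert_n \right\vert \leq \left\vert \lVert \hat{G} \rVert_n - \lVert G' \rVert_n \right\vert + \left\vert \lVert G' \rVert_n - \lVert G \rVert_n \right\vert \leq 2,
\]
and collecting all the auxiliary intermediates accounts for up to four graphs $G_{int_1}, \ldots, G_{int_4}$ (two per application of Lemma \ref{lemma:FVertexMerge}).

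Finally, for the ratio statement, I would simply invert the $H$-space bound: under the hypothesis $\left\vert \lVert \hat{H} \rVert_m - \lVert H \rVert_m \right\vert \geq C_1(H, \hat{H}, \widetilde{H}) > 0$, the reciprocal is bounded above by $1/C_1$, and multiplying by the $G$-space bound produces the claimed ratio inequality. The main subtlety I anticipate is bookkeeping rather than mathematics: the lower bound $C > 0$ does not come for free from Lemma \ref{lemma:FEdgeRelocationNorm} (which only asserts the difference is nonnegative, since the add and delete contributions can cancel), so this part of the statement must be read as conditional—hence the \emph{``if $\cdots \geq C > 0$''} hypothesis in the ratio bound. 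Making sure that the constants $C(G, \hat{G}, G_{int_1}, \ldots, G_{int_4})$ and $C_1(H, \hat{H}, \widetilde{H})$ are assembled consistently from the principal-eigenvector constants produced by Lemmas \ref{lemma:Hnormbounded}, \ref{lemma:FEdgeRelocationNorm} and \ref{lemma:FVertexMerge} in each subcase is the main place where I would have to be careful.
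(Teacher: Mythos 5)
Your proposal is correct and follows essentially the same route as the paper's proof: reduce via Lemma \ref{lemma:case3second} to the two structural subcases, apply Lemma \ref{lemma:FEdgeRelocationNorm} in $H$-space and either Lemma \ref{lemma:FEdgeRelocationNorm} or two applications of Lemma \ref{lemma:FVertexMerge} plus the triangle inequality in $G$-space, then invert the $H$-space bound under the stated positivity hypothesis to get the ratio. Your remark that the strict lower bound does not follow from Lemma \ref{lemma:FEdgeRelocationNorm} and must be assumed is exactly how the paper treats it.
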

\begin{proof}
As $e_1 \ncong e_2$ we have $\hat{H} = \relocatee(H)$ and from Lemma \ref{lemma:FEdgeRelocationNorm} we have
\[
0 \leq \left \vert  \lVert \hat{H} \rVert_m -  \lVert {H} \rVert_m \right \vert \leq C(H, \hat{H}, \widetilde{H}) \leq 1 \, .
\]
For Case III (relocate edge) from Lemma \ref{lemma:case3second} either $\hat{G} = \relocatee(G)$ or $\hat{G} = \mergev(G) + \splitv(G)$. For $\hat{G} = \relocatee(G)$ from Lemma \ref{lemma:FEdgeRelocationNorm}
\[
0 \leq \left \vert  \lVert \hat{G} \rVert_n -  \lVert {G} \rVert_n \right \vert \leq C(G, \hat{G}, G_{int_1}) \leq 1 \, ,
\]
where $G_{int_1}$ is the intermediate graph between $G$ and $\hat{G}$ in this case. For $\hat{G} = \mergev(G) + \splitv(G)$, let us consider an intermediate graph $G_{int_2} = \mergev(G)$. Then from Lemma \ref{lemma:FVertexMerge}
\[
0 \leq \left \vert  \lVert {G}_{int_2} \rVert_n -  \lVert {G} \rVert_n \right \vert \leq C(G, G_{int_{2}},  G_{int_{3}}) \leq 1 \, ,
\]
where $G_{int_{3}}$ is the intermediate graph that occurs when merging a vertex as discussed in Lemma \ref{lemma:FVertexMerge}. Similarly, considering $\hat{G} = \splitv(G_{int_2})$ we get
\[
0 \leq \left \vert  \lVert {G}_{int_2} \rVert_n -  \lVert \hat{G} \rVert_n \right \vert \leq C(\hat{G}, G_{int_{2}},  G_{int_{4}}) \leq 1 \, ,
\]
where $G_{int_{4}}$ is another intermediate graph. Combining the inequalities for $\hat{G} = \mergev(G) + \splitv(G)$ we get
\[
0 \leq \left \vert  \lVert \hat{G}\rVert_n -  \lVert G \rVert_n \right \vert  \leq C(G, \hat{G}, G_{int_1}, \ldots, G_{int_4} ) \leq 2\, . 
\]
If $\left \vert  \lVert \hat{H} \rVert_m -  \lVert {H} \rVert_m \right \vert \geq C > 0$, we have 
\[
\frac{1}{\left \vert  \lVert \hat{H} \rVert_m -  \lVert {H} \rVert_m \right \vert} \leq \frac{1}{C} \, , 
\]
giving us 
\[
\frac{\left \vert  \lVert \hat{G}\rVert_n -  \lVert G \rVert_n \right \vert }{\left \vert  \lVert \hat{H} \rVert_m -  \lVert {H} \rVert_m \right \vert} \leq \frac{C(G, \hat{G}, G_{int_1}, \ldots, G_{int_4} )}{C}  \, .
\]
\end{proof}

\begin{lemma}(\textbf{Case IV})\label{lemma:case4normH} For edge augmented $H$ (Definition \ref{def:edegaugmentedH}) 
suppose $\hat{H} = L(\hat{G}) = \Adde(\widetilde{H}) = \widetilde{H} + e_2$. Then 
\[
0 < C_1(H, \widetilde{H}, \hat{H}) \leq  \lVert \hat{H}   \rVert_m  - \lVert H   \rVert_m \leq C_2(H, \widetilde{H}, \hat{H}) \leq 2 \, . 
\]
\end{lemma}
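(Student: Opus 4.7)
The plan is to apply Lemma \ref{lemma:Hnormbounded} twice and chain the resulting inequalities, since $\hat{H}$ differs from $H$ by two successive edge additions (first $e_1$ yielding $\widetilde{H}$, then $e_2$ yielding $\hat{H}$).

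First, I would note that by definition of the edge augmented $H$ scenario, $\widetilde{H} = \Adde(H)$, so the strengthened (directed) version of Lemma \ref{lemma:Hnormbounded} applies:
\[
0 < C_1(H, \widetilde{H}) \leq \lVert \widetilde{H}\rVert_m - \lVert H \rVert_m \leq C_2(H, \widetilde{H}) \leq 1.
\]
Second, since we are in Case IV we have $\hat{H} = \Adde(\widetilde{H})$, so applying Lemma \ref{lemma:Hnormbounded} once more yields
\[
0 < C_1(\widetilde{H}, \hat{H}) \leq \lVert \hat{H}\rVert_m - \lVert \widetilde{H} \rVert_m \leq C_2(\widetilde{H}, \hat{H}) \leq 1.
\]

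Finally I would add the two chains of inequalities term by term. The telescoping middle term cancels cleanly, giving
\[
0 < C_1(H, \widetilde{H}) + C_1(\widetilde{H}, \hat{H}) \leq \lVert \hat{H}\rVert_m - \lVert H \rVert_m \leq C_2(H, \widetilde{H}) + C_2(\widetilde{H}, \hat{H}) \leq 2.
\]
Setting $C_1(H, \widetilde{H}, \hat{H}) := C_1(H, \widetilde{H}) + C_1(\widetilde{H}, \hat{H})$ and $C_2(H, \widetilde{H}, \hat{H}) := C_2(H, \widetilde{H}) + C_2(\widetilde{H}, \hat{H})$ yields exactly the claimed bound. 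There is no genuine obstacle: the proof is essentially a two-step application of the single-edge bound already established, and the constants are additive because Lemma \ref{lemma:Hnormbounded} separately provides the strict positive lower bound in the edge-addition direction, preserving strictness under the sum. The only minor thing to be careful about is making sure that connectedness is preserved after each edge addition so that Perron--Frobenius applies at each stage; but since edge additions never disconnect a graph, connectedness of $H$ (which follows from $G$ being connected in our setting) transfers to $\widetilde{H}$ and $\hat{H}$ automatically.
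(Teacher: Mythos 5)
Your proof is correct and follows exactly the same route as the paper: two applications of Lemma \ref{lemma:Hnormbounded} to the successive edge additions $H \to \widetilde{H}$ and $\widetilde{H} \to \hat{H}$, followed by adding the telescoping inequalities. The extra remark about connectedness being preserved under edge addition is a reasonable (if unstated in the paper) sanity check, but otherwise there is nothing to add.
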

\begin{proof}
 As $\widetilde{H} = \Adde(H)$ from Lemma \ref{lemma:Hnormbounded} we have
 \[
 0 < C_1(H, \widetilde{H}) \leq  \lVert \widetilde{H}   \rVert_m  - \lVert H   \rVert_m \leq C_2(H, \widetilde{H}) \leq 1 \, . 
 \]
 As $\hat{H} = \Adde(\widetilde{H})$ again from Lemma \ref{lemma:Hnormbounded} we get
 \[
 0 < C_1(\hat{H}, \widetilde{H}) \leq  \lVert   \hat{H} \rVert_m  - \lVert \widetilde{H}   \rVert_m \leq C_2(\hat{H}, \widetilde{H}) \leq 1 \, . 
 \]
 Adding these inequalities we get the result. 
\end{proof}

\thmperturbations*
\begin{proof}
    Case I is proved in Lemma \ref{lemma:case1NormH}, Case II in Lemma \ref{lemma:case2NormH}, Case III in Lemma \ref{lemma:case3NormH}  and Case IV in Lemma \ref{lemma:case4normH}.
\end{proof}
\section{Adding an edge to a line graph: an example}\label{sec:addedgetoLG}
Let $G$ be a graph and $H = L(G)$ be its line graph. Suppose $i, j \in V(H)$ but there is no edge connecting $i$ and $j$. Then suppose we add an edge connecting vertices $i$  and $j$ to $H$ and call the resulting graph $\widetilde{H}$. Is $\widetilde{H}$ a line graph? This depends on vertices $i$ and $j$. 



From \cite{krausz1943demonstration} we know that the edges of $H$  can be partitioned in to complete subgraphs in such a away that no vertex belongs to more than two of the subgraphs. Let $S_1$ be the set of vertices in $H$ that belong to only one complete subgraph and let $S_2$ be the set of vertices that belong to two complete subgraphs. 

Let us consider the graph $G$ and its line graph $H = L(G)$ in Figure \ref{fig:agraphanditslinegraph}. 
\begin{figure}[!ht]
    \centering
    \includegraphics[width=0.8\linewidth]{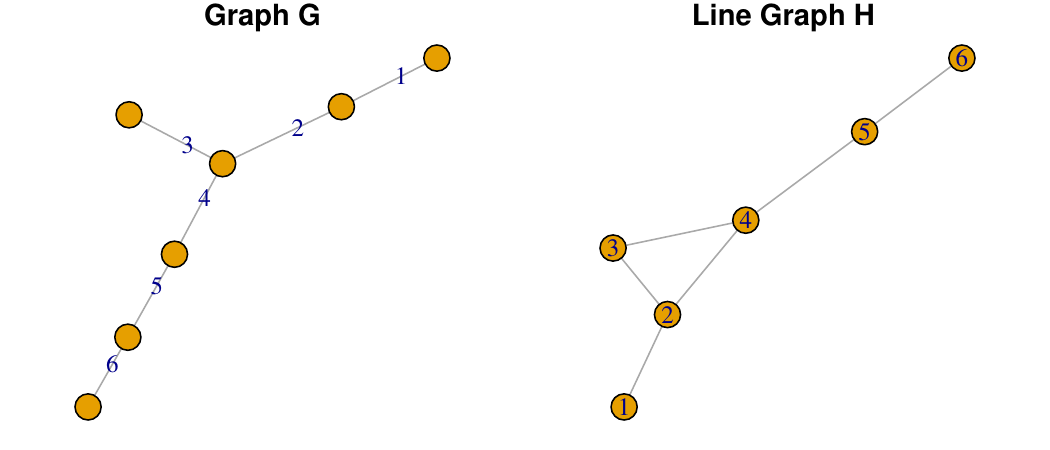}
    \caption{A graph $G$ and its line graph $H = L(G)$.}
    \label{fig:agraphanditslinegraph}
\end{figure}

\subsection{When both $i, j \in S_1$}

In line graph $H$ as shown in Figure \ref{fig:agraphanditslinegraph} the complete subgraphs are $\{ \{1, 2\}, \{2, 3, 4 \},  \{4, 5\}, \{5,6\} \}$. Thus the set of vertices belonging to a single complete subgraph $S_1 = \{1, 3, 6\}$ and the set of vertices belonging to two complete subgraphs  $S_2 = \{2, 4, 5\}$.  When we consider both $i,j \in S_1$ we have 3 choices for the edge $i$-$j$: 1-3, 3-6, 1-6. 

Figure \ref{fig:addedEdgetoH1} shows $\widetilde{H}$ and $L^{-1}(\widetilde{H})$ when the edge 1-3 is added. We see that $G$ (in Figure \ref{fig:agraphanditslinegraph}) had 7 vertices, but $L^{-1}(\widetilde{H})$ has 6 vertices. Adding the edge 1-3 to the line graph $H$ merged two vertices as seen in $L^{-1}(\widetilde{H})$.
\begin{figure}
    \centering
    \includegraphics[width=0.8\linewidth]{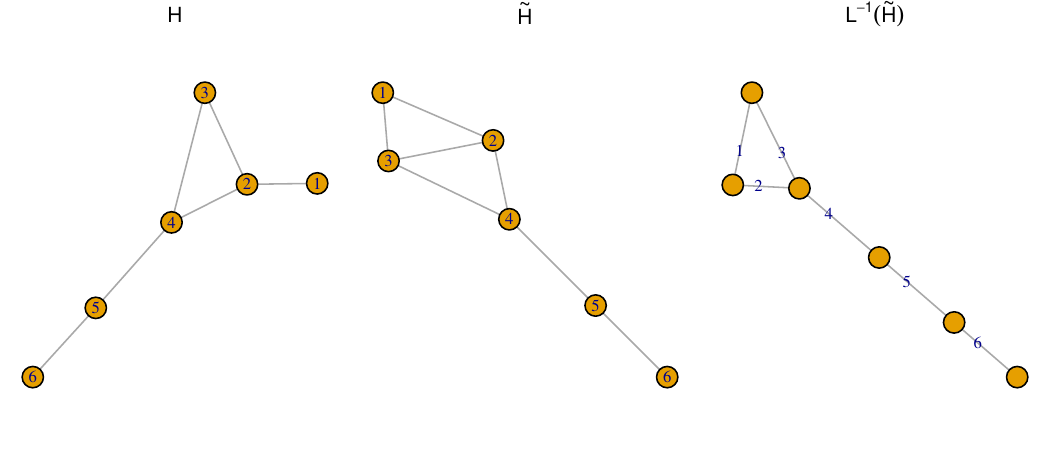}
    \caption{ Edge 1-3 added to $H$ resulting in $\widetilde{H}$. In this case the inverse line graph $L^{-1}(\widetilde{H})$ exists. } 
    \label{fig:addedEdgetoH1}
\end{figure}
Similarly, if we add the edge 3-6 or 1-6 to $H$, we still would be able to find $L^{-1}(\widetilde{H})$. 

\subsection{When $i \in S_1$ and $j \in S_2$}
In this case we can add one of the following edges to $H$: 1-4, 1-5, 3-5, 2-6, 4-6. If we add edge 1-4 to $H$, this results in the induced subgraph $K_{1,3}$($L_1$ in Figure \ref{fig:ForbiddenGraphs}), which is forbidden, on vertices $\{1, 4, 5, 3 \}$. Thus, 1-4 is not a valid additional edge. Adding edges 1-5 and 2-6 give rise to the induced subgraph $K_{1,3}$ as well.  

Adding the edge 3-5 to $H$ would result in the forbidden subgraph $L_4$, which doesn't have an inverse line graph. Adding the edge 4-6 is permissible and would result in an $\widetilde{H}$ with an $L^{-1}(\widetilde{H})$ as shown in Figure \ref{fig:addedEdgetoH2}. 

\begin{figure}[!ht]
    \centering
    \includegraphics[width=0.8\linewidth]{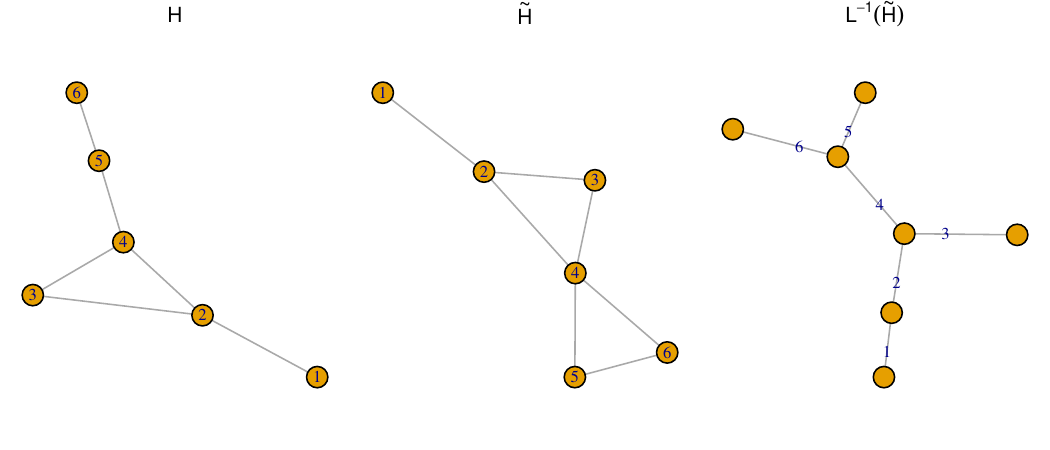}
    \caption{Edge 4-6 added to $H$ resulting in $\widetilde{H}$. The inverse line graph $L^{-1}(\widetilde{H})$ exists as shown. }
    \label{fig:addedEdgetoH2}
\end{figure}

\subsection{When $i, j \in S_2$}
As $S_2 = \{2,4,5\}$ the only option for both $i,j$ to be in $S_2$ is to add new edge 2-5 as both edges 2-4 and 4-5 are existing edges in $H$.  But adding 2-5 would induce $K_{1,3}$ on vertices $\{1,2,5,3\}$ in $H$. Thus, this is not a permissible edge, in the sense $L^{-1}(\widetilde{H})$ does not exist in this case.

\section{Adding line-forbidden graphs to $H$}
We conducted 2 experiments. We considered a graph $G$ and its line graph $H = L(G)$. Then we modified $H$ by adding either $L_2$ or $L_5$ to $H$, where $L_2$ and $L_5$ are line forbidden graphs illustrated in Figure \ref{fig:ForbiddenGraphs}. Using the modified graph $\widetilde{H}$ we computed $L^{\dagger}(\widetilde{H})$.

\subsection{Adding $L_2$ to $H$}
In the first experiment we considered graphs from the Barabási–Albert (BA) model \citep{barabasi1999emergence}. We generated graphs of $n$ vertices with $n$ ranging from 10 to 20. For each $n$ we generated 5 graphs to account for randomisation. We computed the line graph $H = L(G)$ for each graph $G$. Then we added the forbidden line graph $L_2$ to $H$ as follows: first we considered the disjoint union of $L_2$ and $H$, then we merged one of $L_2$'s vertices with a vertex from $H$. Figure \ref{fig:L2andHtilde} shows two orientations of $L_2$ and an example of $L_2$ merged with $H$ producing $\tilde{H}$. The vertices in subgraph $L_2$ in $\widetilde{H}$ are coloured in blue. The R package \texttt{igraph} \citep{igraphR} was used to generate the graphs. Figure \ref{fig:algorithmExample1} gives an example of $\hat{H} = L(L^{\dagger}(\widetilde{H}))$.

\begin{figure}[t]
   \subfloat[\label{fig:L2}]{%
      \includegraphics[width=0.40\textwidth]{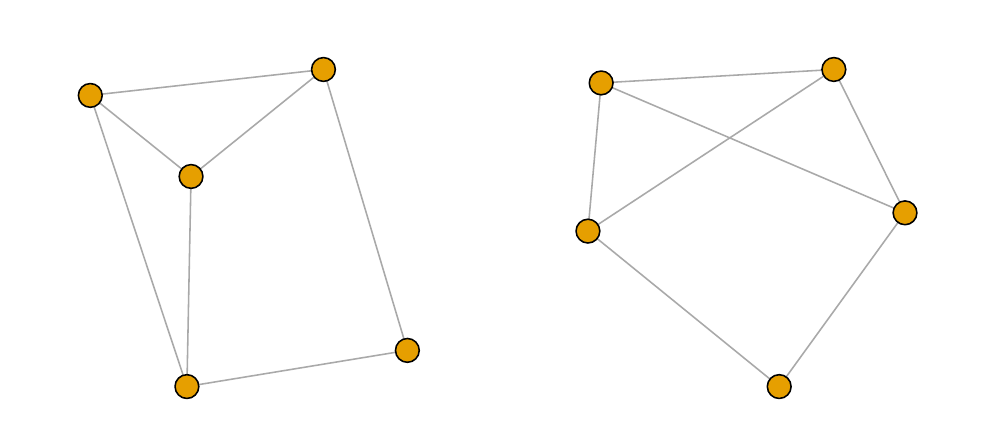}
    }
    \hfill
    \subfloat[\label{fig:HandHtilde}]{%
      \includegraphics[width=0.55\textwidth]{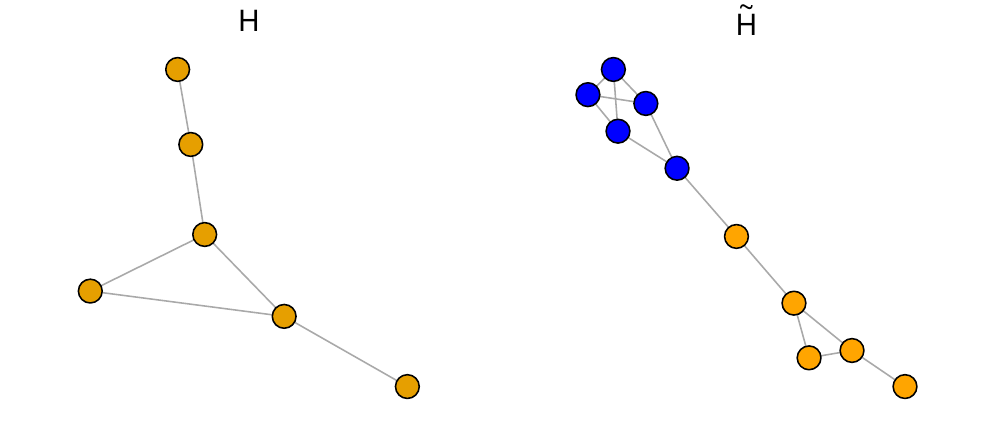}
    }
   \caption{(a) The forbidden line graph $L_2$ in 2 different orientations. (b) Graph $H$ and $\widetilde{H}$ obtained by merging $L_2$ with $H$. Vertices from $L_2$ are shown in blue. }
\label{fig:L2andHtilde}
\end{figure}

\begin{figure}[t]
    \centering
    \includegraphics[width=0.9\linewidth]{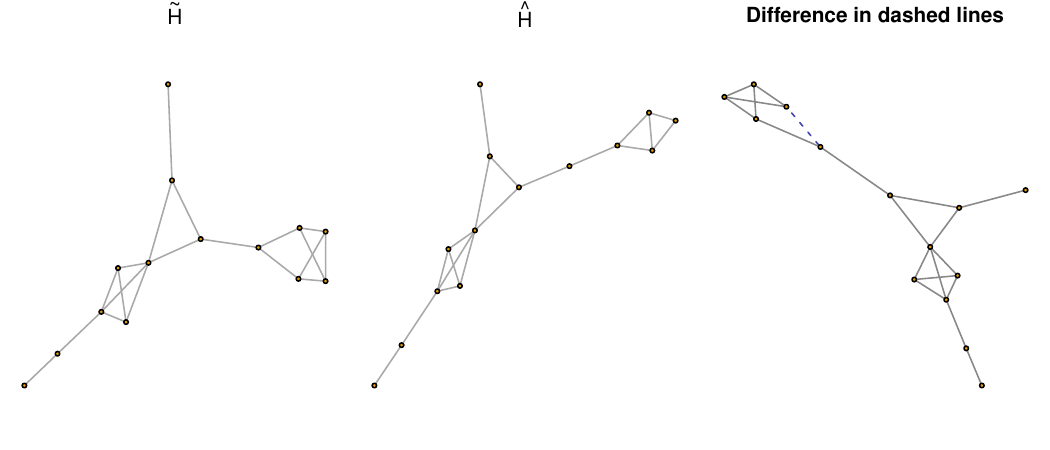}
    \caption{Graph $\widetilde{H}$ on the left, $\hat{H}$ in the middle and the difference in dashed lines on the right. }
    \label{fig:algorithmExample1}
\end{figure}

\begin{table}[t]
    \centering
    \caption{Experiment results}
    \label{tab:L2L5added}
    \begin{tabular}{cccc}
      \toprule
 Exp. & Edge edits    & Add & Remove \\
    \midrule
\multirow{2}{*}{1}  &  1 &    3 & 34    \\
                    & 2  &  0 & 21 \\
\midrule
\multirow{2}{*}{2}  &  1   &   3 & 26  \\
                    &  2   &  0 & 29 \\
   \bottomrule
    \end{tabular}

\end{table}



\subsection{Adding $L_5$ to $H$}
Similar to Experiment 1, we added $L_5$ in Figure \ref{fig:ForbiddenGraphs} to $H$ and computed a pseudo-inverse. Table \ref{tab:L2L5added} gives the results of these two experiments. There were 55 graphs in each set with experiment 1 having $\widetilde{H} = H + L_2$ and experiment 2 satisfying $\widetilde{H} = H + L_5$. As given in Table \ref{tab:L2L5added}, for experiment 1,  for 3 graphs, a single edge was added and for 34 graphs a single edge was removed to obtain a pseudo-inverse. For 21 graphs in experiment 1, 2 edges were removed to obtain a pseudo-inverse. We see that depending on the position where $L_2$ was added to the graph, the number of edge edits change.  Similarly for experiment 2, for 3 graphs an edge was added, for 26 graphs an edge was removed and for 29 graphs 2 edges were removed to obtain a pseudo-inverse.

\section{Estimating the size of the haplotype pool} \label{sec:AppExperiments}

In population studies estimating the number of ancestors of a given population is an important task. This is generally done by genotyping a sample of individuals at some single neucleotide polymorphism (SNP) locations  and phasing the genotype data to predict the underlying haplotypes \citep{halldorsson2013estimating}. An alternative approach using inverse line graphs was first proposed by \cite{halldorsson2013estimating} and then by \cite{labbe2021finding}. Effectively, they find a pseudo-inverse line graph $\hat{G}$ to a given Clark Consistency graph $\widetilde{H}$. However, they only consider edge deletions as their method is targeted to population estimation in haplotype phasing. Our method is slightly more general as we consider edge additions, however it is not targeted to population estimation in haplotype phasing. 

We use the dataset provided by \cite{ferdosi2014hsphase} to validate our method for population estimation. The dataset is a genotype matrix $A$ where columns represent SNPs and rows represent genotypes. The entries of $A$ denoted by $a_{ij}$  are codified $\{0, 1, 2\}$. If both corresponding parent haplotype SNPs are coded 1, then $a_{ij} = 2$, if both parent haplotype SNPs are coded 0, then $a_{ij} = 0$, otherwise $a_{ij} = 1$. Two genotypes have a common ancestor if their SNP strings are consistent where consistency is defined as follows: given an SNP either one of the strings have code 1, or both strings have the same code 0 or 2 \citep{labbe2021finding}. For example, the two strings $s_1 = 201$ and $s_2 = 101$ are consistent because in the first and third position $s_2$ has 1s, and in the second position both $s_1$ and $s_2$ agree. The Clark Consistency graph is constructed from a set of genotypes by denoting each genotype as a node and connecting two nodes by an edge if the two genotypes are consistent. The Clark Consistency graph is \textit{allelable} if it is line-invertible and the number of nodes in the inverse line graph is an estimate for the ancestor population size. In instances where the Clark Consistency graph is not a line graph, the number of nodes in a computed pseudo-inverse is considered the ancestor population size estimate.

We consider two 100 sample datasets, one with 10 genotypes  and the other with 15, recorded at 100 SNPs.  A genotype gives an individual's genetic makeup, specifically referring to the set of alleles that an organism inherits from its parents. Each sample dataset gives us an $n \times 100$ matrix where $n=10$ or $n=15$. From this matrix we construct the Clark Consistency graph $\widetilde{H}$ and find a pseudo-inverse $\hat{G}$. The estimated haplotype population size is $|V(\hat{G})|$. 

We validate our results using haplotype phasing methods discussed in \cite{ferdosi2013effect}. Using these methods we compute the size of the single parent haplotype population pool and the size of the haplotype population pool when both parents are considered.  The size of the single parent haplotype population pool is a lower bound for the haplotype population size because it only takes the haplotypes of one parent into account. Similarly, the size of the population taking into account both parents is an upper bound because some parents' alleles are not present in their offspring. 
Thus, we have lower and upper bound estimates for the haplotype population size. Figure \ref{fig:haplotypeExample} shows the results with the red curve showing the upper bound, the blue curve showing the lower bound and the green curve showing the estimate from the pseudo-inverse method.

\end{document}